\let\ifarxiv\iftrue  %
\newcommand{\httpurl}[1]{\href{http://#1}{\nolinkurl{#1}}}
\newcommand{\httpsurl}[1]{\href{https://#1}{\nolinkurl{#1}}}
\DeclareMathOperator{\E}{\mathbb E}
\newcommand{\F}{\mathcal F}
\newcommand{\h}{\mathcal H}
\DeclareMathOperator{\K}{\mathit{K}}
\DeclareMathOperator{\IPM}{\mathit{IPM}}
\DeclareMathOperator{\MMD}{\mathit{MMD}}
\DeclareMathOperator{\FID}{\mathit{FID}}
\DeclareMathOperator{\relu}{ReLU}
\newcommand{\M}{\mathcal M}
\newcommand{\n}{\mathbb N}
\newcommand{\N}{\mathcal N}
\newcommand{\R}{\mathbb R}
\newcommand{\tp}{^\mathsf{T}}
\DeclareMathOperator{\tr}{Tr}
\newcommand{\ud}{\mathrm{d}}
\newcommand{\X}{\mathcal X}
\newcommand{\Y}{\mathcal Y}
\newcommand{\Xset}{\mathbf{X}}
\newcommand{\Yset}{\mathbf{Y}}
\newcommand{\Zset}{\mathbf{Z}}
\newcommand{\PP}{\mathbb P}
\newcommand{\Q}{\mathbb Q}
\newcommand{\Z}{\mathbb Z}
\newcommand{\bP}{\PP}
\newcommand{\bQ}{\Q}
\newcommand{\QQ}{\Q}
\DeclareMathOperator*{\argmax}{argmax}
\DeclareMathOperator{\D}{\mathcal D}
\DeclareMathOperator{\W}{\mathcal W}
\newcommand{\Xfull}{\Xset}
\newcommand{\Yfull}{\Yset}
\newcommand{\Xtr}{\mathbf{X}^\mathit{tr}}
\newcommand{\xtr}{X^\mathit{tr}}
\newcommand{\ntr}{n^\mathit{tr}}
\newcommand{\Ytr}{\mathbf{Y}^\mathit{tr}}
\newcommand{\ytr}{Y^\mathit{tr}}
\newcommand{\mtr}{m^\mathit{tr}}
\newcommand{\Xte}{\mathbf{X}^\mathit{te}}
\newcommand{\nte}{n^\mathit{te}}
\newcommand{\Yte}{\mathbf{Y}^\mathit{te}}
\newcommand{\mte}{m^\mathit{te}}
\newcommand{\RBF}{\TextOrMath{\textit{rbf}}{\mathit{rbf}}}
\newcommand{\RQ}{\TextOrMath{\textit{rq}}{\mathit{rq}}}
\newcommand{\DIST}{\TextOrMath{\textit{dist}}{\mathit{dist}}}
\newcommand{\DOT}{\TextOrMath{\textit{dot}}{\mathit{dot}}}
\DeclarePairedDelimiterX{\KLx}[2]{(}{)}{#1\;\delimsize\|\;#2}
\newcommand{\KL}{\mathrm{KL}\KLx}
\newtheorem{theorem}{Theorem}
\newtheorem{corollary}{Corollary}
\newtheorem{prop}{Proposition}
\newtheorem{lemma}{Lemma}
\newtheorem{definition}{Definition}
\newlist{assumplist}{enumerate}{1}
\setlist[assumplist]{label=\textbf{\Alph*}}
\Crefname{assumplisti}{Assumption}{Assumptions}
\newcommand{\labelsuffix}[2]{%
    \addtocounter{\@listctr}{-1}%
    \refstepcounter{\@listctr}%

    \edef\@currentlabel{\@currentlabel#2}%

    \global\def\cref@currentlabel{[\@listctr][\arabic{\@listctr}][]\@currentlabel}%

    \expandafter\def\expandafter\label@listctr\expandafter{\csname label\@listctr\endcsname}%
    \expandafter\def\expandafter\c@@listctr\expandafter{\csname c@\@listctr\endcsname}%

    \expandafter\expandafter\expandafter\expandafter\expandafter\expandafter\expandafter%
    \xpatchcmd\expandafter\expandafter\expandafter%
    \label@listctr\expandafter\expandafter\expandafter%
        {\expandafter\expandafter\expandafter%
            {\expandafter%
                \c@@listctr\expandafter%
            }\expandafter%
        }\expandafter%
        {\expandafter%
            {\c@@listctr}#2%
        }{}{err}%

    \csname label\@listctr\endcsname%
    \label{#1}%
}
\title{Demystifying MMD GANs}
\author{Miko\l{}aj Bi\'nkowski\thanks{These authors contributed equally.}\\
Department of Mathematics\\
Imperial College London\\
\texttt{mikbinkowski@gmail.com} \\
\AND
Danica J. Sutherland\footnotemark[1]{}, Michael Arbel \& Arthur Gretton\\
Gatsby Computational Neuroscience Unit\\
University College London\\
\texttt{\{danica.j.sutherland,michael.n.arbel,arthur.gretton\}@gmail.com}
}
\begin{document}

\maketitle

\begin{abstract}
  We investigate the training and performance of generative adversarial networks using the Maximum Mean Discrepancy (MMD) as critic, termed MMD GANs.
  As our main theoretical contribution,
  we clarify the situation with bias in GAN loss functions raised by recent work:
  we show that gradient estimators used in the optimization process for both MMD GANs and Wasserstein GANs are unbiased,
  but learning a discriminator based on samples leads to biased gradients for the generator parameters.
  We also discuss the issue of kernel choice for the MMD critic, and characterize the kernel corresponding to the energy distance used for the Cram\'er GAN critic.
  Being an integral probability metric, the MMD benefits from training strategies recently developed for Wasserstein GANs. In experiments,  the MMD GAN is able to employ a smaller critic network than the Wasserstein GAN, resulting in a simpler and faster-training
  algorithm with matching performance.
  We also propose an improved measure of GAN convergence, the \emph{Kernel Inception Distance}, and show how to use it to dynamically adapt learning rates during GAN training.
\end{abstract}

\section{Introduction} \label{sec:intro}

Generative Adversarial Networks \citep[GANs;][]{gans} provide a powerful  method for general-purpose generative modeling of datasets.
Given examples from some distribution,
a GAN attempts to learn a \emph{generator} function,
which maps from some fixed noise distribution to samples that attempt to mimic a reference or target distribution.
The generator is trained to trick a \emph{discriminator}, or \emph{critic},
which tries to distinguish between generated and target samples.

This alternative to standard maximum likelihood approaches for training generative models has brought about a rush of interest over the past several years.
Likelihoods do not necessarily correspond well to sample quality \citep{likelihoods-vs-samples},
and GAN-type objectives focus much more on producing plausible samples,
as illustrated particularly directly by \citet{gan-vs-ml-realnvp}.
This class of models has recently led to many impressive examples of image generation
\citep[e.g.][]{beyond-face-rotation,stacked-gans,anime-gans,cyclegan}.

GANs are, however, notoriously tricky to train \citep{improved-gans}.
This might be understood in terms of the discriminator class.
\citet{gans} showed that,
when the discriminator is trained to optimality among a rich enough function class,
the generator network attempts to minimize the Jensen-Shannon divergence between the generator and target distributions. This result has been extended to general $f$-divergences by \citet{NowBotRyo16}.
According to \citet{towards-principled-gans}, however,
it is likely that both the GAN and reference probability measures are supported on manifolds within a larger space,
as occurs for the set of images in the space of possible pixel values.
These manifolds might not intersect at all, or at best might intersect on sets of measure zero.
In this case, the Jensen-Shannon divergence is constant, and the KL and reverse-KL divergences are infinite,
meaning that they provide no useful gradient for the generator to follow. This helps to explain some of the instability of GAN training.

The lack of sensitivity to distance,
meaning that nearby but non-overlapping regions of high probability mass are not considered similar,
is a long-recognized problem for KL divergence-based discrepancy measures \citep[e.g.][Section 4.2]{GneRaf07}.
It is natural to address this problem using Integral Probability Metrics \citep[IPMs;][]{Mueller97}:
these measure the distance between probability measures via the largest discrepancy in expectation over a class of ``well behaved'' witness functions. Thus, IPMs are able to signal proximity in the probability mass of the generator and reference distributions.
(\Cref{sec:losses} describes this framework in more detail.)

\citet{wgan} proposed to use the Wasserstein distance between  distributions as the discriminator,
which is an integral probability metric constructed from the witness class of 1-Lipschitz functions. To implement the Wasserstein critic,
\citeauthor{wgan} originally proposed weight clipping of the discriminator network, to enforce  $k$-Lipschitz smoothness.
\citet{wgan-gp} improved on this result by directly constraining the gradient of the discriminator network at points between the generator and reference samples. This new Wasserstein GAN implementation, called WGAN-GP, is more stable and easier to train.

A second integral probability metric used in GAN variants is the maximum mean discrepancy (MMD), for which the witness function class is a unit ball in a reproducing kernel Hilbert space (RKHS).
Generative adversarial models based on minimizing the MMD were
first considered by \citet{gmmn} and \citet{gen-mmd}.
These works optimized a generator to minimize the MMD with a fixed kernel,
either using a generic kernel on image pixels
or by modeling autoencoder representations instead of images directly.
\citet{opt-mmd} instead minimized the statistical power of an MMD-based  test with a fixed kernel.
Such approaches struggle with complex  natural images,
where pixel distances are of little value,
and fixed representations can easily be tricked,
as in the adversarial examples of \citet{adversarial}.

Adversarial training of the MMD loss is thus an obvious choice to advance these methods.
Here the kernel MMD is defined on the output of a convolutional network, which is trained adversarially.
Recent notable work has made use of the IPM representation of the MMD to employ the same witness function
regularization strategies as \citet{wgan} and \citet{wgan-gp},
effectively corresponding to an additional constraint on the MMD function class.
Without such constraints, the convolutional
features are unstable and difficult to train \citep{opt-mmd}.
\cite{mmd-gan} essentially used the weight clipping strategy of \citeauthor{wgan}, with additional constraints to encourage
the kernel distribution embeddings to be injective.\footnote{When distribution embeddings are injective, the critic is guaranteed to be able to distinguish any two distributions, given an infinite number of samples.}
In light of the observations by \citeauthor{wgan-gp}, however, we use a gradient constraint on the MMD witness
function in the present work (see \cref{sec:mmd,sec:grad-penalty}).\footnote{\citeauthor{mmd-gan} also did this in a later revision of their paper, independent of this work.}
\citet{cramer-gan}'s method, the Cram\'er GAN,  also used the gradient constraint strategy of \citeauthor{wgan-gp} in their  discriminator network. As we discuss in \cref{sec:cramer}, the Cram\'er GAN discriminator is related to the energy distance, which is an instance of the MMD \citep{energy-dist-is-mmd}, and which can therefore use a gradient constraint on the witness function. Note, however, that there are important differences between the Cram\'er GAN critic and the energy distance, which make it more akin to the optimization of a scoring rule: we provide further details in \cref{sec:score-funcs}.
Weight clipping and gradient constraints
are not the only approaches possible: variance features \citep{mcgan} and constraints \citep{fisher-gan}
can work, as can other optimization strategies \citep{began,dan}.

Given that both the Wasserstein distance and the MMD are integral probability metrics, it is of interest to consider how they differ when used in GAN training.
\citet{cramer-gan} showed that optimizing the empirical Wasserstein distance can lead to biased gradients for the generator,
and gave an explicit example where optimizing with these biased gradients leads the optimizer to incorrect parameter values, even in expectation.
They then claim that the energy distance does not suffer from these problems.
As our main theoretical contribution, we substantially clarify the bias situation in \cref{sec:unbiased-gradients}.
First, we show (\cref{thm:main-body}) that the natural maximum mean discrepancy estimator, including the estimator of energy distance,
has unbiased gradients when used ``on top'' of a fixed deep network representation.
The generator gradients obtained from a trained representation, however, will be biased relative to the desired gradients of the optimal critic based on infinitely many samples.
This situation is exactly analogous to WGANs:
the generator's gradients with a fixed critic are unbiased,
but gradients from a learned critic are biased with respect to the supremum over critics.

MMD GANs, though, do have some advantages over Wasserstein GANs.
Certainly we would not expect the MMD on its own to perform well on raw image data, since these data lie on a low dimensional manifold embedded in a higher dimensional pixel space.
Once the images are mapped through appropriately trained convolutional layers, however, they can follow a much simpler distribution with broader support across the mapped domain: a phenomenon also observed in autoencoders \citep{BenMesDauRif13}.
In this setting, the MMD with characteristic kernels \citep{SriGreFukLanetal10} shows strong discriminative performance between distributions.
To achieve comparable performance, a WGAN without the advantage of a kernel on the transformed space requires many more convolutional filters in the critic.
In our experiments (\cref{sec:experiments}),
we find that MMD GANs achieve the same generator performance as WGAN-GPs with smaller discriminator networks,
resulting in GANs with fewer parameters and computationally faster training.
Thus, the MMD GAN discriminator can be understood as a hybrid model that plays to the strengths of both the initial convolutional mappings and the kernel layer that sits on top.

\section{Losses and witness functions} \label{sec:losses}

We begin with a review of the MMD
and relate it to the loss functions used by other GAN variants.
Through its interpretation as an integral probability metric, we show
that the gradient penalty of \citet{wgan-gp}
applies to the MMD GAN.

\subsection{Maximum Mean Discrepancy and witness functions} \label{sec:mmd}

We consider a random variable $X$ with probability measure $\bP$,
which we associate with the generator, and a second random variable
$Y$ with probability measure $\bQ$, which we associate with the reference
sample that we wish to learn.
Our goal is to measure the distance from $\bP$ to $\bQ$ using
samples drawn independently from each distribution.

The maximum mean discrepancy is a metric on probability measures \citep{mmd-jmlr},
which falls within the family of integral probability metrics \citep{Mueller97};
this family includes the Wasserstein and Kolmogorov metrics, but
not for instance the KL or $\chi^{2}$ divergences.
Integral probability metrics make use of a class of \emph{witness functions} to distinguish between $\bP$ and $\bQ$, choosing the function with the largest discrepancy in expectation over $\bP,\bQ$,
\[
\D_\F(\bP, \bQ) = \sup_{f \in \F} \E_{\bP} f(X) - \E_{\bQ} f(Y)
\label{eq:IPM}
.\]
The particular witness function class $\F$
determines the probability metric.\footnote{We assume throughout that if $f \in \F$, we also have $-f \in \F$, so that $\D_\F$ is symmetric.}
For example,
the Wasserstein-1 metric is defined using the 1-Lipschitz functions,
the total variation by functions with absolute value bounded by 1,
and the Kolmogorov metric using the functions of bounded variation $1$.
For more on this family of distances, see e.g.\ \citet{ipms-phi-clf}.

In this work, our witness function class $\F$ will be the unit ball in
a reproducing kernel Hilbert space  $\h$, with positive
definite kernel $k(x, x')$.
The key aspect of a reproducing kernel Hilbert space is the reproducing property:
for all $f \in \h$, $f(x) = \left\langle f, k(x,\cdot) \right\rangle_{\h}$.
We define the  \emph{mean embedding} of the probability measure $\bP$
as the element $\mu_{\bP} \in \h$
such that $\E_{\bP} f(X) = \left\langle f, \mu_{\bP} \right\rangle_\h$;
it is given by $\mu_\bP = \E_{X \sim \bP} k(\cdot, X)$.%
\footnote{This is well defined for Bochner-integrable kernels \citep[Definition A.5.20]{SteChr08},
for which $E_{\bP}\left\Vert k(x,\cdot)\right\Vert _\h < \infty$
for the class of probability measures $\bP$ being considered.
For bounded kernels, the condition always holds, but for unbounded kernels, additional conditions on the moments might apply.}

The maximum mean discrepancy (MMD) is defined as the IPM \eqref{eq:IPM} with $\F$ the unit ball in $\h$,  %
\[
\MMD(\bP, \bQ; \h)  = \sup_{f\in\h, \lVert f \rVert_\h \le 1} \E_{\bP} f(X) - \E_{\bQ} f(Y)
\label{eq:mmd}
.\]
The witness function $f^{*}$ that attains the supremum has a straightforward
expression \citep[Section 2.3]{mmd-jmlr},
\[
f^{*}(x)\propto \E_{\bP} k(X,x) - \E_{\bQ} k(Y,x). \label{eq:witness}
\]
Given
samples $X = \{x_{i}\}_{i=1}^{m}$ drawn i.i.d.\ from $\bP$, and
$Y = \{y_{j}\}_{j=1}^{n}$ drawn i.i.d.\ from $\bQ$,
the empirical witness function is
\[
\hat{f}(x)\propto\frac{1}{m}\sum_{i=1}^{m}k(x_{i},x)-\frac{1}{n}\sum_{i=1}^{n}k(y_{i},x), \label{eq:empiricalWitness}
\]
and an unbiased estimator of the squared MMD is \citep[Lemma 6]{mmd-jmlr}
\begin{equation}
\MMD_u^{2}(X,Y) =
    \frac{1}{m(m-1)} \sum_{i \ne j}^{m} k(x_{i}, x_{j})
    + \frac{1}{n(n-1)} \sum_{i \ne j}^{n} k(y_{i}, y_{j})
    - \frac{2}{mn} \sum_{i=1}^{m} \sum_{j=1}^{n} k(x_{i}, y_{j})
.\label{eq:unbiasedMMD}
\end{equation}
When the kernel is characteristic \citep{SriGreFukLanetal10,SriFukLan11},
the embedding $\mu_{\bP}$ is injective (i.e., associated uniquely with
$\bP$).
Perhaps the best-known characteristic kernel is the exponentiated quadratic kernel, also known as the Gaussian RBF kernel,
\[
    k^\RBF_\sigma(x, y) = \exp\left( - \frac{1}{2 \sigma^2} \lVert x - y \rVert^2 \right)
\label{eq:k-rbf}
.\]
Both the kernel and its derivatives decay exponentially, however,
causing significant problems in high dimensions, and especially when used in gradient-based representation learning.
The rational quadratic kernel
\[
k^\RQ_\alpha(x, y) = \left( 1 + \frac{\lVert x - y \rVert^2}{2 \alpha} \right)^{-\alpha}
\label{eq:k-rq}
\]
with $\alpha > 0$ corresponds to a scaled mixture of exponentiated quadratic
kernels,
with a $\mathrm{Gamma}(\alpha, 1)$ prior on the inverse lengthscale \citep[Section 4.2]{RasWil06}.
This kernel will be the mainstay of our experiments,
as its tail behaviour is much superior to that of the exponentiated quadratic kernel;
it is also characteristic.

\subsection{Witness function and gradient penalties} \label{sec:grad-penalty}

The MMD has been a popular choice for the role of a critic in a GAN.
This idea was proposed simultaneously by \citet{gen-mmd} and \citet{gmmn}, with numerous
recent follow-up works \citep{opt-mmd,energy-distance-gan,mmd-gan,cramer-gan}.
As a key strategy in these recent works,
the MMD of (\ref{eq:unbiasedMMD}) is not computed directly on the samples;
rather,
the samples first pass through a mapping function $h$,
generally a convolutional network.
Note that we can think of this either as the MMD with kernel $k$ on features $h(x)$,
or simply as the MMD with kernel $\kappa(x, y) = k(h(x), h(y))$.
The challenge is to learn the features $h$ so as to maximize the MMD, without
causing the critic to collapse to a trivial answer early in training.

Bearing in mind that the MMD is an integral probability metric, strategies
developed for training the Wasserstein GAN critic can be directly adopted for training the MMD critic.
\citet{mmd-gan} employed the weight clipping approach of \citet{wgan},
though they motivated it using different considerations.
\citet{wgan-gp} found a number of issues with weight clipping, however:
it oversimplifies the loss functions given standard architectures,
the gradient decays exponentially as we move up the network,
and it seems to require the use of slower optimizers such as RMSProp
rather than standard approaches such as Adam \citep{adam}.

It thus seems preferable to adopt \citeauthor{wgan-gp}'s proposal of regularising the critic witness (\ref{eq:empiricalWitness})
by constraining its gradient norm to be nearly 1
along randomly chosen convex combinations of generator and reference points,
$\alpha x_i +(1-\alpha) y_j$ for $\alpha \sim \mathrm{Uniform}(0,1)$.
This was motivated by the observation that the Wasserstein witness satisfies this property (their Lemma 1),
but perhaps its main benefit is one of regularization:
if the critic function becomes too flat anywhere between the samples,
the generator cannot easily follow its gradient.
We will thus follow this approach,
as did \citet{cramer-gan},
whose model we describe next.\footnote{By doing so, we implicitly change the definition of the distance being approximated; we leave study of the differences to future work. By analogy, \citet{approx-convergence-props} give some basic properties for the distance used by \citet{wgan-gp}.}

\subsection{The energy distance and associated MMD}\label{sec:cramer}

\citet{energy-distance-gan} and \citet[Section 4]{cramer-gan}
proposed to use the energy distance as the critic in an adversarial network.
The energy distance \citep{SzeRiz04,Lyons13} is a measure of divergence
between two probability measures, defined as
\[
  \D_{e}(\bP,\bQ) =
    - \frac{1}{2} \E_{\bP} \rho(X,X')
    - \frac{1}{2} \E_{\bQ} \rho(Y, Y')
    + \E_{\bP,\bQ} \rho(X,Y)
\label{eq:energyDistance}
,\]
where $\E_{\bP} \rho(X, X')$ is an expectation over two independent samples from the generator $\bP$
(likewise, $Y$ and $Y'$ are independent samples from the reference $\bQ$),
and $\rho(x,y)$ is a semimetric of negative type.%
\footnote{$\rho$ must satisfy the properties of a metric besides the triangle inequality, and for all $n \ge 2$, $x_1, \ldots, x_n \in \mathcal{X}$,
and $a_1, \dots, a_n \in \R$ with $\sum_i a_i = 0$,
it must hold that
$\sum_{i=1}^{n} \sum_{j=1}^{n} a_{i} a_{j} \rho(x_{i},x_{j}) \le 0$.
}
We will focus on the case $\rho_\beta(x, y)=\lVert x - y \rVert^{\beta}$
for $0<\beta\le2$. When $\beta \le 1$, $\rho_\beta$ is a metric.

\citet[Lemma 12]{energy-dist-is-mmd} showed that the energy
distance is an instance of the maximum mean discrepancy, where the
corresponding distance-induced kernel family for the distance (\ref{eq:energyDistance})
is
\[
    k^\DIST_{\rho,z_0}(x, y) = \frac12\left[ \rho(x, z_0) + \rho(y, z_0) - \rho(x,y) \right]
\label{eq:distanceKernel}
\]
for any choice of $z_{0} \in \mathcal{X}$.
\citep[Often $z_{0}=0$ is chosen to simplify notation,
which corresponds to a fractional Brownian motion kernel;][Example 15.]{energy-dist-is-mmd}
Note that the resulting kernel is \emph{not} translation invariant.
It is characteristic (when $\beta < 2$), and the MMD is well-defined
for a class of distributions $\mathcal{P}$ that satisfy a moment condition \citep[Remark 21]{energy-dist-is-mmd}.\footnote{%
Namely, there must exist $z_{0}\in\mathcal{X}$ such that
$\int \rho(z,z_{0}) \,\ud \bP(z) < \infty$ for all $\bP\in\mathcal{P}$.}

To apply the regularization strategy of \cite{wgan-gp} in training the critic of an adversarial network,
we need to compute the form taken by the witness function \eqref{eq:witness}
given the kernel \eqref{eq:distanceKernel}.
Bearing in mind that
\[
\E_\bP k(x,X) = \rho(x,z_{0}) + \E_\bP \rho(X, z_{0}) - \E_\bP \rho(x, X),
\]
where the second term of the above expression is constant, and substituting
into \eqref{eq:witness}, we have
\begin{align}
f^*(x)
  &\propto \rho(x,z_{0}) - \E_{\bP}\rho(x,X) - \rho(x,z_{0}) + \E_{\bQ}\rho(x,Y) + C
\\&= \E_\bQ \rho(x,Y) - \E_\bP \rho(x,X) + C
.\end{align}
This is in agreement with \citeauthor{cramer-gan}'s function $f^{*}$ (their page 5),
via a different argument (though note that the function $g^{*}$ in
their footnote 4 is missing the constant terms).

We now turn to the divergence implemented by the critic in \citeauthor{cramer-gan}'s Algorithm 1,
which is somewhat different from the energy distance (\ref{eq:energyDistance}).
The Cram\'er GAN witness function is defined as%
\begin{equation}
f_{c}(x) = \E_{\bP}\rho(x,X)-\rho(x,0),\label{eq:surrogateWitness}
\end{equation}
which is regularized using \citeauthor{wgan-gp}'s gradient constraint.
The expected \emph{surrogate loss} associated with this witness function,
and used for the Cram\'er critic, is
\[
\D_{c}(\bP,\bQ) =
    \E_{\bP}\rho(X, X')
    + \E_{\bQ}\rho(Y, 0)
    - \E_{\bP}\rho(X, 0)
    - \E_{\bP,\bQ}\rho(X', Y)
\label{eq:surrogate}
.\]
In brief, $Y'$ in (\ref{eq:energyDistance}) is replaced by the
origin: it is explained that this is necessary for instance in the
conditional case, where two independent reference samples $Y,Y'$ are
not available.
Unfortunately, following this change, it becomes straightforward to define $\bP$ and $\bQ$ which are
different, yet have an expected $\D_{c}(\bP,\bQ)$ loss of zero.
For example, if $\bP$ is a point mass at the origin in $\R$,
and $\bQ$ is a point mass a distance $t$ from the origin,
then $\bP \ne \bQ$
and yet
$\D_c(\bP, \bQ) = 0$ because
\[
\E_{\bP} \rho(X, X') = \E_{\bP}\rho(X,0) = 0,
\qquad
\E_{\bP,\bQ} \rho(X', Y) = \E_{\bQ} \rho(Y, 0) = t
.\]

Nevertheless, good empirical performance has been obtained in practice for the Cram\'er critic,
both by \citet{cramer-gan} and in our experiments of \cref{sec:experiments}.
Our \cref{sec:score-funcs} provides some insight into this behavior by considering the Cram\'er critic's relationship to the score function associated with the energy distance.

\subsection{Other related models}
Many other GAN variants fall into the framework of IPMs \citep[e.g.][]{mcgan,fisher-gan,began}.
Notably,
although \citet{gans} motivated GANs as estimating the Jensen-Shannon divergence,
they can also be viewed as minimizing the IPM defined by the classifier family \citep{arora:gen-equilibrium,approx-convergence-props},
thus motivating applying the gradient penalty to original GANs \citep{many-paths}.
\citet{approx-convergence-props} in particular study properties of these distances.

\section{Gradient bias} \label{sec:unbiased-gradients}

The issue of biased gradients in GANs was brought to prominence by \citet[Section 3]{cramer-gan},
who showed bias in the gradients of the empirical Wasserstein distance for finite sample sizes,
and demonstrated cases where this bias could lead to serious problems in stochastic gradient descent,
even in expectation.
They then claimed that the energy distance used in the Cram\'er GAN critic does not suffer from these problems.
We will now both formalize and clarify these results.

First, \citeauthor{cramer-gan}'s proof that the gradient of the energy distance is unbiased was incomplete:
the essential step in the reasoning, the exchange in the order of
the expectations and the derivatives, is simply assumed.%
\footnote{Suppose $\hat f(x)$ is an unbiased estimator of a function $f(x)$,
so that $\E \hat f(x) = f(x)$.
Then, if we can exchange expectations and gradients, it is immediate that
$\E \nabla \hat f(x) = \nabla \E \hat f(x) = \nabla f(x)$.}
We show that
one can exchange the order of expectations and derivatives,
under very mild assumptions about the distributions in question,
the form of the network, and the kernel:
\begin{theorem} \label{thm:main-body}
    Let $G_\psi : \mathcal Z \to \X$
    and $h_\theta : \X \to \R^d$
    be deep networks, with parameters $\psi \in \R^{m_\psi}$ and $\theta \in \R^{m_\theta}$,
    of the form defined in \cref{sec:proof:notation}
    and satisfying \cref{Lipschitz,Diff_set} (in \cref{sec:proof:assumptions}).
    This includes almost all feedforward networks used in practice,
    in particular covering convolutions, max pooling, and ReLU activations.

    Let $\PP$ be a distribution on $\X$ such that $\E[\lVert X \rVert^2]$ exists,
    and likewise $\Z$ a distribution on $\mathcal Z$ such that $\E[\lVert Z \rVert^2]$ exists.
    $\PP$ and $\Z$ need not have densities.

    Let $k : \R^d \times \R^d \to \R$ be a kernel function satisfying the growth assumption \cref{kernel_growth_pair} for some $\alpha \in [1, 2]$.
    All kernels considered in this paper satisfy this assumption; see the discussion after \cref{cor:mmd-gan}.

    For $\mu$-almost all $(\psi, \theta) \in \R^{m_\psi + m_\theta}$,
    where $\mu$ is the Lebesgue measure,
    the function
    \[
        (\psi, \theta) \mapsto \E_{\substack{X \sim \PP \\ Z \sim \Z}}\left[
            k(h_\theta(X), h_\theta(G_\psi(Z)))
        \right]
    \]
    is differentiable at $(\psi, \theta)$,
    and moreover
    \[
        \E_{\substack{X \sim \PP \\ Z \sim \Z}}\left[ \partial_{\psi,\theta}
            k(h_\theta(X), h_\theta(G_\psi(Z)))
        \right]
        = \partial_{\psi,\theta} \E_{\substack{X \sim \PP \\ Z \sim \Z}}\left[
            k(h_\theta(X), h_\theta(G_\psi(Z)))
        \right]
    .\]
    Thus for $\mu$-almost all $(\psi, \theta)$,
    \[
        \E_{\substack{\Xset \sim \PP^m \\ \Zset \sim \Z^n}}\left[ \partial_{\psi,\theta}
            \MMD_u^2(h_\theta(\Xset), h_\theta(G_\psi(\Zset)))
        \right]
        = \partial_{\psi,\theta} \left[
            \MMD_u^2(h_\theta(\PP), h_\theta(G_\psi(\Z)))
        \right]
        \label{eq:mmd-gan-unbiased}
    .\]
\end{theorem}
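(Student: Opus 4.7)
The plan is to reduce the MMD-level statement to a single cross term, and then deploy a standard measure-theoretic Leibniz-style argument combining piecewise smoothness of the networks with the kernel's polynomial growth to dominate difference quotients.

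First, observe that $\MMD_u^2$ in \eqref{eq:unbiasedMMD} is a finite linear combination of terms of the form $k(h_\theta(\cdot), h_\theta(\cdot))$, each argument being either an $X_i \sim \PP$ or an independent $G_\psi(Z_j)$ with $Z_j \sim \Z$. By linearity of both expectation and gradient, it suffices to prove the cross-term exchange displayed in the theorem; the purely data-data and generator-generator terms are handled by the same argument (the former being $\psi$-independent, the latter reducing to a composition with $G_\psi$ on both sides).

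Second, I would establish almost-everywhere parameter differentiability. Since $h_\theta$ and $G_\psi$ are assembled from affine layers, ReLUs, and max pooling, each is piecewise polynomial in its parameters for every fixed input, with non-differentiable seams lying in a Lebesgue-null subset of $\R^{m_\psi+m_\theta}$ (this is what \cref{Diff_set} encodes). Fubini applied to the product of Lebesgue measure on $\R^{m_\psi+m_\theta}$ and $\PP \otimes \Z$ on $\X \times \mathcal{Z}$ then produces a full-measure set $S \subseteq \R^{m_\psi+m_\theta}$ of ``good'' parameters: for every $(\psi_0,\theta_0) \in S$, the map $(\psi,\theta) \mapsto k(h_\theta(x), h_\theta(G_\psi(z)))$ is differentiable at $(\psi_0,\theta_0)$ for $(\PP \otimes \Z)$-almost every $(x,z)$.

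Third, for any such $(\psi_0,\theta_0)$, I would work on a small closed ball $B$ around it and bound the difference quotient uniformly on $B$ by an integrable function, then invoke dominated convergence. The Lipschitz assumption \cref{Lipschitz} gives network-variation bounds of the schematic form $\lVert h_\theta(x)\rVert \le C(1+\lVert x\rVert)$ and $\lVert G_\psi(z)\rVert \le C(1+\lVert z\rVert)$ uniformly over $B$, together with Jacobian-in-parameter bounds of the same polynomial order in $\lVert x\rVert$ and $\lVert z\rVert$. Combined with \cref{kernel_growth_pair}, which controls the kernel's increments by a polynomial of degree $\alpha \in [1,2]$ in the arguments' norms, the chain rule yields a difference-quotient bound of schematic order $(1+\lVert x\rVert^\alpha)(1+\lVert z\rVert^\alpha) \le C(1+\lVert x\rVert^2+\lVert z\rVert^2)$, which is integrable against $\PP \otimes \Z$ by the second-moment hypotheses.

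The main obstacle is precisely this domination bookkeeping: the kernel derivative is not uniformly bounded, and its polynomial growth must be matched against the polynomial growth of the network Jacobians so that the product is still integrable against $\PP \otimes \Z$. This is where the hypothesis $\alpha \le 2$ is consumed, tightly matched to the assumed $\lVert\cdot\rVert^2$-moment. Once the dominating function is in hand, standard dominated convergence yields differentiability of the expectation at every $(\psi_0,\theta_0) \in S$ together with the desired interchange; applying this termwise to each summand of $\MMD_u^2$ and using linearity then gives \eqref{eq:mmd-gan-unbiased}.
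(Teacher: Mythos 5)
Your high-level architecture matches the paper's: reduce to single kernel terms by linearity (Corollary~\ref{cor:mmd-gan} does exactly this via three augmented networks for the $(X,Z)$, $(Z,Z')$, $(X,X')$ terms), establish a full-measure ``good'' parameter set via Fubini (Proposition~\ref{main_zero_measure}), dominate the difference quotient by an integrable function using the Lipschitz growth of the network together with the kernel's polynomial growth (Lemmas~\ref{network_growth} and~\ref{Lipschitz kernel}), and close with dominated convergence (Theorem~\ref{thm:interversion}). You also, correctly, propose to dominate the difference quotient directly rather than invoking the standard differentiation-under-the-integral lemma, which the paper explicitly notes does not apply here because for ReLU-like networks ``for almost all $X$, differentiable on a neighborhood of $\theta$'' fails.

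The genuine gap is in your second step. You assert that the non-differentiable seams form a Lebesgue-null subset of $\R^{m_\psi+m_\theta}$ for each fixed input, ``(this is what \cref{Diff_set} encodes).'' That is not what \cref{Diff_set} encodes: it only postulates that each nonlinear layer is piecewise real-analytic with real-analytic boundary functions. The claim that, for fixed $X$, the set of parameters $\theta$ where $\theta\mapsto h_\theta(X)$ is non-differentiable has Lebesgue measure zero is the content of Lemma~\ref{zero_measure}, which the paper proves through a recursion along the computation graph (Lemma~\ref{smooth_interior}), a union bound over analytic boundary sets (Lemma~\ref{locally_null_sets}), and the fact that a non-identically-zero real-analytic function vanishes on a Lebesgue-null set (Lemma~\ref{manifold_dim}). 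The paper explicitly warns that this step ``requires some technical considerations in topology and differential geometry''; without it, Fubini yields nothing. Separately, your schematic inequality $(1+\lVert x\rVert^\alpha)(1+\lVert z\rVert^\alpha) \le C(1+\lVert x\rVert^2 + \lVert z\rVert^2)$ is false as written (the cross term $\lVert x\rVert^\alpha\lVert z\rVert^\alpha$ is not dominated by the additive right-hand side); the paper's Lemma~\ref{Lipschitz kernel} produces a bound of the \emph{additive} form $1+(\lVert x\rVert^2+\lVert z\rVert^2)^{\alpha/2}$, whose integrability follows from the moment hypotheses and Jensen. Your product bound happens to remain integrable by independence of $X$ and $Z$, but the displayed inequality is an error and not the mechanism the paper uses.
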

This result is shown in \cref{appendix:proof},
specifically as \cref{cor:mmd-gan} to \cref{thm:interversion},
which is a quite general result about interchanging expectations and derivatives of functions of deep networks.
The proof is more complex than a typical proof that derivatives and integrals can be exchanged,
due to the non-differentiability of ReLU-like functions used in deep networks.

But this unbiasedness result is not the whole story.
In WGANs, the generator attempts to minimize the loss function
\[
    \W(\PP, \Q) = \sup_{f : \lVert f \rVert_L \le 1} \E_{X \sim \PP} f(X) - \E_{Y \sim \Q} f(Y)
    \label{eq:sup-w}
,\]
based on an estimate $\hat\W(\Xset, \Yset)$:
first critic parameters $\theta$ are estimated on a ``training set'' $\Xtr$, $\Ytr$,
i.e.\ all points seen in the optimization process thus far,
and then the distance is estimated on the remaining ``test set'' $\Xte$, $\Yte$,
i.e.\ the current minibatch, as
\[
    \frac1{\mte} \sum_{i=1}^{\mte} f_\theta(X_i) - \frac1{\nte} \sum_{j=1}^{\nte} f_\theta(Y_j)
    \label{eq:j-hat}
.\]
(After the first pass through the training set, these two sets will not be quite independent, but for large datasets they should be approximately so.)
\Cref{thm:ipm-bias} (in \cref{sec:gen-ipm-bias}) shows that this estimator $\hat\W$ is biased; \cref{appendix:wgan-bias} further gives an explicit numerical example.
This almost certainly implies that $\nabla_\psi \hat\W$ is biased as well,
by \cref{thm:biased-means-biased-grad} (\cref{sec:gen-ipm-grad-bias}).\footnote{%
\citeauthor{cramer-gan}'s Appendix A.2 rather showed gradient bias in a different situation:
$\nabla \W(\hat\PP_m, \Q)$, where $\Q$ is a known distribution and $\hat\PP_m$ is the empirical distribution of $m$ samples from a distribution which changes as $m$ increases.}
Yet, for \emph{fixed} $\theta$,
\cref{cor:wgan} shows that the estimator \eqref{eq:j-hat} has unbiased gradients;
it is only the procedure which first selects a $\theta$ based on training samples and then evaluates \eqref{eq:j-hat}
which is a biased estimator of \eqref{eq:sup-w}.

The situation with MMD GANs, including energy distance-based GANs, is exactly analogous.
We have \eqref{eq:mmd-gan-unbiased}:
for almost all particular critic representations $h_\theta$,
the estimator of $\MMD^2$ is unbiased.
But the population divergence the generator attempts to minimize is actually
\[
    \eta(\PP, \Q) = \sup_{\theta} \MMD^2\left( h_\theta(\PP), h_\theta(\Q) \right)
    \label{eq:sup-mmd}
,\]
a distance previously studied by \citet{kernel-choice-mmd}
as well as \citet{mmd-gan}.
An MMD GAN's effective estimator of $\hat\eta$ is also biased by \cref{thm:ipm-bias}
(see particularly \cref{appendix:max-bias});
by \cref{thm:biased-means-biased-grad},
its gradients are also almost certainly biased.

In both cases, the bias vanishes as the selection of $\theta$ becomes better;
in particular, no bias is introduced by the use of a fixed (and potentially small) minibatch size,
but rather by the optimization procedure for $\theta$ and the total number of samples seen in training the discriminator.

Yet there is at least some sense in which MMD GANs might be considered ``less biased'' than WGANs.
Optimizing the generator parameters of a WGAN while holding the critic parameters fixed is not sensible:
consider, for example, $\PP$ a point mass at $0 \in \R$ and $\QQ$ a point mass at $q \in \R$.
If $q > 0$, an optimal $\theta$ might correspond to the witness function $f(t) = t$;
if we hold this witness function $f$ fixed, the optimal $q$ is at $-\infty$,
rather than at the correct value of $0$.
But if we hold an MMD GAN's critic fixed and optimize the generator,
we obtain the GMMN model \citep{gmmn,gen-mmd}.
Here, because the witness function still adapts to the observed pair of distributions,
the correct distribution $\PP = \QQ$ will always be optimal.
Bad solutions might also seem to be optimal,
but they can never seem arbitrarily \emph{better}.
Thus unbiased gradients of $\MMD_u^2$ might somehow be more meaningful to the optimization process
than unbiased gradients of \eqref{eq:j-hat};
exploring and formalizing this intuition is an intriguing area for future work.

\section{Evaluation metrics} \label{sec:evaluation}
One challenge in comparing GAN models, as we will do in the next section,
is that quantitative comparisons are difficult.
Some insight can be gained by visually examining samples,
but we also consider the following approaches to evaluate GAN methods.

\paragraph{Inception score}
This metric, proposed by \citet{improved-gans},
is based on the classification output $p(y \mid x)$ of the Inception model \citep{inception}.
Defined as $\exp\left( \E_x \KL{p(y \mid x)}{p(y)} \right)$,
it is highest when each image's predictive distribution has low entropy,
but the marginal predictive distribution $p(y) = \E_x p(y \mid x)$ has high entropy.
This score correlates somewhat with human judgement of sample quality on natural images,
but it has some issues,
especially when applied to domains which do not represent a variety of the types of classes in ImageNet.
In particular, it knows nothing about the desired distribution for the model.

\paragraph{FID}
The \emph{Fr\'echet Inception Distance}, proposed by \citet{fid},
avoids some of the problems of Inception
by measuring the similarity of the samples' representations in the Inception architecture (at the \texttt{pool3} layer, of dimension $2048$)
to those of samples from the target distribution.
The FID fits a Gaussian distribution to the hidden activations for each distribution
and then computes the Fr\'echet distance, also known as the Wasserstein-2 distance, between those Gaussians.
\citeauthor{fid} show that unlike the Inception score,
the FID worsens monotonically as various types of artifacts are added to CelebA images~--~though in our \cref{appendix:noise} we found the Inception score to be more monotonic than did \citeauthor{fid}, so this property may not be very robust to small changes in evaluation methods.
Note also that the estimator of FID is biased;\footnote{This is easily seen when the true FID is $0$: here the estimator may be positive, but can never be negative. Note also that in fact no unbiased estimator of the FID exists; see \cref{appendix:fid-bias:no-unbiased}.}
we will discuss this issue shortly.

\paragraph{KID}
We propose a metric similar to the FID,
the \emph{Kernel Inception Distance},
to be the squared MMD between Inception representations.
We use a polynomial kernel,
$k(x, y) = \left( \frac{1}{d} x\tp y + 1 \right)^3$ where $d$ is the representation dimension,
to avoid correlations with the objective of MMD GANs as well as to avoid tuning any kernel parameters.\footnote{$k$ is the default polynomial kernel in \texttt{scikit-learn} \citep{scikit-learn}.}
This can also be viewed as an MMD directly on input images with the kernel $K(x, y) = k(\phi(x), \phi(y))$,
with $\phi$ the function mapping images to Inception representations.
Compared to the FID,
the KID has several advantages.
First, it does not assume a parametric form for the distribution of activations.
This is particularly sensible since the representations have ReLU activations,
and therefore are not only never negative,
but do not even have a density:
about 2\% of components in Inception representations are typically exactly zero.
With the cubic kernel we use here,
the KID compares skewness as well as the mean and variance.
Also, unlike the FID, %
the KID has a simple unbiased estimator.\footnote{Because the computation of the MMD estimator scales like $O(n^2 d)$, we recommend using a relatively small $n$ and averaging over several estimates; this is closely related to the block estimator of \citet{b-test}. The FID estimator, for comparison, takes time $O(n d^2 + d^3)$, and is substantially slower for $d = 2048$.}
It also shares the behavior of the FID as artifacts are added to images (\cref{appendix:noise}).

\begin{figure}
    \begin{subfigure}[t]{.48\textwidth}
        \centering
        \includegraphics[width=\linewidth]{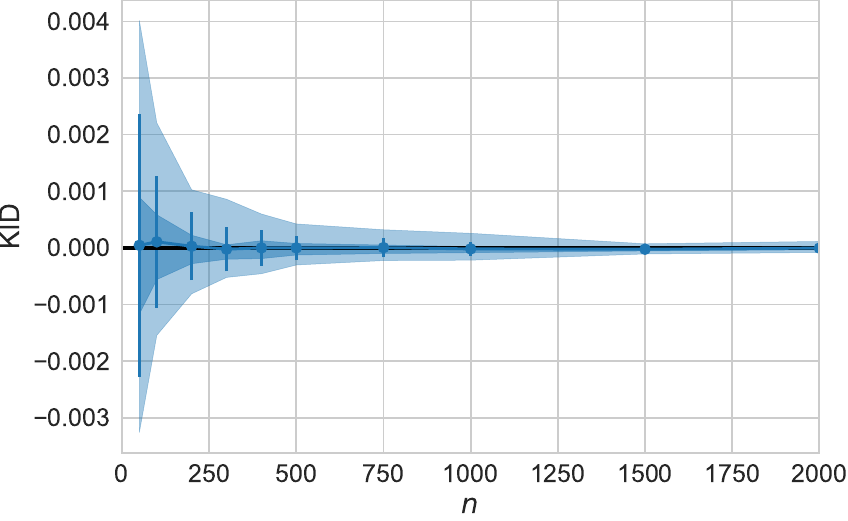}
        \caption{KID estimates are unbiased, and standard deviations shrink quickly even for small $n$.}
        \label{fig:kid-unbias}
    \end{subfigure}
    ~
    \begin{subfigure}[t]{.48\textwidth}
        \centering
        \includegraphics[width=\linewidth]{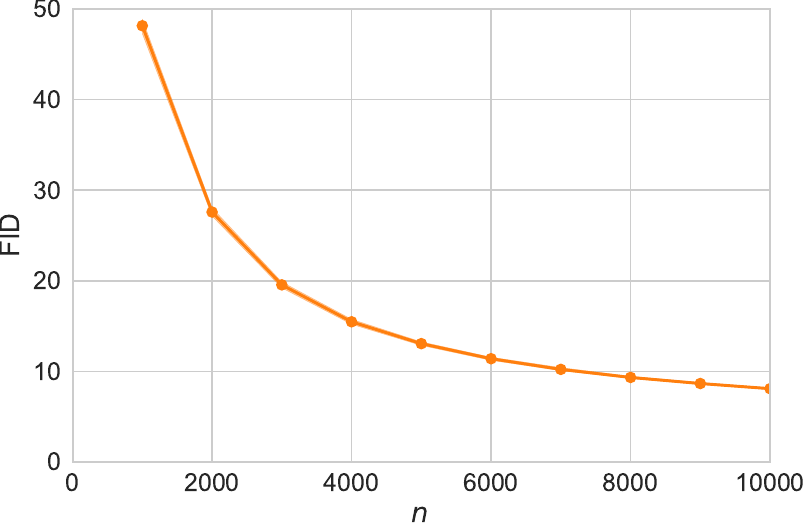}
        \caption{FID estimates exhibit strong bias for $n$ even up to $10\,000$. All standard deviations are less than $0.5$.}
        \label{fig:fid-bias}
    \end{subfigure}
    \caption{Estimates of distances between the CIFAR-10 train and test sets. Each point is based on 100 samples, estimating with replacement; sampling without replacement, and/or using the full training set, gives similar results. Lines show means, error bars standard deviations, dark colored regions a $\frac{2}{3}$ coverage interval of the samples, light colored regions a $95\%$ interval. Note the differing $n$ axes.}
    \label{fig:fid-kid-bias}
\end{figure}

\Cref{fig:fid-kid-bias} demonstrates the empirical bias of the FID and the unbiasedness of the KID by comparing the CIFAR-10 train and test sets.
The KID (\cref{fig:kid-unbias}) converges quickly to its presumed true value of 0;
even for very small $n$, simple Monte Carlo estimates of the variance provide a reasonable measure of uncertainty.
By contrast, the FID estimate (\cref{fig:fid-bias}) does not behave so nicely:
at $n = 2\,000$, when the KID estimator is essentially always 0,
the FID estimator is still quite large.
Even at $n = 10\,000$, the full size of the CIFAR test set,
the FID still seems to be decreasing from its estimate of about 8.1
towards zero, showing the strong persistence of bias.
This highlights that FID scores can only be compared to one another with the same value of $n$.

Yet even for the same value of $n$, there is no particular reason to think that the bias in the FID estimator will be the same when comparing different pairs of distributions.
In \cref{appendix:fid-bias},
we demonstrate two situations where $FID(\bP_1, \bQ) < FID(\bP_2, \bQ)$,
but for insufficent numbers of samples the estimator usually gives the other ordering.
This can happen even where all distributions in question are one-dimensional Gaussians, as \cref{appendix:fid-bias:1d-normal} shows analytically.
\Cref{appendix:fid-bias:relu} also empirically demonstrates this on distributions more like the ones used for FID in practice,
giving a simple example with $d = 2048$ where even estimating with $n = 50\,000$ samples reliably gives the wrong ordering between the models.
Moreover, Monte Carlo estimates of the variance are extremely small
even when the estimate is very far from its asymptote,
so it is difficult to judge the reliability of an estimate,
and practitioners may be misled by the very low variance into thinking that they have obtained the true value.
Thus comparing FID estimates bears substantial risks.
KID estimates, by contrast, are unbiased and asymptotically normal.

For models on MNIST, we replace the Inception featurization with features from a LeNet-like convolutional classifier\footnote{\scriptsize\httpsurl{github.com/tensorflow/models/blob/master/tutorials/image/mnist/convolutional.py}} \citep{lenet},
but otherwise compute the scores in the same way.

We also considered the diagnostic test of \citet{birthday-test},
which estimates the approximate number of ``distinct'' images produced by a GAN.
The amount of subjectivity in what constitutes a duplicate image,
however, makes it hard to reliably compare models based on this diagnostic.
Comparisons likely need to be performed both with a certain notion of duplication in mind
and by a user who does not know which models are being compared,
to avoid subconscious biases;
we leave further exploration of this intriguing procedure to future work.

\subsection{Learning rate adaptation} \label{sec:lr-adaptation}
In supervised deep learning, it is common practice to dynamically reduce the learning rate of an optimizer when it has stopped improving the metric on a validation set.
So far, this does not seem to be common in GAN-type models,
so that learning rate schedules must be tuned by hand.
We propose instead using an adaptive scheme,
based on comparing the KID score for samples from a previous iteration to that from the current iteration.

To avoid setting an explicit threshold on the change in the numerical value of the score,
we use a $p$-value obtained from the relative similarity test of \citet{3sample}.
If the test does not indicate that our current model is closer to the validation set than the model from a certain number of iterations ago at a given significance level, we mark it as a failure; when a given number of failures occur in a row,
we decrease the learning rate.
\citeauthor{3sample}'s test is for the hypothesis $\MMD(\bP_1, \bQ) < \MMD(\bP_2, \bQ)$,
and since the KID can be viewed as an MMD on image inputs, we can apply it directly.%
\footnote{We use the slight corrections to the asymptotic distribution of the MMD estimator given by \citet{opt-mmd} in this test.}

\section{Experiments} \label{sec:experiments}

We compare the quality of samples generated by MMD GAN using various kernels with samples obtained by WGAN-GP \citep{wgan-gp} and Cram\'er GAN \citep{cramer-gan} on four standard benchmark datasets:
the MNIST dataset of $28\times 28$ handwritten digits\footnote{\scriptsize\httpurl{yann.lecun.com/exdb/mnist/}},
the CIFAR-10 dataset of $32 \times 32$ photos \citep{cifar10},
the LSUN dataset of bedroom pictures resized to $64\times 64$ \citep{lsun},
and the CelebA dataset of celebrity face images resized and cropped to $160\times 160$ \citep{celeba}.

For most experiments, except for those with the CelebA dataset, we used the DCGAN architecture \citep{dcgan} for both generator and critic.
For MMD losses, we used only 16 top-layer neurons in the critic; more did not seem to improve performance,
except for the \emph{distance} kernel for which 256 neurons in the top layer was advantageous.
As
\citet{cramer-gan} advised to use at least 256-dimensional critic output, this enabled exact comparison
between Cram\'er GAN and energy distance MMD, which are directly related (\cref{sec:cramer}).
For the generator we used the standard number of convolutional filters (64 in the second-to-last layer);
for the critic, we compared networks with 16 and 64 filters in the first convolutional layer.\footnote{In
the DCGAN architecture the number of filers doubles in each consecutive layer, so an $f$-filter critic
has $f$, $2f$, $4f$ and $8f$ convolutional filters in layers 1-4, respectively.}

For the higher-resolution model for the CelebA dataset, we used a 5-layer DCGAN critic and a 10-layer ResNet
generator\footnote{As in \citet{wgan-gp}, we use a linear layer, 4 residual blocks and one convolutional layer.}, with 64
convolutional filters in the last/first layer. This allows us to compare the performance of MMD GANs with a more complex architecture.

Models with smaller critics run considerably faster:
on our systems, the 16-filter DCGAN networks typically ran at about twice the speed of the
64-filter ones.
Note that the critic size is far more important to training runtime than the generator size:
we update the critic 5 times for each generator step,
and moreover the critic network is run on two batches each time we use it,
one from $\bP$ and one from $\bQ$.
Given the same architecture, all models considered here run at about the same speed.

We evaluate several MMD GAN kernel functions in our experiments.\footnote{Because these higher-resolution experiments were slower to run, for CelebA we trained MMD GAN with only one type of kernel.}
The simplest is the linear kernel:
$k^\DOT(x, y) = \langle x, y \rangle$,
whose MMD corresponds  to the distance between means
\citep[this is somewhat similar to the \emph{feature matching} idea of ][]{improved-gans}.
We also use the exponentiated quadratic \eqref{eq:k-rbf} and rational quadratic \eqref{eq:k-rq} functions,
with mixtures of lengthscales,
\[
    k^{\RBF}(x, y) = \sum_{\sigma \in \Sigma} k_{\sigma}^{\RBF}(x, y),
    \qquad
    k^{\RQ}(x, y) = \sum_{\alpha \in \mathcal A} k^{\RQ}_{\alpha}(x, y)
,\]
where $\Sigma = \{2, 5, 10, 20, 40, 80\}$,
$\mathcal A = \{.2, .5, 1, 2, 5\}$. For the latter, however, we found it advantageous to add
a linear kernel to the mixture, resulting in the mixed \emph{RQ-dot} kernel $k^{\RQ*} = k^{\RQ} + k^{\DOT}$.
Lastly we use the distance-induced kernel $k^\DIST_{\rho_1,0}$ of \eqref{eq:distanceKernel},
using the Euclidean distance $\rho_1$ so that the MMD is the energy distance.%
\footnote{%
  We also found it helpful to add an activation penalty to the critic representation network in certain MMD models.
  Otherwise the representations $h_\theta$ sometimes chose very large values,
  which for most kernels does not change the theoretical loss (defined only in terms of distances)
  but leads to floating-point precision issues. We use a combined $L^2$ penalty on activations across all critic layers, with a factor of $1$ for $\RQ{}*$ and $0.0001$ for $\DIST$.
}
We also considered Cram\'er GANs, with the surrogate critic \eqref{eq:surrogate},
and WGAN-GPs.

Each model was trained with a batch size of 64, and 5 discriminator updates per generator update.
For CIFAR-10, LSUN and CelebA we trained for $150\,000$ generator updates,
while for MNIST we used $50\,000$.
The initial learning rate was set to $10^{-4}$ and followed the adaptive scheme described in \cref{sec:lr-adaptation}, with KID compared between the current model and the model $20\,000$ generator steps earlier ($5\,000$ for MNIST), every $2\,000$ steps ($500$ for MNIST). After 3 consecutive failures to improve, the learning rate was halved. This approach allowed us to avoid manually picking a different learning rate for each of the considered models.

We scaled the gradient penalty by $1$, instead of the $10$ recommended by \citet{wgan-gp} and \citet{cramer-gan}; we found this to usually work slightly better with MMD models. With the distance kernel, however, we scale the penalty by $10$ to allow direct comparison with Cram\'er GAN.

Quantitative scores are estimated based on $25\,000$ generator samples ($100\,000$ for MNIST),
and compared to $25\,000$ dataset elements (for LSUN and CelebA)
or the standard test set ($10\,000$ images held out from training for MNIST and CIFAR-10).
Inception and FID scores were computed using 10 bootstrap resamplings of the given images;
the KID score was estimated based on 100 repetitions of sampling $1\,000$ elements without replacement.

Code for our models is available at \httpsurl{github.com/mbinkowski/MMD-GAN}.

\paragraph{MNIST}
All of the models achieved good results,
measured both visually and in quantitative scores;
full results are in \cref{appendix:samples}.
\Cref{fig:mnist:traces}, however, shows the evolution of our quantitative criteria throughout the training process for several models.
This shows that the linear kernel \DOT{} and rbf kernel \RBF{} are clearly worse than the other models at the beginning of the training process,
but both improve eventually.
\RBF{}, however, never fully catches up with the other models.
There is also some evidence that \DIST{}, and perhaps WGAN-GP, converge more slowly than \RQ{} and Cram\'er GAN.
Given their otherwise similar properties,
we thus recommend the use of \RQ{} kernels over \RBF{} in MMD GANs and limit experiments for other datasets to \RQ{} and \DIST{} kernels.

\begin{figure}[h!]
    \centering
    \includegraphics[width=\linewidth]{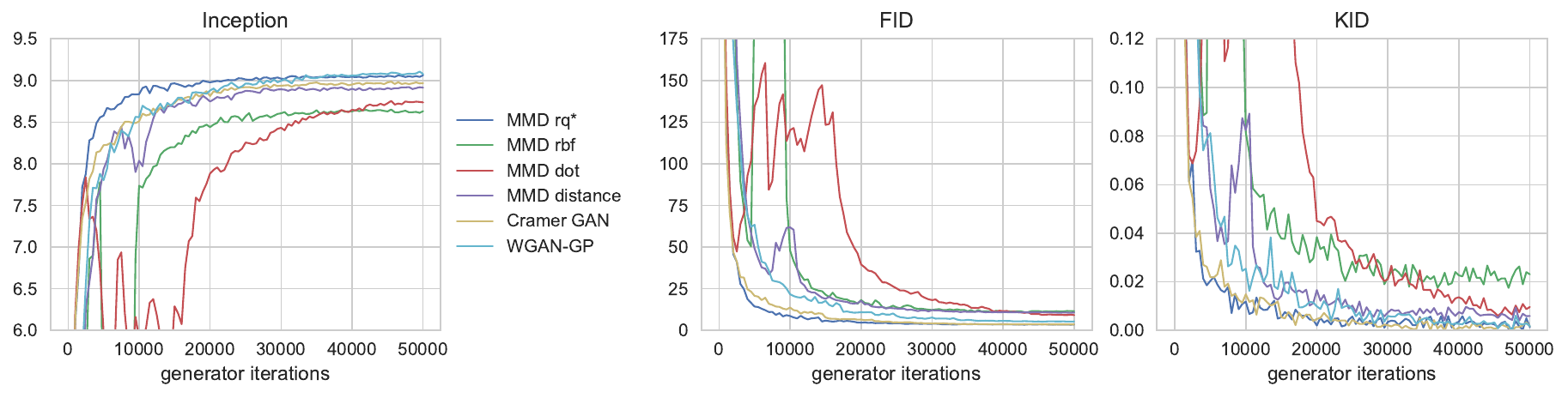}
    \caption{Score estimates over the learning process for MNIST training. }
    \label{fig:mnist:traces}
\end{figure}

\paragraph{CIFAR-10}
Full results are shown in \cref{appendix:samples}.
Small-critic MMD GAN models approximately match large-critic WGAN-GP models,
at substantially reduced computational cost.

\paragraph{LSUN Bedrooms}
\cref{tab:lsun-scores} presents scores for models trained on the LSUN Bedrooms dataset; samples from most of these models are shown in \cref{fig:lsun:samples}. Comparing the models' Inception scores with the one achieved by the test set makes clear that this measure is not meaningful for this dataset~--~not surprisingly, given the drastic difference in domain from ImageNet class labels.

In terms of KID and FID, MMD GANs outperform Cram\'er and WGAN-GP for each critic size. Although results with the smaller critic are worse than with the large one for each considered model,
small-critic MMD GANs still produce reasonably good samples, which certainly is not the case for WGAN-GP.
Although a small-critic Cram\'er GAN produces relatively good samples, the separate objects in these pictures often seem less sharp than the MMD \RQ{}* samples.
With a large critic, both Cram\'er GAN and MMD \RQ{}* give good quality samples, many of which are hardly distinguishable from the test set by eye.

\begin{table}[ht]
    \centering
 \caption{Mean (standard deviation) of score evaluations for the LSUN models. Inception scores do not seem meaningful for this dataset.}
    \label{tab:lsun-scores}
    \begin{tabular}{ccc|rrr}
      & \multicolumn{2}{c|}{critic size} & \multicolumn{3}{c}{} \\
 loss & filters & top layer & \multicolumn{1}{c}{Inception} & \multicolumn{1}{c}{FID} & \multicolumn{1}{c}{KID} \\
\hline
\RQ{}    &   16 &   16 &    3.13  (0.01) &   86.47  (0.29) &   0.091  (0.002)\\
\RQ{}    &   64 &   16 &    2.80  (0.01) &   31.95  (0.28) &   0.028  (0.002)\\
\DIST{}  &   16 &  256 &    3.42  (0.01) &  104.85  (0.32) &   0.109  (0.002)\\
\DIST{}  & 64   & 256  &    2.79  (0.01) &   35.28  (0.21) &   0.032  (0.001)\\
Cram\'er GAN&16 & 256  &    3.46  (0.02) &  122.03  (0.41) &   0.132  (0.002)\\
Cram\'er GAN&64 & 256  &    3.44  (0.02) &   54.18  (0.39) &   0.050  (0.002)\\
WGAN-GP  & 16   & 1    &    2.40  (0.01) &  292.77  (0.35) &   0.370  (0.003)\\
WGAN-GP  & 64   & 1    &    3.12  (0.01) &   41.39  (0.25) &   0.039  (0.002)\\
test set &  --  & --   &    2.36  (0.01) &    2.49  (0.02) &   0.000  (0.000)\\
    \end{tabular}
\end{table}

\begin{figure}[ht]
    \centering
    \begin{subfigure}[t]{0.30\textwidth}
        \centering
        \includegraphics[width=\linewidth]{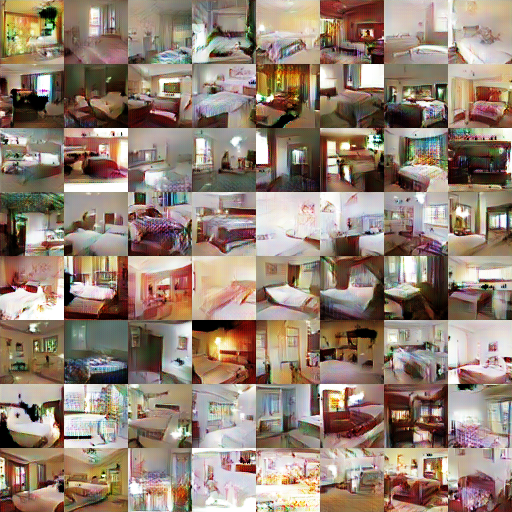}
        \caption{MMD \RQ{}, critic size 16} \label{fig:lsun:rq-16}
    \end{subfigure}
    \hfill
    \begin{subfigure}[t]{0.30\textwidth}
        \centering
        \includegraphics[width=\linewidth]{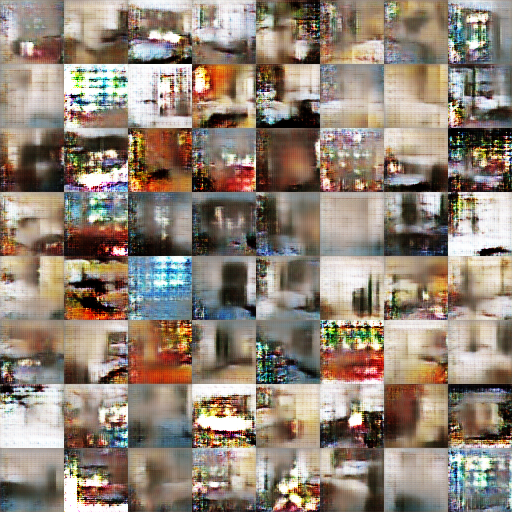}
        \caption{WGAN-GP, critic size 16} \label{fig:lsun:wgangp-16}
    \end{subfigure}
    \hfill
    \begin{subfigure}[t]{0.30\textwidth}
        \centering
        \includegraphics[width=\linewidth]{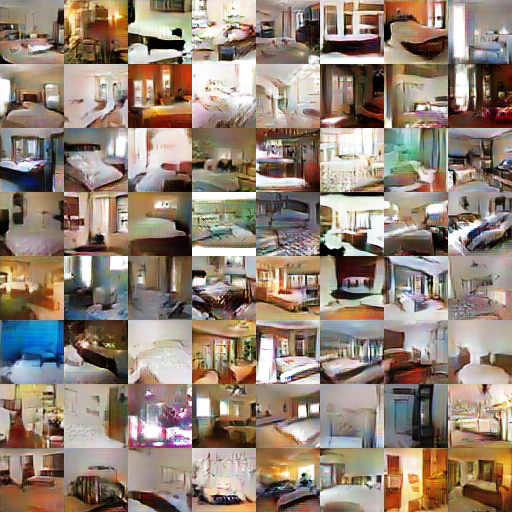}
        \caption{Cram\'er GAN, critic size 16} \label{fig:lsun:cramer-16}
    \end{subfigure}

    \vspace{0cm}

    \begin{subfigure}[t]{0.30\textwidth}
        \centering
        \includegraphics[width=\linewidth]{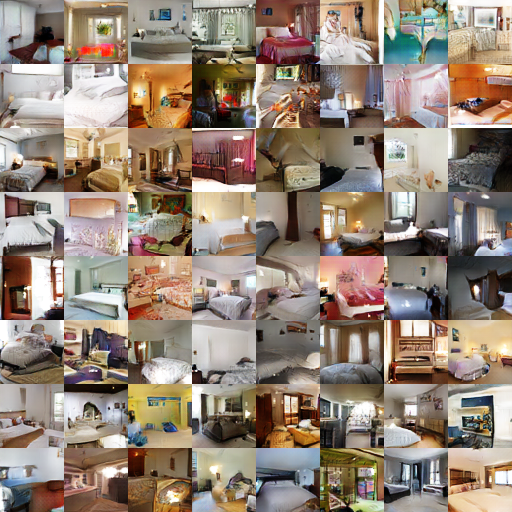}
        \caption{MMD \RQ{}, critic size 64} \label{fig:lsun:rq-64}
    \end{subfigure}
    \hfill
    \begin{subfigure}[t]{0.30\textwidth}
        \centering
        \includegraphics[width=\linewidth]{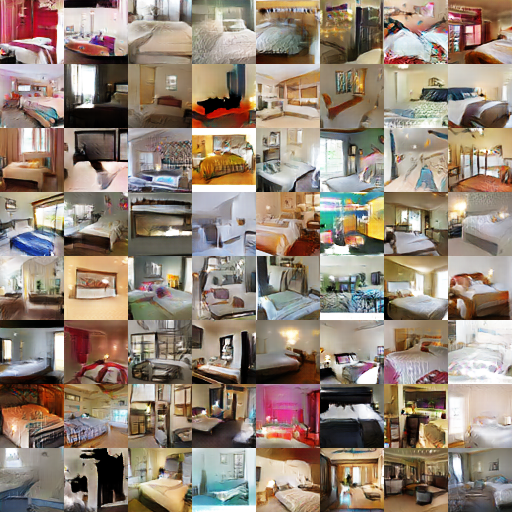}
        \caption{WGAN-GP, critic size 64} \label{fig:lsun:wgangp-64}
    \end{subfigure}
    \hfill
    \begin{subfigure}[t]{0.30\textwidth}
        \centering
        \includegraphics[width=\linewidth]{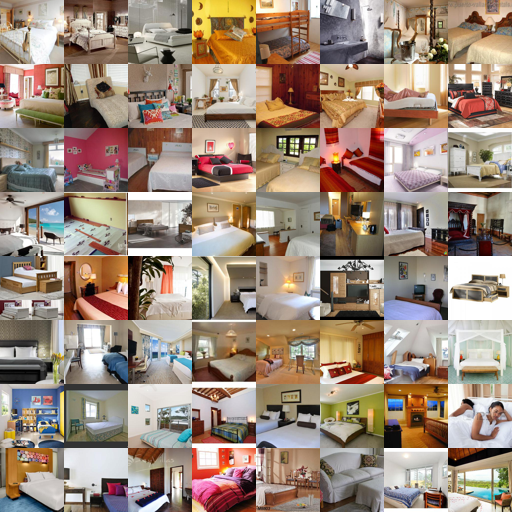}
        \caption{Test set} \label{fig:lsun:test}
    \end{subfigure}
    \caption{Comparison of samples for the $64 \times 64$ LSUN Bedroom database.}
    \label{fig:lsun:samples}
\end{figure}

\paragraph{CelebA}
Scores for the CelebA dataset are shown in \cref{tab:celeba-scores}; MMD GAN with \emph{rq*} kernel outperforms
both WGAN-GP and Cram\'er GAN in KID and FID. Samples in \cref{fig:celeba:samples} show that for each of the models there are
many visually pleasing pictures among the generated ones, yet unrealistic images are more common for WGAN-GP and Cram\'er.
\begin{table}[ht]
    \centering
 \caption{Mean (standard deviation) of score evaluations for the CelebA dataset.}
    \label{tab:celeba-scores}
    \begin{tabular}{ccc|rrr}
      & \multicolumn{2}{c|}{critic size} & \multicolumn{3}{c}{} \\
 loss & filters & top layer & \multicolumn{1}{c}{Inception} & \multicolumn{1}{c}{FID} & \multicolumn{1}{c}{KID} \\
\hline
\RQ{}    &   64 &   16 &    2.61  (0.01) &   20.55  (0.25) &   0.013  (0.001)\\
Cram\'er &   64 &   256 &    2.86  (0.01) &   31.30  (0.17) &   0.025  (0.001)\\
WGAN-GP  & 64   & 1    &    2.72  (0.01) &   29.24  (0.22) &   0.022  (0.001)\\
test set &  --  & --   &    3.76  (0.02) &    2.25  (0.04) &   0.000  (0.000)\\
    \end{tabular}
\end{table}

\begin{figure}[ht]
    \centering
    \begin{subfigure}[t]{0.30\textwidth}
        \centering
        \includegraphics[width=\linewidth]{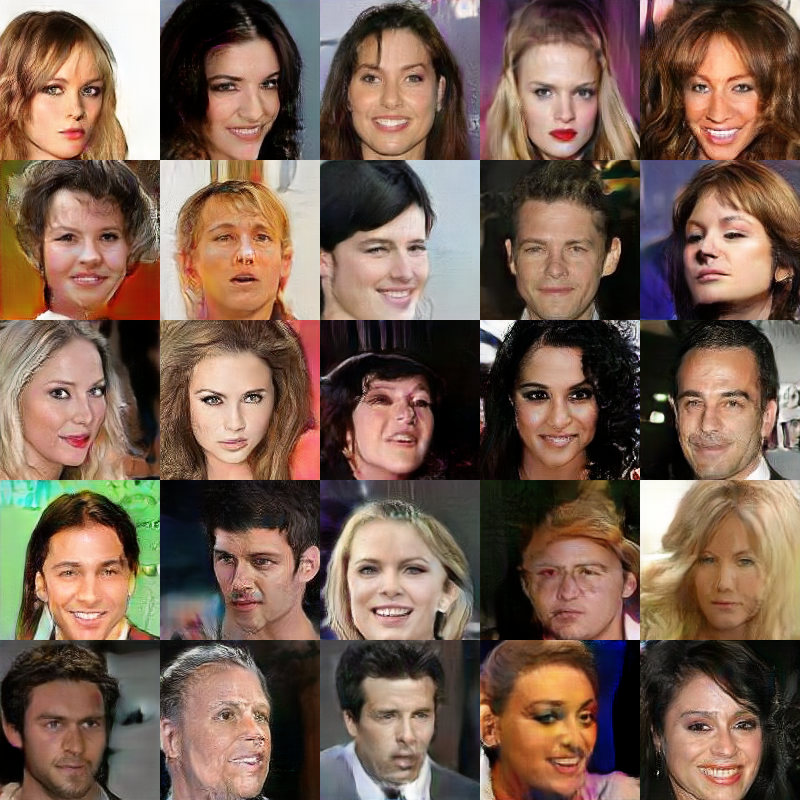}
        \caption{MMD \RQ{}*} \label{fig:celeba:rq}
    \end{subfigure}
    \hfill
    \begin{subfigure}[t]{0.30\textwidth}
        \centering
        \includegraphics[width=\linewidth]{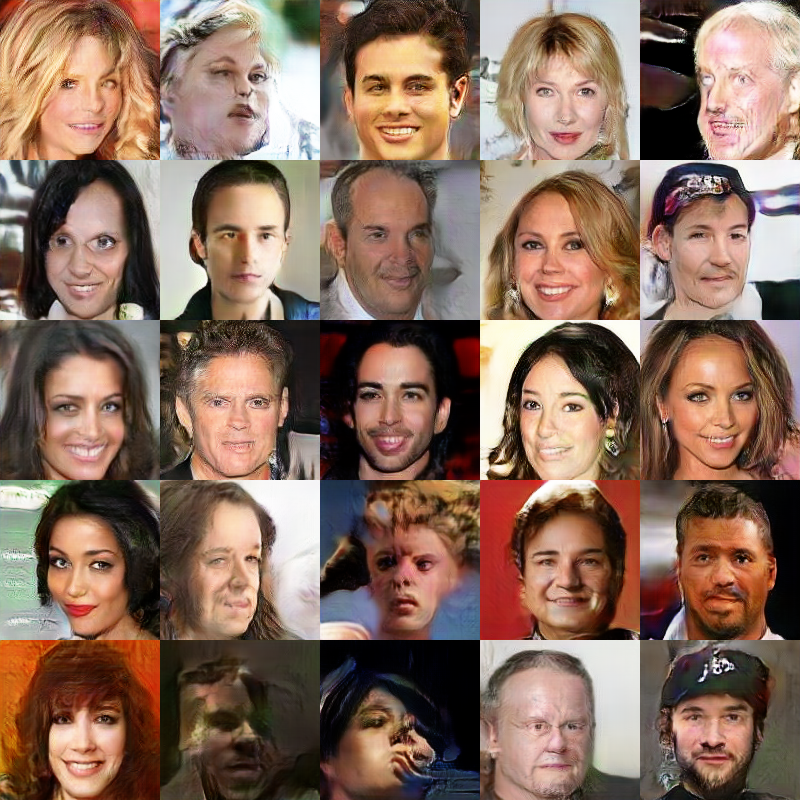}
        \caption{WGAN-GP} \label{fig:celeba:wgangp}
    \end{subfigure}
    \hfill
    \begin{subfigure}[t]{0.30\textwidth}
        \centering
        \includegraphics[width=\linewidth]{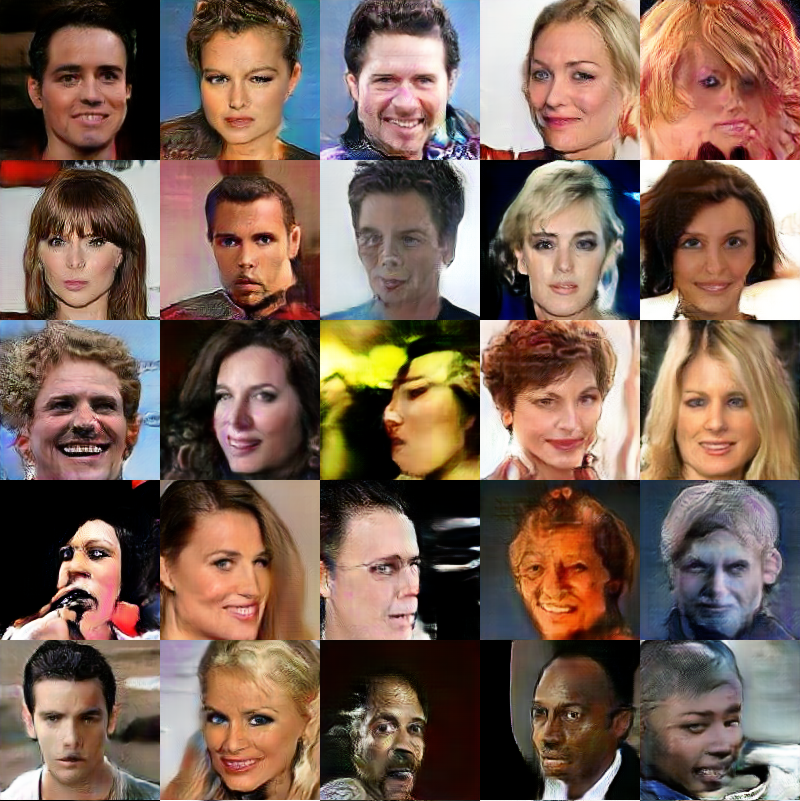}
        \caption{Cram\'er GAN} \label{fig:celeba:cramer}
    \end{subfigure}
    \caption{Comparison of samples with a ResNet generator for the $160 \times 160$ CelebA dataset.}
    \label{fig:celeba:samples}
\end{figure}

\paragraph{}
These results illustrate the benefits of using the MMD on deep convolutional feaures
as a GAN critic. In this hybrid system, the initial convolutional layers
map the generator and reference image distributions to a simpler representation, which
is well suited to comparison
via the MMD. The MMD in turn employs an infinite dimensional feature space
to compare the outputs of these convolutional layers. By comparison,  WGAN-GP requires a larger discriminator
network to achieve similar performance.
It is interesting to consider the question of kernel choice: the distance
kernel and RQ kernel are both characteristic \citep{SriGreFukLanetal10},
and neither suffers from the fast decay of the exponentiated
quadratic kernel, yet the RQ kernel performs slightly better in our experiments.
The relative merits of different kernel families for GAN training will
be an interesting topic for further study.

\bibliography{refs}

\begin{thebibliography}{62}
\providecommand{\natexlab}[1]{#1}
\providecommand{\url}[1]{\texttt{#1}}
\expandafter\ifx\csname urlstyle\endcsname\relax
  \providecommand{\doi}[1]{doi: #1}\else
  \providecommand{\doi}{doi: \begingroup \urlstyle{rm}\Url}\fi

\bibitem[Arjovsky \& Bottou(2017)Arjovsky and Bottou]{towards-principled-gans}
M.~Arjovsky and L.~Bottou.
\newblock Towards principled methods for training generative adversarial
  networks.
\newblock In \emph{ICLR}, 2017.
\newblock \arxiv{1701.04862}.

\bibitem[Arjovsky et~al.(2017)Arjovsky, Chintala, and Bottou]{wgan}
M.~Arjovsky, S.~Chintala, and L.~Bottou.
\newblock {W}asserstein generative adversarial networks.
\newblock In \emph{ICML}, 2017.
\newblock \arxiv{1701.07875}.

\bibitem[Arora \& Zhang(2017)Arora and Zhang]{birthday-test}
S.~Arora and Y.~Zhang.
\newblock Do {GAN}s actually learn the distribution? {A}n empirical study,
  2017.
\newblock \arxiv{1706.08224}.

\bibitem[Arora et~al.(2017)Arora, Ge, Liang, Ma, and
  Zhang]{arora:gen-equilibrium}
S.~Arora, R.~Ge, Y.~Liang, T.~Ma, and Y.~Zhang.
\newblock Generalization and equilibrium in generative adversarial nets
  ({GAN}s).
\newblock In \emph{ICML}, 2017.
\newblock \arxiv{1703.00573}.

\bibitem[Bellemare et~al.(2017)Bellemare, Danihelka, Dabney, Mohamed,
  Lakshminarayanan, Hoyer, and Munos]{cramer-gan}
M.~G. Bellemare, I.~Danihelka, W.~Dabney, S.~Mohamed, B.~Lakshminarayanan,
  S.~Hoyer, and R.~Munos.
\newblock The {C}ramer distance as a solution to biased {W}asserstein
  gradients, 2017.
\newblock \arxiv{1705.10743}.

\bibitem[Bengio et~al.(2013)Bengio, Mesnil, Dauphin, and Rifai]{BenMesDauRif13}
Y.~Bengio, G.~Mesnil, Y.~Dauphin, and S.~Rifai.
\newblock Better mixing via deep representations.
\newblock In \emph{ICML}, 2013.
\newblock \arxiv{1207.4404}.

\bibitem[Berthelot et~al.(2017)Berthelot, Schumm, and Metz]{began}
D.~Berthelot, T.~Schumm, and L.~Metz.
\newblock {BEGAN}: Boundary equilibrium generative adversarial networks, 2017.
\newblock \arxiv{1703.10717}.

\bibitem[Bickel \& Lehmann(1969)Bickel and Lehmann]{unbiased-convex}
P.~J. Bickel and E.~L. Lehmann.
\newblock Unbiased estimation in convex families.
\newblock \emph{The Annals of Mathematical Statistics}, 40\penalty0
  (5):\penalty0 1523--1535, 1969.

\bibitem[Bouchacourt et~al.(2016)Bouchacourt, Mudigonda, and
  Nowozin]{Bouchacourt:2016}
D.~Bouchacourt, P.~K. Mudigonda, and S.~Nowozin.
\newblock {DISCO} nets: {DIS}similarity {CO}efficients networks.
\newblock In \emph{NIPS}, pp.\  352--360. 2016.

\bibitem[Bounliphone et~al.(2016)Bounliphone, Belilovsky, Blaschko, Antonoglou,
  and Gretton]{3sample}
W.~Bounliphone, E.~Belilovsky, M.~B. Blaschko, I.~Antonoglou, and A.~Gretton.
\newblock A test of relative similarity for model selection in generative
  models.
\newblock In \emph{ICLR}, 2016.
\newblock \arxiv{1511.04581}.

\bibitem[Clevert et~al.(2016)Clevert, Unterthiner, and Hochreiter]{elu}
D.-A. Clevert, T.~Unterthiner, and S.~Hochreiter.
\newblock Fast and accurate deep network learning by exponential linear units
  ({ELU}s).
\newblock In \emph{ICLR}, 2016.
\newblock \arxiv{1511.07289}.

\bibitem[Danihelka et~al.(2017)Danihelka, Lakshminarayanan, Uria, Wierstra, and
  Dayan]{gan-vs-ml-realnvp}
I.~Danihelka, B.~Lakshminarayanan, B.~Uria, D.~Wierstra, and P.~Dayan.
\newblock Comparison of maximum likelihood and {GAN}-based training of {R}eal
  {NVP}s, 2017.
\newblock \arxiv{1705.05263}.

\bibitem[Dziugaite et~al.(2015)Dziugaite, Roy, and Ghahramani]{gen-mmd}
G.~K. Dziugaite, D.~M. Roy, and Z.~Ghahramani.
\newblock Training generative neural networks via maximum mean discrepancy
  optimization.
\newblock In \emph{UAI}, 2015.
\newblock \arxiv{1505.03906}.

\bibitem[Fedus et~al.(2018)Fedus, Rosca, Lakshminarayanan, Dai, Mohamed, and
  Goodfellow]{many-paths}
W.~Fedus, M.~Rosca, B.~Lakshminarayanan, A.~M. Dai, S.~Mohamed, and
  I.~Goodfellow.
\newblock Many paths to equilibrium: {GAN}s do not need to decrease a
  divergence at every step.
\newblock In \emph{ICLR}, 2018.
\newblock \arxiv{1710.08446}.

\bibitem[Gneiting \& Raftery(2007)Gneiting and Raftery]{GneRaf07}
T.~Gneiting and A.~E. Raftery.
\newblock Strictly proper scoring rules, prediction, and estimation.
\newblock \emph{JASA}, 102\penalty0 (477):\penalty0 359--378, 2007.

\bibitem[Goodfellow et~al.(2014)Goodfellow, Pouget-Abadie, Mirza, Xu,
  Warde-Farley, Ozair, Courville, and Bengio]{gans}
I.~Goodfellow, J.~Pouget-Abadie, M.~Mirza, B.~Xu, D.~Warde-Farley, S.~Ozair,
  A.~Courville, and Y.~Bengio.
\newblock Generative adversarial nets.
\newblock In \emph{NIPS}, 2014.
\newblock \arxiv{1406.2661}.

\bibitem[Gretton et~al.(2012)Gretton, Borgwardt, Rasch, Sch{\"{o}}lkopf, and
  Smola]{mmd-jmlr}
A.~Gretton, K.~M. Borgwardt, M.~J. Rasch, B.~Sch{\"{o}}lkopf, and A.~J. Smola.
\newblock A kernel two-sample test.
\newblock \emph{JMLR}, 13, 2012.

\bibitem[Gulrajani et~al.(2017)Gulrajani, Ahmed, Arjovsky, Dumoulin, and
  Courville]{wgan-gp}
I.~Gulrajani, F.~Ahmed, M.~Arjovsky, V.~Dumoulin, and A.~Courville.
\newblock Improved training of {W}asserstein {GAN}s.
\newblock In \emph{NIPS}, 2017.
\newblock \arxiv{1704.00028}.

\bibitem[Heusel et~al.(2017)Heusel, Ramsauer, Unterthiner, Nessler, Klambauer,
  and Hochreiter]{fid}
M.~Heusel, H.~Ramsauer, T.~Unterthiner, B.~Nessler, G.~Klambauer, and
  S.~Hochreiter.
\newblock {GAN}s trained by a two time-scale update rule converge to a {N}ash
  equilibrium.
\newblock In \emph{NIPS}, 2017.
\newblock \arxiv{1706.08500}.

\bibitem[Huang et~al.(2017{\natexlab{a}})Huang, Zhang, Li, and
  He]{beyond-face-rotation}
R.~Huang, S.~Zhang, T.~Li, and R.~He.
\newblock Beyond face rotation: Global and local perception {GAN} for
  photorealistic and identity preserving frontal view synthesis.
\newblock In \emph{ICCV}, 2017{\natexlab{a}}.
\newblock \arxiv{1704.04086}.

\bibitem[Huang et~al.(2017{\natexlab{b}})Huang, Li, Poursaeed, Hopcroft, and
  Belongie]{stacked-gans}
X.~Huang, Y.~Li, O.~Poursaeed, J.~Hopcroft, and S.~Belongie.
\newblock Stacked generative adversarial networks.
\newblock In \emph{CVPR}, 2017{\natexlab{b}}.
\newblock \arxiv{1612.04357}.

\bibitem[Jin et~al.(2017)Jin, Zhang, Li, Tian, Zhu, and Fang]{anime-gans}
Y.~Jin, K.~Zhang, M.~Li, Y.~Tian, H.~Zhu, and Z.~Fang.
\newblock Towards the automatic anime characters creation with generative
  adversarial networks, 2017.
\newblock \arxiv{1708.05509}.

\bibitem[Kingma \& Ba(2015)Kingma and Ba]{adam}
D.~Kingma and J.~Ba.
\newblock Adam: A method for stochastic optimization.
\newblock In \emph{ICLR}, 2015.
\newblock \arxiv{1412.6980}.

\bibitem[Klenke(2008)]{Klenke:2008}
A.~Klenke.
\newblock \emph{Probability Theory: A Comprehensive Course}.
\newblock World Publishing Corporation, 2008.

\bibitem[Krizhevsky(2009)]{cifar10}
A.~Krizhevsky.
\newblock Learning multiple layers of features from tiny images, 2009.

\bibitem[LeCun et~al.(1998)LeCun, Bottou, Bengio, and Haffner]{lenet}
Y.~LeCun, L.~Bottou, Y.~Bengio, and P.~Haffner.
\newblock Gradient-based learning applied to document recognition.
\newblock \emph{Proceedings of the IEEE}, 1998.

\bibitem[Li et~al.(2017{\natexlab{a}})Li, Alvarez-Melis, Xu, Jegelka, and
  Sra]{dan}
C.~Li, D.~Alvarez-Melis, K.~Xu, S.~Jegelka, and S.~Sra.
\newblock Distributional adversarial networks, 2017{\natexlab{a}}.
\newblock \arxiv{1706.09549}.

\bibitem[Li et~al.(2017{\natexlab{b}})Li, Chang, Cheng, Yang, and
  P{\'{o}}czos]{mmd-gan}
C.-L. Li, W.-C. Chang, Y.~Cheng, Y.~Yang, and B.~P{\'{o}}czos.
\newblock {MMD GAN}: Towards deeper understanding of moment matching network.
\newblock In \emph{NIPS}, 2017{\natexlab{b}}.
\newblock \arxiv{1705.08584}.

\bibitem[Li et~al.(2015)Li, Swersky, and Zemel]{gmmn}
Y.~Li, K.~Swersky, and R.~Zemel.
\newblock Generative moment matching networks.
\newblock In \emph{ICML}, 2015.
\newblock \arxiv{1502.02761}.

\bibitem[Liu(2017)]{energy-distance-gan}
L.~Liu.
\newblock On the two-sample statistic approach to generative adversarial
  networks.
\newblock Master's thesis, University of Princeton Senior Thesis, April 2017.
\newblock URL \url{http://arks.princeton.edu/ark:/88435/dsp0179408079v}.

\bibitem[Liu et~al.(2017)Liu, Bousquet, and
  Chaudhuri]{approx-convergence-props}
S.~Liu, O.~Bousquet, and K.~Chaudhuri.
\newblock Approximation and convergence properties of generative adversarial
  learning.
\newblock In \emph{NIPS}, 2017.
\newblock \arxiv{1705.08991}.

\bibitem[Liu et~al.(2015)Liu, Luo, Wang, and Tang]{celeba}
Z.~Liu, P.~Luo, X.~Wang, and X.~Tang.
\newblock Deep learning face attributes in the wild.
\newblock In \emph{ICCV}, 2015.

\bibitem[Lopez-Paz \& Oquab(2017)Lopez-Paz and Oquab]{c2st}
D.~Lopez-Paz and M.~Oquab.
\newblock Revisiting classifier two-sample tests.
\newblock In \emph{ICLR}, 2017.
\newblock \arxiv{1610.06545}.

\bibitem[Lyons(2013)]{Lyons13}
R.~Lyons.
\newblock Distance covariance in metric spaces.
\newblock \emph{The Annals of Probability}, 41\penalty0 (5):\penalty0
  3051--3696, 2013.

\bibitem[Mityagin(2015)]{Mityagin:2015}
B.~Mityagin.
\newblock The zero set of a real analytic function, 2015.
\newblock \arxiv{1512.07276}.

\bibitem[Mroueh \& Sercu(2017)Mroueh and Sercu]{fisher-gan}
Y.~Mroueh and T.~Sercu.
\newblock {F}isher {GAN}.
\newblock In \emph{NIPS}, 2017.
\newblock \arxiv{1705.09675}.

\bibitem[Mroueh et~al.(2017)Mroueh, Sercu, and Goel]{mcgan}
Y.~Mroueh, T.~Sercu, and V.~Goel.
\newblock {McGan}: Mean and covariance feature matching {GAN}.
\newblock In \emph{ICML}, 2017.
\newblock \arxiv{1702.08398}.

\bibitem[M{\"u}ller(1997)]{Mueller97}
A.~M{\"u}ller.
\newblock Integral probability metrics and their generating classes of
  functions.
\newblock \emph{Advances in Applied Probability}, 29\penalty0 (2):\penalty0
  429--443, 1997.

\bibitem[Nowozin et~al.(2016)Nowozin, Cseke, and Tomioka]{NowBotRyo16}
S.~Nowozin, B.~Cseke, and R.~Tomioka.
\newblock f-{GAN}: Training generative neural samplers using variational
  divergence minimization.
\newblock In \emph{NIPS}, 2016.
\newblock \arxiv{1606.00709}.

\bibitem[Pedregosa et~al.(2011)Pedregosa, Varoquaux, Gramfort, Michel, Thirion,
  Grisel, Blondel, Prettenhofer, Weiss, Dubourg, Vanderplas, Passos,
  Cournapeau, Brucher, Perrot, and Duchesnay]{scikit-learn}
F.~Pedregosa, G.~Varoquaux, A.~Gramfort, V.~Michel, B.~Thirion, O.~Grisel,
  M.~Blondel, P.~Prettenhofer, R.~Weiss, V.~Dubourg, J.~Vanderplas, A.~Passos,
  D.~Cournapeau, M.~Brucher, M.~Perrot, and E.~Duchesnay.
\newblock Scikit-learn: Machine learning in {P}ython.
\newblock \emph{JMLR}, 12:\penalty0 2825--2830, 2011.

\bibitem[Piranian(1966)]{Piranian:1966}
G.~Piranian.
\newblock The {Set} of {Nondifferentiability} of a {Continuous} {Function}.
\newblock \emph{The American Mathematical Monthly}, 73\penalty0 (4):\penalty0
  57--61, 1966.

\bibitem[Radford et~al.(2016)Radford, Metz, and Chintala]{dcgan}
A.~Radford, L.~Metz, and S.~Chintala.
\newblock Unsupervised representation learning with deep convolutional
  generative adversarial networks.
\newblock In \emph{ICLR}, 2016.
\newblock \arxiv{1511.06434}.

\bibitem[Rasmussen \& Williams(2006)Rasmussen and Williams]{RasWil06}
C.~E. Rasmussen and C.~K.~I. Williams.
\newblock \emph{Gaussian Processes for Machine Learning}.
\newblock MIT Press, Cambridge, MA, 2006.

\bibitem[Rosenbaum(1961)]{rosenbaum}
S.~Rosenbaum.
\newblock Moments of a truncated bivariate normal distribution.
\newblock \emph{JRSS B}, 23:\penalty0 405--408, 1961.

\bibitem[Salimans et~al.(2016)Salimans, Goodfellow, Zaremba, Cheung, Radford,
  and Chen]{improved-gans}
T.~Salimans, I.~Goodfellow, W.~Zaremba, V.~Cheung, A.~Radford, and X.~Chen.
\newblock Improved techniques for training {GAN}s.
\newblock In \emph{NIPS}, 2016.
\newblock \arxiv{1606.03498}.

\bibitem[Sejdinovic et~al.(2013)Sejdinovic, Sriperumbudur, Gretton, and
  Fukumizu]{energy-dist-is-mmd}
D.~Sejdinovic, B.~K. Sriperumbudur, A.~Gretton, and K.~Fukumizu.
\newblock Equivalence of distance-based and {RKHS}-based statistics in
  hypothesis testing.
\newblock \emph{The Annals of Stastistics}, 41\penalty0 (5):\penalty0
  2263--2291, 2013.
\newblock \arxiv{1207.6076}.

\bibitem[Sriperumbudur et~al.(2009{\natexlab{a}})Sriperumbudur, Fukumizu,
  Gretton, Lanckriet, and Sch{\"{o}}lkopf]{kernel-choice-mmd}
B.~K. Sriperumbudur, K.~Fukumizu, A.~Gretton, G.~R.~G. Lanckriet, and
  B.~Sch{\"{o}}lkopf.
\newblock Kernel choice and classifiability for {RKHS} embeddings of
  probability distributions.
\newblock In \emph{NIPS}, 2009{\natexlab{a}}.

\bibitem[Sriperumbudur et~al.(2009{\natexlab{b}})Sriperumbudur, Fukumizu,
  Gretton, Sch{\"{o}}lkopf, and Lanckriet]{ipms-phi-clf}
B.~K. Sriperumbudur, K.~Fukumizu, A.~Gretton, B.~Sch{\"{o}}lkopf, and G.~R.~G.
  Lanckriet.
\newblock On integral probability metrics, phi-divergences and binary
  classification, 2009{\natexlab{b}}.
\newblock \arxiv{0901.2698}.

\bibitem[Sriperumbudur et~al.(2010)Sriperumbudur, Gretton, Fukumizu, Lanckriet,
  and Sch{\"o}lkopf]{SriGreFukLanetal10}
B.~K. Sriperumbudur, A.~Gretton, K.~Fukumizu, G.~R.~G. Lanckriet, and
  B.~Sch{\"o}lkopf.
\newblock Hilbert space embeddings and metrics on probability measures.
\newblock \emph{JMLR}, 11:\penalty0 1517--1561, 2010.
\newblock \arxiv{0907.5309}.

\bibitem[Sriperumbudur et~al.(2011)Sriperumbudur, Fukumizu, and
  Lanckriet]{SriFukLan11}
B.~K. Sriperumbudur, K.~Fukumizu, and G.~R.~G. Lanckriet.
\newblock Universality, characteristic kernels and {RKHS} embedding of
  measures.
\newblock \emph{JMLR}, 12:\penalty0 2389--2410, 2011.
\newblock \arxiv{1003.0887}.

\bibitem[Sriperumbudur et~al.(2012)Sriperumbudur, Fukumizu, Gretton,
  Sch{\"{o}}lkopf, and Lanckriet]{ipm-empirical-est}
B.~K. Sriperumbudur, K.~Fukumizu, A.~Gretton, B.~Sch{\"{o}}lkopf, and G.~R.~G.
  Lanckriet.
\newblock On the empirical estimation of integral probability metrics.
\newblock \emph{Electronic Journal of Statistics}, 6:\penalty0 1550--1599,
  2012.

\bibitem[Steinwart \& Christmann(2008)Steinwart and Christmann]{SteChr08}
I.~Steinwart and A.~Christmann.
\newblock \emph{Support Vector Machines}.
\newblock Information Science and Statistics. Springer, 2008.

\bibitem[Sutherland(2018)]{censored-normal}
D.~J. Sutherland.
\newblock What are the mean and variance of a 0-censored multivariate normal?
\newblock Cross Validated answer, 2018.
\newblock URL \url{https://stats.stackexchange.com/q/326347}.

\bibitem[Sutherland et~al.(2017)Sutherland, Tung, Strathmann, De, Ramdas,
  Smola, and Gretton]{opt-mmd}
D.~J. Sutherland, H.-Y. Tung, H.~Strathmann, S.~De, A.~Ramdas, A.~Smola, and
  A.~Gretton.
\newblock Generative models and model criticism via optimized maximum mean
  discrepancy.
\newblock In \emph{International Conference on Learning Representations}, 2017.
\newblock \arxiv{1611.04488}.

\bibitem[Szegedy et~al.(2014)Szegedy, Zaremba, Sutskever, Bruna, Erhan,
  Goodfellow, and Fergus]{adversarial}
C.~Szegedy, W.~Zaremba, I.~Sutskever, J.~Bruna, D.~Erhan, I.~Goodfellow, and
  R.~Fergus.
\newblock Intriguing properties of neural networks.
\newblock In \emph{ICLR}, 2014.
\newblock \arxiv{1312.6199}.

\bibitem[Szegedy et~al.(2016)Szegedy, Vanhoucke, Ioffe, Shlens, and
  Wojna]{inception}
C.~Szegedy, V.~Vanhoucke, S.~Ioffe, J.~Shlens, and Z.~Wojna.
\newblock Rethinking the {I}nception architecture for computer vision.
\newblock In \emph{CVPR}, 2016.
\newblock \arxiv{1512.00567}.

\bibitem[Sz\'{e}kely \& Rizzo(2004)Sz\'{e}kely and Rizzo]{SzeRiz04}
G.~Sz\'{e}kely and M.~Rizzo.
\newblock Testing for equal distributions in high dimension.
\newblock \emph{InterStat}, 5, 2004.

\bibitem[Theis et~al.(2016)Theis, van~den Oord, and
  Bethge]{likelihoods-vs-samples}
L.~Theis, A.~van~den Oord, and M.~Bethge.
\newblock A note on the evaluation of generative models.
\newblock In \emph{ICLR}, 2016.
\newblock \arxiv{1511.01844}.

\bibitem[Yu et~al.(2015)Yu, Zhang, Song, Seff, and Xiao]{lsun}
F.~Yu, Y.~Zhang, S.~Song, A.~Seff, and J.~Xiao.
\newblock {LSUN}: Construction of a large-scale image dataset using deep
  learning with humans in the loop, 2015.
\newblock \arxiv{1506.03365}.

\bibitem[Zahorski(1946)]{Zahorski:1946}
Z.~Zahorski.
\newblock Sur l'ensemble des points de non-d{\'e}rivabilit{\'e} d'une fonction
  continue.
\newblock \emph{Bulletin de la Soci{\'e}t{\'e} math{\'e}matique de France},
  2:\penalty0 147--178, 1946.

\bibitem[Zaremba et~al.(2013)Zaremba, Gretton, and Blaschko]{b-test}
W.~Zaremba, A.~Gretton, and M.~B. Blaschko.
\newblock B-tests: Low variance kernel two-sample tests.
\newblock In \emph{NIPS}, 2013.
\newblock \arxiv{1307.1954}.

\bibitem[Zhu et~al.(2017)Zhu, Park, Isola, and Efros]{cyclegan}
J.-Y. Zhu, T.~Park, P.~Isola, and A.~A. Efros.
\newblock Unpaired image-to-image translation using cycle-consistent
  adversarial networks.
\newblock In \emph{ICCV}, 2017.
\newblock \arxiv{1703.10593}.

\end{thebibliography}
\bibliographystyle{iclr2018_conference}

\appendix

\section{Score functions, divergences, and the Cram\'er GAN} \label{sec:score-funcs}

It is not immediately obvious how to interpret the surrogate loss
(\ref{eq:surrogate}). An insight comes from considering the \emph{score
function} associated with the energy distance, which we now briefly
review \citep{GneRaf07}. A scoring rule is a function $S(\bP,y)$, which
is the loss incurred when a forecaster makes prediction $\bP$, and
the event $y$ is observed. The \emph{expected score}
is the expectation under $\bQ$ of the score,
\[
S(\bP,\bQ) := \E_{Y\sim \bQ} S(\bP, Y).
\]
If a score is \emph{proper}, then the expected score obtained when $\bP=\bQ$
is greater or equal than the expected score for $\bP \neq \bQ$,
\[
S(\bQ, \bQ) \ge S(\bP, \bQ)
.\]
A \emph{strictly proper} scoring rule shows an equality only when $\bP$ and
$\bQ$ agree. We can define a divergence measure based on this score,
\[
    \D_S(\bP,\bQ) = S(\bQ, \bQ) - S(\bP, \bQ)
    \label{eq:divergence}
.\]
Bearing in mind the definition of the divergence \eqref{eq:divergence},
it is easy to see \citep[eq. 22]{GneRaf07} that the energy distance \eqref{eq:energyDistance} arises from the score function
\[
S(\bP, y) = \frac{1}{2} \E_{\bP} \rho(X,X') - \E_{\bP} \rho(X, y).
\]
The interpretation is straightforward:
the score of a reference sample $y$
is determined by comparing its average distance to a generator sample
with the average distance among independent generator samples, $\E_\bP \rho(X, X')$.
If we take an expectation over $Y\sim \bQ$,
we recover the scoring rule optimized by the
DISCO Nets algorithm \cite[Section 3.3]{Bouchacourt:2016}.

As discussed earlier, the Cram\'er GAN critic does not use the energy distance
(\ref{eq:energyDistance}) directly on the samples, but first maps
the samples through a function $h$, for instance a convolutional
network; this should be chosen to maximize the discriminative performance
of the critic. Writing this  mapping as $h$, we break the energy
distance down as $\D_{e}(\bP,\bQ)=S(\bQ,\bQ)-S(\bP,\bQ)$, where
\[
S(\bQ,\bQ)=-\frac{1}{2} \E_{\bQ} \rho(h(Y),h(Y'))
\label{eq:ReferenceSimilarity}
\]
and
\[
S(\bP,\bQ) = \frac{1}{2} \E_{\bP} \rho(h(X),h(X')) - \E_{\bP,\bQ}\rho(h(X),h(Y))
\label{eq:expectedScore}
.\]
When training the discriminator, the goal is to maximize the divergence
by learning $h$, and so both (\ref{eq:ReferenceSimilarity}) and
(\ref{eq:expectedScore}) change: in other words, divergence maximization
is not possible without two independent samples $Y,Y'$ from the reference
distribution $\bQ.$

An alternative objective in light of the score interpretation, however,
is to simply optimize the average score (\ref{eq:expectedScore}).
In other words, we would find features $h$ that make the average distance from
generator to reference samples much larger than the average
distance between pairs of generator samples. We no longer control
the term encoding the ``variability'' due to $\bQ$, $\E_{\bQ} \rho(h(Y), h(Y'))$,
which might therefore explode:
for instance, $h$ might cause $h(Y)$ to disperse broadly,
and far from the support of $\bP$,
assuming sufficient flexibility to keep $\E_{\bP}\rho(h(X),h(X'))$ under control.
We can mitigate this by controlling the expected norm $\E_{\bQ} \rho(h(Y), 0)$,
which has the advantage of only requiring a single sample to compute.
For example, we could maximize
\[
    - \frac12 \E_\bP \rho(h(X), h(X'))
    + \E_{\bP,\bQ}\rho(h(X),h(Y))
    - \E_\bQ \rho(h(Y), 0)
    \label{eq:score-based-cramer}
.\]
This resembles the Cram\'er GAN critic \eqref{eq:surrogate},
but the generator-to-generator distance is scaled differently,
and there is an additional term:
$\E_{\bP} \rho(h(X), 0)$ is being maximized in \eqref{eq:surrogate}, which is more difficult to interpret.
An argument has been made (in personal communication with \citeauthor{cramer-gan}) that this last term is required if the function
$f_{c}$ in (\ref{eq:surrogateWitness}) is to be a witness of an
integral probability metric \eqref{eq:IPM},
although the asymmetry of this witness in $\bP$ vs $\bQ$ needs to be analyzed further.

\section{Bias of generalized IPM estimators} \label{appendix:dual-bias}

We will now show that all estimators of IPM-like distances and their gradients are biased.
\Cref{sec:general-ipms} defines a slight generalization of IPMs,
used to analyze MMD GANs in the same framework as WGANs,
and a class of estimators that are a natural model for the estimator used in GAN models.
\Cref{sec:gen-ipm-bias} both shows that not only are this form of estimators invariably biased in nontrivial cases, and moreover no unbiased estimator can possibly exist;
\cref{sec:gen-ipm-grad-bias} then demonstrates that any estimator with non-constant bias yields a biased gradient estimator.
\Cref{appendix:wgan-bias,appendix:max-bias} demonstrate specific examples of this bias for the Wasserstein and maximized-MMD distances.

\subsection{Generalized IPMs and data-splitting estimators} \label{sec:general-ipms}
We will first define a slight generalization of IPMs:
we will use this added generality to help analyze MMD GANs in \cref{appendix:max-bias}.
\begin{definition}[Generalized IPM] \label{def:generalized-ipm}
Let $\X$ be some domain, with $\M$ a class of probability measures on $\X$.\footnote{All results in this section could be trivially extended to support $\bP$ and $\bQ$ over different domains $\X$ and $\Y$, if desired.}
Let $\F$ be some parameter set,
and $J : \F \times \M \times \M \to \R$ an objective functional.
The \emph{generalized IPM} $\D : \M \times \M \to \R$ is then given by
\[
    \D(\bP, \bQ) = \sup_{f \in \F} J(f, \bP, \bQ)
    \label{eq:j-dist}
.\]
\end{definition}

For example,
if $\F$ is a class of functions $f : \X \to \R$, and $J$ is given by
\[
     J_{\IPM}(f, \bP, \bQ) = \E_{X \sim \bP} f(X) - \E_{Y \sim \bQ} f(Y)
     \label{eq:ipm-obj}
,\]
then we obtain integral probability metrics \eqref{eq:IPM}.
Given samples $\Xfull \sim \bP^m$ and $\Yfull \sim \bQ^n$,
let $\hat\bP$ denote the empirical distribution of $\Xfull$
(an equal mixture of point masses at each $X_i \in \Xfull$),
and similarly $\hat\bQ$ for $\Yfull$.
Then we have a simple estimator of \eqref{eq:ipm-obj} which is unbiased for fixed $f$:
\[
    \hat J_{\IPM}(f, \Xfull, \Yfull)
    = J_{\IPM}(f, \hat\bP, \hat\bQ)
    = \frac{1}{m} \sum_{i=1}^m f(X_i) - \frac1n \sum_{j=1}^n f(Y_j)
    \label{eq:ipm:obj-hat}
.\]

\begin{definition}[Data-splitting estimator of a generalized IPM] \label{def:data-splitting-est}
    Consider the distance \eqref{eq:j-dist},
    with objective $J$ and parameter class $\F$.
    Suppose we observe iid samples $\Xfull \sim \bP^m$, $\Yfull \sim \bQ^n$,
    for any two distributions $\bP$, $\bQ$.
    A \emph{data-splitting estimator} is a function $\hat\D(\Xfull, \Yfull)$
    which first randomly splits the sample $\Xfull$ into $\Xtr$, $\Xte$
    and $\Yfull$ into $\Ytr$, $\Yte$,
    somehow selects a critic function $\hat f_{\Xtr,\Ytr} \in \F$
    independently of $\Xte$, $\Yte$,
    and then returns a result of the form
    \[
         \hat\D(\Xfull, \Yfull)
       = \hat J(\hat f_{\Xtr,\Ytr}, \Xte, \Yte)
    ,\]
    where $\hat J(f, \X, \Y)$ is an estimator of $J(f, \bP, \bQ)$.
\end{definition}

These estimators are defined by three components:
the choice of relative sizes of the train-test split,
the selection procedure for $\hat f_{\Xtr,\Ytr}$,
and the estimator $\hat J$.
The most obvious selection procedure is
\[
    \hat f_{\Xtr,\Ytr} \in \argmax_{f \in \F} \hat J(f, \Xtr, \Ytr)
\label{eq:ipm:f-hat}
,\]
though of course one could use regularization or other techniques to select a different $f \in \F$,
and in practice one will use an approximate optimizer.
\citet{c2st} used an estimator of exactly this form in a two-sample testing setting.

As noted in \cref{sec:unbiased-gradients},
this training/test split is a reasonable match for the GAN training process.
As we optimize a WGAN-type model,
we compute the loss (or its gradients) on a minibatch,
while the current parameters of the critic are based only on data seen in previous iterations.
We can view the current minibatch as $\Xte, \Yte$,
all previously-seen data as $\Xtr, \Ytr$,
and the current critic function as $\hat f_{\Xtr,\Ytr}$.
Thus, at least in the first pass over the training set,
WGAN-type approaches exactly fit the data-splitting form of \cref{def:data-splitting-est};
in later passes, the difference from this setup should be relatively small unless the model is substantially overfitting.

\subsection{Estimator bias} \label{sec:gen-ipm-bias}
We first show, in \cref{thm:ipm-bias},
that data-splitting estimators are biased downwards.
Although this provides substantial intuition about the situation in GANs,
it leaves open the question of whether some other unbiased estimator might exist;
\cref{thm:ipm-always-biased} shows that this is not the case.

\begin{theorem} \label{thm:ipm-bias}
    Consider a data-splitting estimator (\cref{def:data-splitting-est}) of the generalized IPM $\D$ (\cref{def:generalized-ipm})
    based on an unbiased estimator $\hat J$ of $J$:
    for any fixed $f \in \F$,
    \[
      \E_{\substack{\Xfull \sim \bP^m\\\Yfull \sim \bQ^n}}\left[ \hat J(f, \Xfull, \Yfull) \right]
      = J(f, \bP, \bQ)
    .\]

    Then either the selection procedure is almost surely perfect,
    \[
        \Pr\left( J\left( \hat f_{\Xtr,\Ytr}, \bP, \bQ \right) = \D(\bP, \bQ) \right) = 1
        \label{eq:j-est:train-perfectly}
    ,\]
    or else the estimator has a downward bias:
    \[
        \E \hat\D\left(\Xfull, \Yfull \right) < \D\left( \bP, \bQ \right)
        \label{eq:j-est:biased}
    .\]
\end{theorem}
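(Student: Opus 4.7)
The plan is to combine the tower property of conditional expectation with the supremum definition of $\D$; essentially, $\hat\D$ is the expected $J$-value of the selected critic under $(\bP,\bQ)$, and any random critic in $\F$ can do no better in expectation than the supremum.

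First, I would condition on $\hat f_{\Xtr,\Ytr}$. By \cref{def:data-splitting-est}, the selected critic is independent of $(\Xte,\Yte)$, and after the random split the test sample is still iid from the appropriate power of $\bP\otimes\bQ$. Applying the assumed unbiasedness of $\hat J$ pointwise in the conditioning event yields
\[
    \E\bigl[\hat J(\hat f_{\Xtr,\Ytr},\Xte,\Yte)\,\big|\,\hat f_{\Xtr,\Ytr}\bigr]
    = J(\hat f_{\Xtr,\Ytr},\bP,\bQ).
\]
Taking outer expectations gives
\[
    \E\,\hat\D(\Xfull,\Yfull)
    = \E\bigl[J(\hat f_{\Xtr,\Ytr},\bP,\bQ)\bigr].
\]

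Second, since $\hat f_{\Xtr,\Ytr}\in\F$ almost surely, the definition $\D(\bP,\bQ)=\sup_{f\in\F} J(f,\bP,\bQ)$ gives the pointwise bound $J(\hat f_{\Xtr,\Ytr},\bP,\bQ)\le \D(\bP,\bQ)$ almost surely. Hence $\E\,\hat\D \le \D(\bP,\bQ)$.

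Third, to obtain the dichotomy, I would use the elementary fact that if a random variable $Z$ satisfies $Z\le c$ a.s.\ for a constant $c$ and $\E Z = c$, then $Z = c$ a.s. Apply this with $Z = J(\hat f_{\Xtr,\Ytr},\bP,\bQ)$ and $c = \D(\bP,\bQ)$: either $\E\,\hat\D < \D(\bP,\bQ)$ (the biased case \eqref{eq:j-est:biased}), or equality holds, forcing $J(\hat f_{\Xtr,\Ytr},\bP,\bQ)=\D(\bP,\bQ)$ a.s., which is exactly \eqref{eq:j-est:train-perfectly}. There is no real obstacle beyond being careful about the independence structure, ensuring that conditioning on $\hat f_{\Xtr,\Ytr}$ preserves the iid law of $(\Xte,\Yte)$ so that unbiasedness of $\hat J$ applies inside the conditional expectation; this is immediate from the definition of a data-splitting estimator.
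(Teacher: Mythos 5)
Your proof is correct and follows essentially the same route as the paper: you use independence of the train and test splits plus unbiasedness of $\hat J$ to write $\E\,\hat\D = \E[J(\hat f_{\Xtr,\Ytr},\bP,\bQ)]$, bound this by $\D(\bP,\bQ)$ via the supremum definition, and obtain the dichotomy from the fact that a nonnegative random variable with zero mean is almost surely zero (the paper phrases this via the suboptimality $\varepsilon := \D(\bP,\bQ) - J(\hat f_{\Xtr,\Ytr},\bP,\bQ) \ge 0$, which is the same observation).
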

\begin{proof}
Since $\Xtr, \Ytr$ are independent of $\Xte$, $\Yte$,
\[
     \E \hat\D(\Xfull, \Yfull)
   = \E_{\substack{\Xtr,\Ytr\\\Xte,\Yte}}\left[
        \hat J(\hat f_{\Xtr,\Ytr}, \Xte, \Yte)
     \right]
   = \E_{\Xtr, \Ytr}\left[
        J(\hat f_{\Xtr,\Ytr}, \bP, \bQ)
     \right]
\label{eq:ipm:exp-hat}
.\]
Define the suboptimality of $\hat f_{\Xtr, \Ytr}$ as
\[
  \varepsilon := \D(\bP, \bQ) - J\left( \hat f_{\Xtr,\Ytr}, \bP, \bQ \right)
,\]
so that $\E \hat\D(\Xfull, \Yfull) = \D(\bP, \bQ) - \E[\varepsilon]$.
Note that $\varepsilon \ge 0$,
since $\D(\bP, \bQ) = \sup_{f \in \F} J(f, \bP, \bQ)$
and so for any $f \in \F$ we have
\[
    J(f, \bP, \bQ) \le \D(\bP, \bQ)
.\]

Thus,
either $\Pr(\varepsilon = 0) = 1$, in which case \eqref{eq:j-est:train-perfectly} holds,
or else $\E[\varepsilon] > 0$, giving \eqref{eq:j-est:biased}.
\end{proof}

\Cref{thm:ipm-bias} makes clear that as $\hat f_{\Xtr,\Ytr}$ converges to its optimum,
the bias of $\hat\D$ should vanish \citep[as in][Theorem 3]{cramer-gan}.
Moreover, in the GAN setting the minibatch size only directly determines $\Xte, \Yte$,
which do not contribute to this bias;
bias is due rather to the training procedure and the number of samples seen through the training process.
As long as $\hat f_{\Xtr,\Ytr}$ is not optimal, however, the estimator will remain biased.

Many estimators of IPMs do not actually perform this data splitting procedure,
instead estimating $\D(\bP, \bQ)$
with the distance between empirical distributions $\D(\hat\bP, \hat\bQ)$.
The standard biased estimator of the MMD \citep[Equation 5]{mmd-jmlr},
the IPM estimators of \citet{ipm-empirical-est},
and the empirical Wasserstein estimator studied by \citet{cramer-gan}
are all of this form.
These estimators, as well as any other conceivable estimator,
are also biased:
\begin{theorem} \label{thm:ipm-always-biased}
  Let $\mathcal P$ be a class of distributions
  such that
  $\{ (1 - \alpha) \PP_0 + \alpha \PP_1 : 0 \le \alpha \le 1 \} \subseteq \mathcal P$,
  where $\PP_0 \ne \PP_1$ are two fixed distributions.
  Let $\D$ be an IPM \eqref{eq:IPM}.
  There does not exist any estimator of $\D$ which is unbiased on $\mathcal P$.
\end{theorem}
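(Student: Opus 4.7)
The plan is to contrast two structural facts along a one-parameter sub-family of $\mathcal P$: any unbiased estimator must be a polynomial function of the mixing weight, whereas the IPM itself is non-smooth in that weight. Write $\bP_\alpha := (1-\alpha)\PP_0 + \alpha\PP_1$ for $\alpha \in [0,1]$, all of which lie in $\mathcal P$ by assumption. Suppose for contradiction that $T = T(X_1,\ldots,X_m,\,Y_1,\ldots,Y_n)$ is an unbiased estimator of $\D(\bP,\bQ)$ based on i.i.d.\ samples of some fixed sizes $m,n$, valid for all $(\bP,\bQ)\in\mathcal P\times\mathcal P$. Fix any $\beta \in (0,1)$ and set $\bQ = \bP_\beta$, varying only $\alpha \in [0,1]$.

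For the polynomial side, I would expand the product measure
\[
\bP_\alpha^m = \prod_{i=1}^m\bigl[(1-\alpha)\,d\PP_0(x_i) + \alpha\,d\PP_1(x_i)\bigr]
\]
into $2^m$ product terms, each multiplied by a monomial of the form $\alpha^k(1-\alpha)^{m-k}$. Integrating $T$ against this, with $Y_1,\ldots,Y_n \sim \bP_\beta$ held fixed in distribution, yields
\[
\psi(\alpha) := \E_{\Xfull\sim \bP_\alpha^m,\,\Yfull\sim \bP_\beta^n}[T(\Xfull,\Yfull)],
\]
which is a polynomial in $\alpha$ of degree at most $m$.

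For the non-polynomial side, I would compute $\D(\bP_\alpha,\bP_\beta)$ directly. As a signed measure, $\bP_\alpha - \bP_\beta = (\alpha-\beta)(\PP_1 - \PP_0)$, so
\[
\D(\bP_\alpha,\bP_\beta) = \sup_{f\in\F}(\alpha-\beta)\int f\,d(\PP_1-\PP_0) = |\alpha-\beta|\,\D(\PP_0,\PP_1),
\]
using the standing symmetry $f\in\F \iff -f\in\F$ to pull the sign out of the supremum. Unbiasedness therefore forces $\psi(\alpha) = |\alpha-\beta|\,\D(\PP_0,\PP_1)$ on $[0,1]$. Since $|\alpha-\beta|$ fails to be differentiable at the interior point $\alpha=\beta$, no polynomial in $\alpha$ can agree with $|\alpha-\beta|\,\D(\PP_0,\PP_1)$ on any interval around $\beta$, and the desired contradiction follows.

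The main obstacle is conceptual rather than technical, namely choosing the right one-dimensional family so that both arguments are available simultaneously: the mixture family provides the polynomial expansion, and evaluating $\D$ \emph{within} that family (rather than off of it) exposes the kink at $\alpha=\beta$ via positive-homogeneity of the IPM in the signed measure $\PP_1-\PP_0$. There is one degenerate edge case to acknowledge: if $\F$ fails to separate $\PP_0$ from $\PP_1$, then $\D(\PP_0,\PP_1)=0$, the distance is identically zero on the mixture family, and the trivial estimator $T\equiv 0$ is unbiased on that sub-family. The theorem should therefore be read with the mild proviso that $\D(\PP_0,\PP_1)>0$ (automatic whenever $\F$ is characteristic), and the argument above then shows that for \emph{any} fixed sample sizes $m,n$ no unbiased estimator exists.
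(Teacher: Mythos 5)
Your proof is correct and takes essentially the same route as the paper: expand $\bP_\alpha^m$ to show that the expected estimator value is a polynomial in $\alpha$ of degree at most $m$, then evaluate the IPM inside the mixture family to exhibit the kink $|\alpha-\beta|\,\D(\PP_0,\PP_1)$. The paper simply fixes $\beta=\tfrac12$ rather than allowing an arbitrary interior $\beta$. Your closing remark about the degenerate case is a genuine observation the paper glosses over: if $\D(\PP_0,\PP_1)=0$ then $R(\alpha)$ is identically zero, which \emph{is} a polynomial and the contradiction evaporates; the argument therefore tacitly requires $\D(\PP_0,\PP_1)>0$ (e.g.\ a characteristic $\F$, or simply that the chosen $\PP_0\neq\PP_1$ are separated by $\D$), and as you note the conclusion can literally fail without it (take $\F=\{0\}$ and $\mathcal P$ equal to the mixture segment, where $T\equiv 0$ is unbiased).
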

\begin{proof}
  We use a technique inspired by \citet{unbiased-convex}.
  Suppose there is an unbiased estimator $\hat\D(\Xset, \Yset)$ of $\D$:
  for some finite $m$ and $n$,
  if $\Xset = \{X_1, \dots, X_m\} \sim \PP^m$, $\Yset \sim \QQ^n$,
  then $\E[ \hat\D(\Xset, \Yset) ] = \D(\PP, \QQ)$.

  Fix $\PP_0$, $\PP_1$, and $\QQ \in \mathcal P$,
  and consider the function
  \begin{align}
     R(\alpha)
  &= \D((1 - \alpha) \PP_0 + \alpha \PP_1, \QQ)
\\&= \int_{X_1} \!\cdots\! \int_{X_m} \! \int_{\Yset} \hat\D(\Xset, \Yset)
     \,\ud\left[ (1 - \alpha) \PP_0 + \alpha \PP_1 \right](X_1)
     \cdots
     \ud\left[ (1 - \alpha) \PP_0 + \alpha \PP_1 \right](X_m)
     \,\ud\QQ(\Yset)
\\&= \int_{X_1} \!\cdots\! \int_{X_m} \! \int_{\Yset} \hat D(\Xset, \Yset)
     \left[ (1 - \alpha) \,\ud\PP_0(X_1) + \alpha \,\ud\PP_1(X_1) \right]
     \cdots
     \,\ud\QQ(\Yset)
\\&= (1 - \alpha)^m \E_{\substack{\Xset \sim \PP_0^m \\ \Yset \sim \QQ^n}}\left[ \hat\D(\Xset, \Yset) \right]
   + \dots
   + \alpha^m \E_{\substack{\Xset \sim \PP_1^m \\ \Yset \sim \QQ^n}}\left[ \hat\D(\Xset, \Yset) \right]
  .\end{align}
  Thus $R(\alpha)$ is a polynomial in $\alpha$ of degree at most $m$.

  But taking $\QQ = \frac12 \PP_0 + \frac12 \PP_1$ gives
  \begin{align}
    R(\alpha)
  &= \D\left((1-\alpha) \PP_0 + \alpha \PP_1, \tfrac12 \PP_0 + \tfrac12 \PP_1 \right)
\\&= \sup_{f \in \F} \E_{(1-\alpha) \PP_0 + \alpha \PP_1} f(X)
                   - \E_{\tfrac12  \PP_0 + \tfrac12 \PP_1} f(Y)
\\&= \sup_{f \in \F} (1-\alpha) \E_{\PP_0} f(X)
                    + \alpha \E_{\PP_1} f(X)
                    - \tfrac12 \E_{\PP_0} f(X)
                    - \tfrac12 \E_{\PP_1} f(X)
\\&= \sup_{f \in \F} \left( \tfrac12 - \alpha \right) \E_{\PP_0} f(X)
                   - \left( \tfrac12 - \alpha \right) \E_{\PP_1} f(X)
\\&= \left\lvert \tfrac12 - \alpha \right\rvert \sup_{f \in \F} \E_{\PP_0} f(X) - \E_{\PP_1} f(X)
\label{eq:R-alpha-abs}
\\&= \left\lvert \tfrac12 - \alpha \right\rvert \D(\PP_0, \PP_1)
\label{eq:R-alpha-ipm}
  ,\end{align}
  where \eqref{eq:R-alpha-abs} used our general assumption about IPMs that if $f \in \F$, we also have $-f \in \F$.
  But $R(\alpha)$ is not a polynomial with any finite degree.
  Thus no such unbiased estimator $\hat\D$ exists.
\end{proof}

Note that the proof of \cref{thm:ipm-always-biased}
does not readily extend to generalized IPMs,
and so does not tell us whether an unbiased estimator of the MMD GAN objective \eqref{eq:sup-mmd} can exist.
Also, attempting to apply the same argument to squared IPMs
would give the square of \eqref{eq:R-alpha-ipm},
which is a quadratic function in $\alpha$.
Thus tells us that although no unbiased estimator for a squared IPM can exist with only $m = 1$ sample point,
one can exist for $m \ge 2$,
as indeed \eqref{eq:unbiasedMMD} does for the squared MMD.

\subsection{Gradient estimator bias} \label{sec:gen-ipm-grad-bias}
We will now show that biased estimators, except for estimators with a constant bias,
must also have biased gradients.

Assume that, as in the GAN setting,
$\Q$ is given by a generator network $G_{\psi}$
with parameter $\psi$ and inputs $Z \sim \Z$,
so that $Y = G_{\psi}(Z) \sim \Q_\psi$.
The generalized IPM of \eqref{eq:j-dist} is now a function of $\psi$, which we will denote as
\[
  \D(\psi) := \D(\PP, \Q_{\psi})
.\]

Consider an estimator $\hat{\D}(\psi)$ of $\D(\psi)$.
\Cref{thm:biased-means-biased-grad} shows that when $\hat{\D}(\psi)$ and $\D(\psi)$ are differentiable,
the gradient $\nabla_{\psi}\hat{\D}(\psi)$ is an unbiased estimator for $\nabla_{\psi}\D(\psi)$  only if the bias of $\hat{\D}(\psi)$ doesn't depend on $\psi$.
This is exceedingly unlikely to happen for the biased estimator $\hat{\D}(W)$ defined in \cref{thm:ipm-bias},
and indeed \cref{thm:ipm-always-biased} shows cannot happen for any IPM estimator.

\begin{theorem} \label{thm:biased-means-biased-grad}
    Let $\D : \Psi \to \R$ be a function on a parameter space $\Psi \subseteq \R^d$,
    with a random estimator $\hat\D : \Psi \to \R$ which is almost surely differentiable.
    Suppose that $\hat\D$ has unbiased gradients:
    \[
        \E[\nabla_{\psi}\hat{\D}(\psi)] = \nabla_{\psi}\D(\psi)
    .\]
    Then, for each connected component of $\Psi$,
    \[
        \E \hat\D(\psi) = D(\psi) + \mathrm{const}
    ,\]
    where the constant can vary only across distinct connected components.
\end{theorem}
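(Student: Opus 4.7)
The plan is to introduce the bias function $b(\psi) := \E \hat\D(\psi) - \D(\psi)$ and show that $b$ is locally constant on $\Psi$; combined with connectedness, this yields the claim. Morally the statement $\nabla b \equiv 0$ follows at once from hypothesis if one is allowed to interchange $\E$ and $\nabla$, because then $\nabla \E \hat\D(\psi) = \E \nabla \hat\D(\psi) = \nabla \D(\psi)$. The cleaner route, which avoids justifying differentiation under the integral sign directly, is via the fundamental theorem of calculus along paths.

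Concretely, I would fix two points $\psi_0, \psi_1$ in the same connected component of $\Psi$ and choose a smooth path $\gamma : [0,1] \to \Psi$ with $\gamma(0) = \psi_0$, $\gamma(1) = \psi_1$ lying entirely in that component. Since $\hat\D$ is almost surely differentiable (and, implicitly, sufficiently regular along the one-dimensional restriction to $\gamma$ for FTC to apply---e.g.\ absolutely continuous in $\psi$ along $\gamma$), the fundamental theorem of calculus gives, almost surely,
\[
    \hat\D(\psi_1) - \hat\D(\psi_0)
    = \int_0^1 \langle \nabla_\psi \hat\D(\gamma(t)),\, \gamma'(t) \rangle \,\ud t
.\]
Taking expectations of both sides and applying Fubini's theorem to swap $\E$ with the $\ud t$ integral, the right-hand side becomes $\int_0^1 \langle \E[\nabla_\psi \hat\D(\gamma(t))],\, \gamma'(t) \rangle \,\ud t$, which by the unbiasedness hypothesis equals $\int_0^1 \langle \nabla_\psi \D(\gamma(t)),\, \gamma'(t) \rangle \,\ud t = \D(\psi_1) - \D(\psi_0)$ by FTC applied to the (differentiable) deterministic function $\D$. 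Rearranging, $b(\psi_1) = b(\psi_0)$, which is exactly what the theorem claims.

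The main obstacle is justifying the Fubini swap, which requires an integrability condition such as $\int_0^1 \E\lvert \langle \nabla_\psi \hat\D(\gamma(t)),\, \gamma'(t) \rangle \rvert \,\ud t < \infty$. This is essentially an unavoidable side condition implicit in the meaningfulness of ``unbiased gradients'' (the expectation $\E[\nabla_\psi \hat\D(\psi)]$ has to exist in an integrable sense), and so should be folded into the hypotheses; given it, the remaining manipulations are routine. A secondary minor subtlety is that almost-sure differentiability at each $\psi$ is weaker than joint almost-sure $C^1$ regularity of $\psi \mapsto \hat\D(\psi)$, which is what the pathwise FTC really needs; in practice one appeals to the fact that the estimators considered in this paper (MMDs of deep-network features, empirical IPMs) are piecewise smooth in $\psi$ with a well-behaved non-differentiable set of measure zero, placing us in exactly the setting covered by Theorem~\ref{thm:main-body}.
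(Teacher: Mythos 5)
Your proof is correct and follows essentially the same route as the paper's: pick a smooth path between two points in a connected component, write the increment of $\hat\D$ as a path integral of its gradient, swap expectation and integral by Fubini, invoke the unbiasedness hypothesis, and recognize the result as the increment of $\D$. You go slightly further than the paper in flagging the integrability condition needed for Fubini and the gap between pointwise almost-sure differentiability and the pathwise regularity the fundamental theorem of calculus actually requires, but the core argument is identical.
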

\begin{proof}
    Let $\psi_1$ and $\psi_2$ be an arbitrary pair of parameter values in $\Psi$,
    connected by some smooth path $r : [0, 1] \to \Psi$
    with $r(0) = \psi_1$, $r(1) = \psi_2$.
    For example, if $\Psi$ is convex, then paths of the form $r(t) = t \psi_1 + (1-t) \psi_2$ are sufficient.
    Using Fubini's theorem and standard results about path integrals, we have that
    \begin{align}
        \E[ \hat{\D}(\psi_1) - \hat{\D}(\psi_2)  ]
        &= \E\left[ \int_{0}^1  \left( \nabla \hat{\D}( r(t) ) \right) \cdot r'(t) \,\ud t \right] \\
        &= \int_0^1 \E\left[ \nabla \hat{\D}( r(t) ) \right] \cdot r'(t) \,\ud t \\
        &= \int_0^1 \left( \nabla \D( r(t) ) \right) \cdot r'(t) \,\ud t \\
        &= \D(\psi_1) - \D(\psi_2)
    .\end{align}
This implies that $\E[\hat{\D}(\psi)] = D(\psi) + \mathrm{const}$
for all $\psi$ in the same connected component of $\Psi$.
\end{proof}

\subsection{WGANs} \label{appendix:wgan-bias}

\Cref{thm:ipm-bias,thm:ipm-always-biased} hold for the original WGANs,
whose critic functions are exactly $L$-Lipschitz,
considering $\F$ as the set of $L$-Lipschitz functions
so that $\D_\F$ is $L$ times the Wasserstein distance.
They also hold for either WGANs or WGAN-GPs
with $\F$ the actual set of functions attainable by the critic architecture,
so that $\D$ is the ``neural network distance'' of \citet{arora:gen-equilibrium}
or the ``adversarial divergence'' of \citet{approx-convergence-props}.

It should be obvious that for nontrivial distributions $\bP$ and $\bQ$
and reasonable selection criteria for $\hat f_{\Xtr, \Ytr}$,
\eqref{eq:j-est:train-perfectly} does not hold,
and thus \eqref{eq:j-est:biased} does (so that the estimate is biased downwards).
\Cref{thm:ipm-always-biased} also shows this is the case on reasonable families of input distributions,
and moreover that the bias is not constant,
so that gradients are biased by \cref{thm:biased-means-biased-grad}.

\paragraph{Example}
For an explicit demonstration,
consider the Wasserstein case,
$\F$ the set of $1$-Lipschitz functions,
with $\bP = \N(1, 1)$ and $\bQ = \N(0, 1)$.
Here $\D_\F(\bP, \bQ) = 1$;
the only critic functions $f \in \F$ which achieve this are $f(t) = t + C$ for $C \in \R$.

If we observe only one training pair $\xtr \sim \bP$ and $\ytr \sim \bQ$,
when $\xtr > \ytr$,
$f_1(t) = t$ is a maximizer of \eqref{eq:ipm:f-hat},
leading to the expected estimate $J(f_1, \bP, \bQ) = 1$.
But with probability $\Phi\left(-1 / \sqrt{2}\right) \approx 0.24$
it happens that
$\xtr < \ytr$.
In such cases,
\eqref{eq:ipm:f-hat} could give e.g.\ $f_{-1}(t) = - t$,
giving the expected response $J(f_{-1}, \bP, \bQ) = -1$;
the overall expected estimate of the estimator using this critic selection procedure is then
$\E \hat\D_\F(\Xfull, \Yfull) = \left( 1 - \Phi\left(-1 / \sqrt 2\right) \right) - \Phi\left(-1 / \sqrt 2 \right) \approx 0.52$.

The only way to achieve
$\E \hat\D_\F(\Xfull, \Yfull) = 1$
would be a ``stubborn'' selection procedure
which chooses $f_1 + C$ no matter the given inputs.
This would have the correct output $\E \hat\D_\F(\Xfull, \Yfull) = 1$ for this $(\bP, \bQ)$ pair.
Applying this same procedure to $\bP = \N(-1, 1)$ and $\bQ = \N(0, 1)$,
however,
would then give $\E \hat\D_\F(\Xfull, \Yfull) = -1$, when it should also be $1$.

\subsection{Maximal MMD estimator} \label{appendix:max-bias}
Recall the distance $\eta(\bP, \bQ) = \sup_\theta \MMD^2(h_\theta(\bP), h_\theta(\bQ))$
defined by \eqref{eq:sup-mmd}.
MMD GANs can be viewed as estimating $\eta$ according to the scheme of \cref{thm:ipm-bias},
with $\F$ the set of possible parameters $\theta$,
$J(\theta, \bP, \bQ) = \MMD^2(h_\theta(\bP), h_\theta(\bQ))$,
and $\hat J(\theta, \Xfull, \Yfull) = \MMD_u^2(h_\theta(\Xfull), h_\theta(\Yfull))$.
Clearly our optimization scheme for $\theta$ does not almost surely yield perfect answers,
and so again we have
\[
    \E \hat\eta(\Xfull, \Yfull) < \eta(\bP, \bQ)
.\]

As $\mtr, \ntr \to \infty$,
as for Wasserstein it should be the case that
$\hat\eta \to \eta$.
This is shown for certain kernels, along with the rate of convergence,
by \citet[Section 4]{kernel-choice-mmd}.

It should also be clear that in nontrivial situations,
this bias is not constant,
and hence gradients are biased by \cref{thm:biased-means-biased-grad}.

\paragraph{Example}
For a particular demonstration, consider
\[
    \bP = \N\left(\begin{bmatrix}1 \\ 0\end{bmatrix}, I\right)
    ,\qquad
    \bQ = \N\left(\begin{bmatrix}0 \\ 0\end{bmatrix}, I\right)
,\]
with $h_\theta : \R^2 \to \R$ given by $h_\theta(x) = \theta\tp x$, $\lVert \theta \rVert = 1$,
so that $h_\theta$ chooses a one-dimensional projection of the two-dimensional data.
Then use the linear kernel $k^\DOT$,
so that the MMD is simply the difference in means between projected features:
\[
    \MMD^2(h_\theta(\bP), h_\theta(\bQ))
    = \lVert \E \theta\tp X - \E \theta\tp Y \rVert^2
    = \theta_1^2
,\]
and
\[
     \E \hat\eta(\Xfull, \Yfull)
   = \E_{\Xtr,\Ytr}\left[ \left( \hat\theta_{\Xtr,\Ytr} \right)_1^2 \right]
.\]

Clearly $\eta(\bP, \bQ) = 1$,
which is obtained by $\theta \in \{ (-1, 0), (1, 0) \}$;
any other valid $\theta$ will yield a strictly smaller value of $\MMD^2(h_\theta(\bP), h_\theta(\bQ))$.

The MMD GAN estimator of $\eta$, if the optimum is achieved, uses
\begin{align}
    \hat\theta_{\Xtr,\Ytr}
  &= \argmax_{\theta : \lVert \theta \rVert = 1} \MMD_u^2(\Xtr, \Ytr)
\\&= \argmax_{\theta : \lVert \theta \rVert = 1}
        \frac{1}{\mtr (\mtr - 1)} \sum_{i \ne j}^{\mtr} (\theta\tp \xtr_i) ({\xtr_j}\tp \theta)
      + \frac{1}{\ntr (\ntr - 1)} \sum_{i \ne j}^{\ntr} (\theta\tp \ytr_i) ({\ytr_j}\tp \theta)
\\&\phantom{\argmax_{\theta : \lVert \theta \rVert = 1}}\qquad
      - \frac{2}{\mtr \ntr} \sum_{i=1}^{\mtr} \sum_{j=1}^{\ntr} (\theta\tp \xtr_i) ({\ytr_j}\tp \theta)
\\&= \argmax_{\theta : \lVert \theta \rVert = 1} \theta\tp A_{\Xtr,\Ytr} \theta
,\end{align}
where $A_{\Xtr,\Ytr} \in \R^{2 \times 2}$.
$\hat\theta_{\Xtr,\Ytr}$ is then the normalized first eigenvector of $A_{\Xtr,\Ytr}$.
For Gaussian $\Xtr$, $\Ytr$ with finite $\mtr$, $\ntr$,
$\hat\theta_{\Xtr,\Ytr}$ is a continuous random variable
and so does not almost surely lie in $\{(-1, 0), (1, 0)\}$;
thus by \cref{thm:ipm-bias},
$\E \hat\eta(\Xfull, \Yfull) < \eta(\bP, \bQ) = 1$.
(A numerical simulation for the former gives a value around $0.6$ when $\mtr = \ntr = 2$.)

\section{Proof of unbiased gradients} \label{appendix:proof}

We now proceed to prove \cref{thm:main-body}
as a corollary to the \cref{thm:interversion}, our main result about exchanging gradients and expectations of deep networks.

Exchanging the gradient and the expectation can often be guaranteed using a standard result in measure theory (see \cref{thm:exchanging_gradient_expectation}), as a corollary of the Dominated Convergence theorem (\cref{thm:DC}).
This result, however, requires the property \cref{thm:exchanging_gradient_expectation}.\ref{diff}:
for almost all inputs $X$, the mapping is differentiable on the entirety of a neighborhood around $\theta$.
This order of quantifiers is important:
it allows the use of the mean value theorem to control the average rate of change of the function,
and the result then follows from \cref{thm:DC}.

For a neural network with the ReLU activation function, however, this assumption doesn't hold in general.
For instance, if $\theta = (\theta_1, \theta_2)\in \R^2$ with $\theta_2 \neq 0$ and $X\in \R$,
one can consider this very simple function:
$h_{\theta}(X) = \max(0, \theta_1 + \theta_2 X)$.
For any fixed value of $\theta$, the function $h_{\theta}(X)$ is differentiable in $\theta$
for all $X $ in  $\R$ except for $X_{\theta} = - \theta_1 / \theta_2$.
However, if we consider a ball of possible $\theta$ values $B(\theta, r)$,
the function is not differentiable on the set
$\left\{ - \theta_1' / \theta_2'  \in \R \mid \theta' \in B(\theta, r)  \right\}$,
which can have positive measure for many possible distributions for $X$.

In \cref{thm:interversion},
we provide a proof that derivatives and expectations can be exchanged for all parameter values outside of a ``bad set'' $\Theta_\PP$,
without relying on \cref{thm:exchanging_gradient_expectation}.\ref{diff}.
This can be done using \cref{network_growth},
which takes advantage of the particular structure of neural networks to control the average rate of change without using the mean value theorem.
Dominated convergence (\cref{thm:DC}) can then be applied directly.

We also show in \cref{main_zero_measure} that the set $\Theta_\PP$,
of parameter values where \cref{thm:interversion} might not hold,
has zero Lebesgue measure.
This relies on the standard Fubini theorem \citep[Theorem 14.16]{Klenke:2008}
and \cref{zero_measure},
which ensures that the network $\theta \mapsto h_{\theta}(X)$ is differentiable for almost all parameter values $\theta$ when $X$ is fixed.
Although \cref{zero_measure} might at first sight seem obvious,
it requires some technical considerations in topology and differential geometry.

\begin{prop}[{{Differentiation Lemma \citep[e.g.][Theorem 6.28]{Klenke:2008}}}] \label{thm:exchanging_gradient_expectation}
    Let $V$ be a nontrivial open set in $\R^m$ and let $\PP$ be a probability distribution on $\R^d $. Define a map $h: \R^d \times V \mapsto \R^n $ with the following properties:
    \begin{enumerate}[label=(\roman*)]
        \item For any $\theta \in V $, $\E_{\PP}[ \lVert h_{\theta}(X) \rVert ] < \infty$.
        \item \label{diff} For $\PP$-almost all $X \in \R^d$, the map $V \to \R^n$, $\theta \mapsto h_{\theta}(X) $ is differentiable.
        \item There exists a $\PP$-integrable function $g : \R^d\mapsto \R $ such that $\Vert  \partial_{\theta} h_{\theta}(X)\Vert \leq g(X) $ for all $\theta\in V$.
    \end{enumerate}
    Then, for any $\theta \in V$, $ \E_{\PP}[ \lVert \partial_{\theta} h_{\theta}(X) \rVert ]< \infty$ and the function $  \theta \mapsto \E_{\PP}[ h_{\theta}(X) ] $ is differentiable with differential:
        \[
        \partial_{\theta} \E_{\PP}[  h_{\theta}(X) ] =  \E_{\PP}[ \partial_{\theta} h_{\theta}(X) ]
       .\]
\end{prop}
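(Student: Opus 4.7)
The plan is to reduce the claim to two applications of the Dominated Convergence Theorem (\cref{thm:DC}): first to establish each partial derivative of $F(\theta) := \E_\PP[h_\theta(X)]$, and then to upgrade componentwise derivatives to full (Fr\'echet) differentiability with the stated differential. Fix $\theta \in V$; since $V$ is open, choose $\rho > 0$ with $B(\theta, \rho) \subset V$. Let $N$ denote the $\PP$-null set off of which $\theta' \mapsto h_{\theta'}(X)$ is differentiable on $V$, as guaranteed by (ii); on $N^c$ the derivative is bounded in operator norm by $g(X)$ via (iii).

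For a coordinate direction $e_i$ and $0 < |t| < \rho$, set $\phi_t(X) := (h_{\theta + t e_i}(X) - h_\theta(X))/t$. On $N^c$ the one-variable map $s \mapsto h_{\theta + s e_i}(X)$ is differentiable on $[0, t]$ with derivative $\partial_{\theta_i} h_{\theta + s e_i}(X)$ bounded in norm by $g(X)$, hence Lipschitz with constant $g(X)$; this gives $\lVert \phi_t(X) \rVert \le g(X)$ uniformly in $t$. Since $\phi_t(X) \to \partial_{\theta_i} h_\theta(X)$ pointwise on $N^c$, \cref{thm:DC} yields
\[
\partial_{\theta_i} F(\theta) \;=\; \lim_{t \to 0} \E_\PP[\phi_t(X)] \;=\; \E_\PP[\partial_{\theta_i} h_\theta(X)],
\]
and $\E_\PP[\lVert \partial_\theta h_\theta(X) \rVert] < \infty$ follows immediately from (iii).

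To upgrade to Fr\'echet differentiability, take $h \in \R^m$ with $0 < \lVert h \rVert < \rho$. On $N^c$, differentiability at $\theta$ furnishes a remainder $R(h, X) := h_{\theta + h}(X) - h_\theta(X) - \partial_\theta h_\theta(X) \cdot h$ with $\lVert R(h, X) \rVert / \lVert h \rVert \to 0$ as $h \to 0$. Bounding $\lVert h_{\theta + h}(X) - h_\theta(X) \rVert$ by $\lVert h \rVert \, g(X)$ via the same Lipschitz-along-the-segment argument applied to $s \mapsto h_{\theta + s h/\lVert h \rVert}(X)$, and bounding $\lVert \partial_\theta h_\theta(X) \cdot h \rVert$ by $\lVert h \rVert \, g(X)$ via (iii), yields the uniform domination $\lVert R(h, X) \rVert / \lVert h \rVert \le 2 g(X)$. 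Applying \cref{thm:DC} to $\lVert R(h_n, X) \rVert / \lVert h_n \rVert$ along any sequence $h_n \to 0$ gives $\lVert \E_\PP[R(h_n, X)] \rVert = o(\lVert h_n \rVert)$, so
\[
F(\theta + h) - F(\theta) \;=\; \E_\PP[\partial_\theta h_\theta(X)] \cdot h + o(\lVert h \rVert),
\]
which is precisely Fr\'echet differentiability of $F$ at $\theta$ with the claimed differential.

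The one delicate point is securing the uniform integrable majorant $g(X)$ for the difference quotients and the normalized remainder. The classical Mean Value Theorem fails for vector-valued maps, but this is bypassed by the elementary fact that any everywhere-differentiable map on an interval whose derivative is bounded in norm by $M$ is $M$-Lipschitz; this single observation underwrites both dominations, after which the two invocations of \cref{thm:DC} are entirely routine.
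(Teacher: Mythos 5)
Your proof is correct; the paper itself gives no proof of this proposition, citing it as a standard textbook result (Klenke, Theorem 6.28), and your argument is precisely the classical one behind that citation: dominate the difference quotients by $g$ and invoke dominated convergence. You also correctly handle the one point where the scalar, one-parameter textbook statement needs care in this vector-valued, multi-parameter setting --- replacing the mean value theorem by the mean value inequality along segments, and then passing from the uniformly dominated normalized remainder to full Fr\'echet differentiability via sequences.
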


\begin{prop}[{{Dominated Convergence Theorem \citep[e.g.][Corollary 6.26]{Klenke:2008}}}]\label{thm:DC}
    Let $\PP$ be a probability distribution on $\R^d$ and $f$ a measurable function.
    Let $(f_n)_{n\in \n}$ be a sequence of of integrable functions such that for $\PP$-almost all $X \in \R^d$, $f_n(X) \to f(X) $ as $n$ goes to $\infty$.
    Assume that there is a dominating function $g$: $\lVert f_n(X) \rVert \le g(X)$ for $\PP$-almost all $X\in \R^d$ for all $n\in \n$, and $\E_{\PP}[g(X)] < \infty$.
    Then $f$ is $\PP$-integrable, and $ \E_{\PP}[f_n(X)] \rightarrow \E_{\PP}[f(X)]   $ as $n$ goes to $\infty$.
\end{prop}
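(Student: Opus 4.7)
The plan is to derive this classical result via Fatou's lemma, exploiting the dominating function $g$ to convert the dominated sequence into a non-negative one on which Fatou's inequality can be invoked. I would first verify integrability of $f$, then apply Fatou to an auxiliary non-negative sequence built from $g$ and $\lVert f_n - f \rVert$, and finally extract convergence of expectations.

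For the preliminary step, I would observe that on the full-$\PP$-measure set where both pointwise convergence $f_n(X) \to f(X)$ and pointwise domination $\lVert f_n(X) \rVert \le g(X)$ hold, passing to the limit in the norm inequality yields $\lVert f(X) \rVert \le g(X)$ almost surely. Integrability of $g$ then gives $\E_{\PP}[\lVert f(X) \rVert] \le \E_{\PP}[g(X)] < \infty$, so $f$ is $\PP$-integrable and hence the right-hand side of the claimed limit is well defined.

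For the main step, I would define the auxiliary sequence $h_n(X) := 2 g(X) - \lVert f_n(X) - f(X) \rVert$. By the triangle inequality combined with the domination hypothesis, $\lVert f_n(X) - f(X) \rVert \le \lVert f_n(X) \rVert + \lVert f(X) \rVert \le 2 g(X)$ almost surely, so $h_n \ge 0$ a.s.; moreover $h_n(X) \to 2 g(X)$ a.s.\ by the pointwise convergence hypothesis. Applying Fatou's lemma to $h_n$ and using linearity of expectation gives
\[
    2\, \E_{\PP}[g(X)] = \E_{\PP}\bigl[\liminf_n h_n(X)\bigr] \le \liminf_n \E_{\PP}[h_n(X)] = 2\, \E_{\PP}[g(X)] - \limsup_n \E_{\PP}[\lVert f_n(X) - f(X) \rVert],
\]
so $\limsup_n \E_{\PP}[\lVert f_n(X) - f(X) \rVert] \le 0$. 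Since this quantity is non-negative, it converges to $0$, and the triangle inequality $\lVert \E_{\PP}[f_n(X)] - \E_{\PP}[f(X)] \rVert \le \E_{\PP}[\lVert f_n(X) - f(X) \rVert]$ yields $\E_{\PP}[f_n(X)] \to \E_{\PP}[f(X)]$.

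The main underlying obstacle is Fatou's lemma itself, which is proved via the Monotone Convergence Theorem applied to the increasing sequence $\inf_{k \ge n} f_k$. Since the proposition is invoked in the paper purely as a textbook fact cited from Klenke, it would not actually be proved in the paper; the sketch above merely traces the standard route from the Monotone Convergence Theorem, which is typically taken as the foundational axiom of Lebesgue integration. A minor subtlety worth flagging is the vector-valued setting implicit in the use of $\lVert \cdot \rVert$: the sandwich-style proof via Fatou applied to $g \pm f_n$ must be replaced by the variant above using $2g - \lVert f_n - f \rVert$, which handles arbitrary dimension uniformly.
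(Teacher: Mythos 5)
Your proof is correct: the Fatou argument applied to $h_n = 2g - \lVert f_n - f\rVert$ is the standard derivation, and you handle the vector-valued case properly (including the final step, which uses $\lVert \E[f_n - f]\rVert \le \E[\lVert f_n - f\rVert]$). The paper itself offers no proof of this proposition~--~it is stated purely as a cited textbook fact from Klenke~--~so there is nothing to compare against, and your blind reconstruction is exactly the classical argument one would expect to find in that reference.
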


\subsection{Network definition} \label{sec:proof:notation}

We would like to consider general feed-forward networks with a directed acyclic computation graph $G$.
Here, $G$ consists of $L+1$ nodes, with a root node $i=0 $ and a leaf node $i=L$. We denote by $\pi(i) $ the set of parent nodes of $i$. The nodes are sorted according to a topological order: if $j$ is a parent node of $i$, then $j < i$.
Each node $i$ for $i > 0$ computes a function $f_i$,
which outputs a vector in $\R^{d_i}$
based on its input in $\R^{d_{\pi(i)}}$,
the concatenation of the outputs of each layer in $\pi(i)$.
Here $d_{\pi(i)} = \sum_{j \in \pi(i)} d_j$, and $d_0 = d > 0$.

We define the feed-forward network that factorizes according the graph $G$ and with functions $f_i$ recursively:
\begin{align}
h^{0} &= X\\
h^{i} &= f_i(h^{\pi(i)}) \qquad \forall\; 0 < i \leq L
,\end{align}
where $h^{\pi(i)}$ is the concatenation of the vectors $h^{j}$ for $j\in \pi(i)$.
The functions $f_i$ can be of two types:
\begin{itemize}
    \item \textit{Affine transform (Linear Module)}:
    \begin{align}
        f_i(Y) =  g_i(W^{i}) \begin{bmatrix}Y \\ 1\end{bmatrix}
    \end{align}
    $W^{i}$ is a vector of dimension $m_i$.
    The function $g_i : \R^{m_i} \to \R^{d_i \times (d_{\pi(i)} + 1)}$ is a known linear operator on the weights $W^i$, which can account for convolutions and similar linear operations.
    We will sometimes use $\widetilde{Y}$ to denote the augmented vector  $\begin{bmatrix}Y \\ 1\end{bmatrix}$, which accounts for bias terms.
    \item \textit{Non-linear}: These $f_i$ have no learnable weights.
    $f_i$ can potentially be non-differentiable, such as max pooling, ReLU, and so on.
    Some conditions on $f_i$ will be required (see \cref{Diff_set}); the usual functions used in practice satisfy these conditions.
    \end{itemize}

Denote by $C$ the set of nodes $i$ such that $f_i$ is non-linear.  $\theta$ is the concatenation of parameters of all linear modules: $\theta = (W^{k})_{ k\in C^{c}  }$, where $C^{c}$ is the complement of $C$ in $[L] = \{1,...,L\}$.
Call the total number of parameters $m = \sum_{i \in C^c} m_i$, so that $\theta \in \R^{m}$.
The feature vector of the network corresponds to the output of the last node $L$ and will be denoted $h_{\theta} := h_{\theta}^{L} \in \R^{d_L}$. The subscript $\theta$ stands for the parameters of the network.
We will sometimes use $h_\theta(X)$ to denote explicit dependence on $X$,
or omit it when $X$ is fixed.

Also define a ``top-level function'' to be applied to $h_\theta$, $\K : \R^{d_L}\to \R$.
This function might simply be $\K(U) = U$, as in \cref{cor:wgan,cor:gan}.
But it also allows us to represent the kernel function of an MMD GAN in \cref{cor:mmd-gan}:
here we take $X$ to be the two inputs to the kernel stacked together,
apply the network to each of the two inputs with the same parameters in parallel,
and then compute the kernel value between the two representations with $\K$.
$\K$ will have different smoothness assumptions than the preceding layers (\cref{kernel_growth}).

\subsection{Assumptions} \label{sec:proof:assumptions}
We will need the following assumptions at various points,
where $\alpha \ge 1$:
\begin{assumplist}
\item \label{moments} (Moments) $\E_{\PP}[\lVert X \rVert^\alpha ]< \infty$.
\item \label{kernel_growth} The function $\K$ is continuously differentiable, and satisfies the following growth conditions where $C_0$ and $C_1$ are constants:
 \begin{align}
    \lvert \K(U) \rvert &\leq C_0 ( \Vert U\Vert^{\alpha}    +1) \\
    \Vert \nabla  \K(U) \Vert &\leq  C_1 ( \Vert U\Vert^{\alpha-1}  + 1)
.\end{align}
\item \label{Lipschitz} (Lipschitz nonlinear layers) For each $i \in C$, $f_i$ is $M$-Lipschitz.
\item \label{Diff_set} (Analytic pieces)
    For each nonlinear layer $f_i$, $i \in C$,
    there are $K_i$ functions $(f_i^{k})_{k \in [K_i]}$,
    each real analytic on $\R^{d_{\pi(i)}}$,
    which agree with $f_i$ on the closure of a set $\D_i^k$:
    \[
        f_i(Y) = f_i^{k}(Y) \qquad \forall Y\in \bar{\mathcal{D}}_{i}^{k}
    .\]
    These sets $\D_i^k$ are disjoint,
    and cover the whole input space: $\bigcup_{k=1}^{K_i} \bar\D_i^k = \R^{d_{\pi(i)}}$.
    Moreover, each $\D_i^k$ is defined by $S_{i,k}$ real analytic functions
    $G_{i,k,s} : \R^{d_{\pi(i)}} \to \R$ as
    \[
        \D_i^k = \left\{
            Y \in \R^{d_{\pi(i)}}
            \mid
            \forall s \in [S_{i,k}],
            G_{i,k,s}(Y) > 0
        \right\}
    .\]
\end{assumplist}

Note that \cref{kernel_growth} is satisfied by the function $K(U) = U$, used in \cref{cor:wgan,cor:gan}, with $\alpha = 1$.
It is also satisfied by the top-level functions of an MMD GAN with each of the kernels we consider in this work; see \cref{cor:mmd-gan}.

\Cref{Lipschitz,Diff_set}
are satisfied by the vast majority of deep networks used in practice.

For example, if $f_i$ computes the ReLU activation function on two inputs,
then we have $K_i = 4$,
with each $\D_i^k$ corresponding to a quadrant of the real plane (see \cref{RELU_domains}).
These quadrants might each be defined by $S_{i,k} = 2$ inequalities of the form $G_{i,k,1}(Y)>0$ and $G_{i,k,2}(Y)>0$, where $G_{i,k,s}(Y) = \pm Y_1$ and $G_{i,k,s}(Y) = \pm Y_2$ are analytic. Moreover, on each of these domains $f_i$ coincides with an analytic function:
\begin{align}
    f_i(Y) = \left\{\begin{array}{ll}
        (Y_1,Y_2); & Y\in \D_i^1\\
        (Y_1,0); & Y\in \D_i^2\\
        (0,0); & Y\in \D_i^3\\
        (0,Y_2); & Y\in \D_i^4
    \end{array}\right.
.\end{align}

Another example is when $f_i$ computes max-pooling on two inputs. In that case we have $K_i = 2$, and each domain $\D_i^k$ corresponds to a half plane (see \cref{Max_domains} ). Each domain is defined by one inequality $ G_{i,k,1}(Y) > 0 $   with $G_{i,1,1}(Y) =  Y_1-Y_2  $ and $G_{i,2,1}(Y) =  Y_2-Y_1$. Again, $G_{i,k,1}$ are analytic functions and $f_i$ coincides with an analytic function on each of the domains:

\begin{align}
    f_{i}(Y) = \left\{\begin{array}{ll}
        Y_1; & Y\in \D_i^1\\
        Y_2; & Y\in \D_i^2\\
    \end{array}\right.
.\end{align}

When $f_i$ is analytic on the whole space,
$\D_i = \R^{d_{\pi(i)}}$,
we can choose $K_i = 1$ to get $\D_i^1 = \R^{d_{\pi(i)}}$,
which can be defined by a single function ($S_{i,k} = 1$) of $G_{i,1,1}(Y) = 1$.
This case corresponds to most of the differentiable functions used in deep learning,
such as the softmax, sigmoid, hyperbolic tangent, and batch normalization functions.

Other activation functions, such as the ELU \citep{elu}, are piecewise-analytic and also satisfy \cref{Lipschitz,Diff_set}.

\begin{figure*}[t!]
    \begin{subfigure}[t]{0.5\linewidth}
        \includegraphics[height=0.7\linewidth]{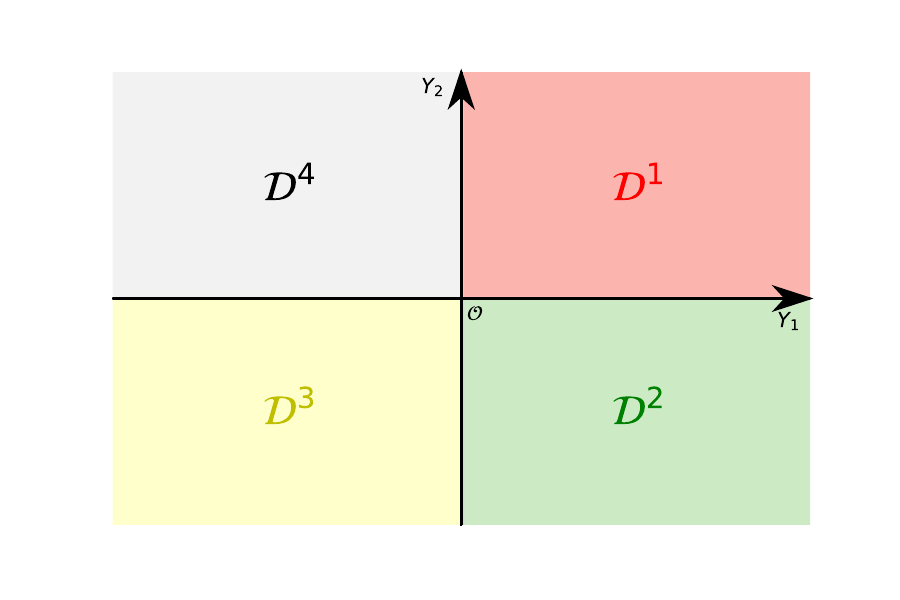}
        \caption{Domains of analyticity for the ReLU function in $\R^2$.}
        \label{RELU_domains}
    \end{subfigure}%
    ~
    \begin{subfigure}[t]{0.5\linewidth}
        \includegraphics[height=0.7\linewidth]{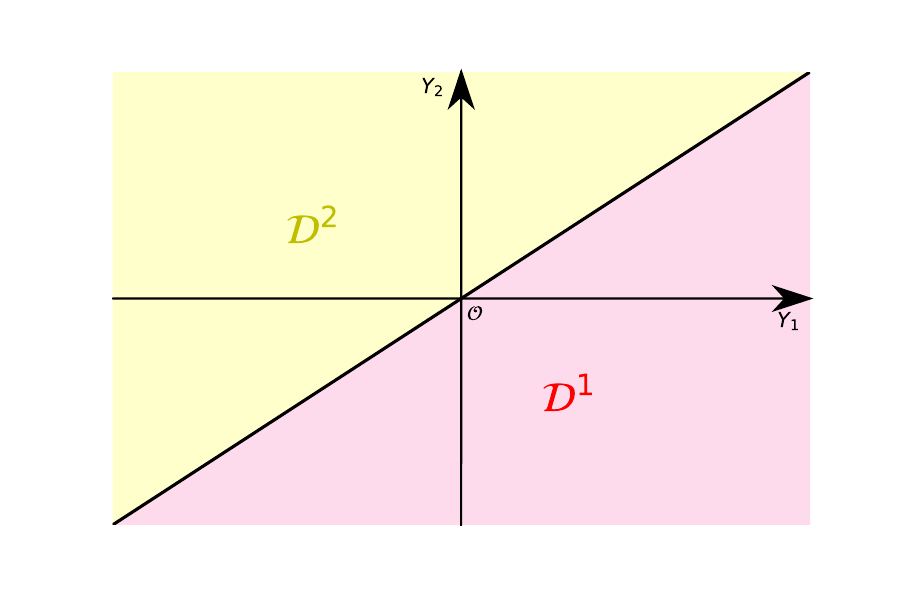}
        \caption{Domains of analyticity for max-pooling in $\R^2$.}
        \label{Max_domains}
    \end{subfigure}
    \caption{
    Illustrations of the domains of analyticity for two functions used in deep learning.
    Panel \subref{RELU_domains} represents the domains of analyticity for the ReLU function in $\R^{2}$.
    Each of the four domains $\mathcal{D}^1$, $\mathcal{D}^2$, $\mathcal{D}^3$ and $\mathcal{D}^4$ can be described by two inequalities of the form $ G_{k,1}(Y) >0 $ and $G_{k,2}(Y)>0$, where $k$ is the index of each domain.
    Here, $G_{k,1}(Y) = \pm Y_1$ and $G_{k,2}(Y) = \pm Y_2$ which are analytic functions.
    Panel \subref{Max_domains} represents the domains of analyticity for the max-pooling function in $\R^2$.
    $\mathcal{D}^1$ and $\mathcal{D}^2$ are each one described by one inequality of the form $ G_{k,1}(Y)>0 $ with $G_{1,1}(Y) = Y_1-Y_2$  and $G_{2,1}(Y) = Y_2-Y_1$, which are both analytic functions. }
    \label{fig:domains}
\end{figure*}

\subsection{Main results} \label{sec:proof:proofs}

We first state the main result, which implies \cref{thm:main-body} via \cref{cor:mmd-gan,cor:wgan,cor:gan}.
The proof depends on various intermediate results which will be established afterwards.

\begin{theorem}
 \label{thm:interversion}
Under \cref{moments,kernel_growth,Lipschitz,Diff_set}, for $\mu$-almost all $ \theta_0 \in \R^m$ the function $\theta \mapsto \E_{\PP}[ \K(h_{\theta}(X)) ]$ is differentiable at $\theta_0$, and
        \begin{align}
                    \partial_{\theta}\E_{\PP}[ \K(h_{\theta}(X)) ] = \E_{\PP}[ \partial_{\theta} \K(h_{\theta}(X))]
        ,\end{align}
where $\mu$ is the Lebesgue measure.
\end{theorem}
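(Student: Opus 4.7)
The strategy is to bypass \cref{thm:exchanging_gradient_expectation}, whose hypothesis \ref{diff} fails for ReLU-type networks---for a fixed $\theta_0$ there is in general no neighborhood on which $\theta \mapsto h_\theta(X)$ is differentiable for $\PP$-almost every $X$---and apply Dominated Convergence (\cref{thm:DC}) directly to the scalar difference quotients
\[
    \Delta_n(X) = \frac{1}{t_n}\bigl[ \K(h_{\theta_0 + t_n e}(X)) - \K(h_{\theta_0}(X)) \bigr]
\]
along each coordinate direction $e$, with $t_n \to 0$. Two ingredients are then needed: pointwise almost-sure convergence of $\Delta_n$ for Lebesgue-almost every $\theta_0$, and a $\PP$-integrable dominating function valid uniformly in $n$.

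For the dominating function I would prove a \emph{network growth} lemma by induction on the depth $L$, exploiting that each linear layer produces polynomially bounded output in $\lVert X \rVert$ and $\lVert \theta \rVert$, and that each nonlinear layer is $M$-Lipschitz by \cref{Lipschitz} and thus preserves the growth rate. Applied along the segment joining $\theta_0$ and $\theta_0 + t_n e$ (splitting into finitely many sub-segments on which the network is piecewise analytic), this yields a uniform bound of the form $\lvert \Delta_n(X) \rvert \le C(1 + \lVert X \rVert^\alpha)$ for $\theta$ in a bounded neighborhood of $\theta_0$. Combined with the growth condition \cref{kernel_growth} on $\K$ and $\nabla \K$ and the moment assumption \cref{moments}, this provides the required integrable envelope.

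The heart of the proof is then to show that the bad set
\[
    \Theta_\PP = \left\{ \theta_0 \in \R^m : \theta \mapsto h_\theta(X) \text{ is non-differentiable at } \theta_0 \text{ with positive } \PP\text{-probability}\right\}
\]
has Lebesgue measure zero. I would establish the stronger statement that, for every fixed $X$, the set $N^X = \{\theta : \theta \mapsto h_\theta(X) \text{ is non-differentiable at } \theta\}$ has Lebesgue measure zero in $\R^m$, and then integrate the indicator of $N = \{(\theta,X) : \theta \in N^X\}$ in the product measure $\mu \otimes \PP$ via Fubini. Non-differentiability at $(\theta,X)$ can occur only when the input $h_\theta^{\pi(i)}(X)$ to some nonlinear node $i$ crosses the boundary between two pieces $\D_i^k, \D_i^{k'}$; by \cref{Diff_set} this boundary is contained in the zero set of some real-analytic $G_{i,k,s}$. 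Inductively, on each combinatorial ``branch'' specifying which piece each nonlinear node uses, the map $\theta \mapsto h_\theta^{\pi(i)}(X)$ is real analytic, so $\theta \mapsto G_{i,k,s}(h_\theta^{\pi(i)}(X))$ is real analytic on that branch, and its zero set has Lebesgue measure zero unless the function vanishes identically---in which case the boundary is immaterial because the two analytic pieces $f_i^k$ and $f_i^{k'}$ agree on the closures and hence the composition is smooth across it.

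The main obstacle will be precisely this inductive nontriviality step, carefully handling the finitely many combinatorial branches (indexed by the piece selections across all nonlinear nodes) and their interactions: one must verify that either each relevant analytic composition is nontrivial, or the associated non-differentiability is only apparent. This is where the topological and differential-geometric considerations alluded to in the outline enter, and where care is needed to rule out pathological configurations in which the output of one nonlinear layer systematically lands on a lower-dimensional subset of the input domain of a subsequent nonlinearity.
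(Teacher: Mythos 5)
Your overall strategy matches the paper's proof almost exactly: bypass the Differentiation Lemma, establish a $\PP$-integrable Lipschitz-type envelope via a network growth lemma (the paper's \cref{network_growth} and \cref{Lipschitz kernel}), apply dominated convergence directly, and dispose of the ``bad set'' of parameters by showing $\mu(\Theta_X) = 0$ for each fixed $X$ (the paper's \cref{zero_measure}) and integrating via Fubini (\cref{main_zero_measure}), using that the non-differentiability locus sits inside zero sets of real-analytic functions of $\theta$ on each combinatorial branch.

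There is, however, one genuine gap in the first step. You apply dominated convergence to scalar difference quotients $\Delta_n(X) = t_n^{-1}[\K(h_{\theta_0 + t_n e}(X)) - \K(h_{\theta_0}(X))]$ along coordinate directions $e$. As written this only yields existence of partial derivatives of $\theta \mapsto \E_\PP[\K(h_\theta(X))]$ and the interchange of $\E$ with those partials; it does not by itself give the full (Fr\'echet) differentiability claimed in the theorem, since a function can have all partial derivatives at a point without being differentiable there. The paper avoids this by taking an \emph{arbitrary} sequence $\theta_n \to \theta_0$, bounding the normalized first-order remainder $\mathcal M_n(X) = \lVert \theta_n - \theta_0 \rVert^{-1} \lvert \K(h_{\theta_n}(X)) - \K(h_{\theta_0}(X)) - \partial_\theta \K(h_{\theta_0}(X))(\theta_n - \theta_0)\rvert$ by the same envelope, showing $\E_\PP[\mathcal M_n(X)] \to 0$, and invoking the sequential characterization of the differential (\cref{seq_lemma}). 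Your envelope already works uniformly in direction, so the fix is mechanical, but it needs to be done.

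A smaller point: on the degenerate case where the analytic function $\theta \mapsto G_{i,k,s}(f^q(\theta))$ vanishes identically, you argue directly that the composition is smooth because the pieces agree on the boundary, whereas the paper (\cref{locally_null_sets}) rules this case out by a topological argument — if the zero set were all of $\R^m$ then a whole ball around $\theta$ would lie in $S^{\partial i}$, contradicting $\theta \in \partial S^{\partial i}$. Both routes close the case, but you should be careful: $f_i^k$ and $f_i^{k'}$ agree only on the common boundary set (a lower-dimensional slice), not on a neighborhood, so ``smooth across it'' holds only because the (analytic extension of the) trajectory $\theta \mapsto f^q(\theta)$ lies \emph{entirely} in that boundary set, which is precisely what the identical vanishing gives you. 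Also note you must formulate the covering of $\Theta_X$ over all nodes and branches with the same care as the paper's recursion on $\partial S^{\neg i}$; a single degenerate $\mathcal M_s$ at one node does not protect $\theta$ from being non-differentiable due to a non-degenerate zero set at another node.
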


\begin{proof}
Let $\theta_0$ be such that the function $\theta \mapsto h_\theta(X)$ is differentiable at $\theta_0$ for $\PP$-almost all $X$.
By \cref{main_zero_measure},
this is the case for $\mu$-almost all $\theta_0$ in $\R^m$.

Consider a sequence $(\theta_n)_{n\in \n} $ that converges to $\theta_0$;
there is then an $R>0$ such that $\lVert\theta_n - \theta_0 \rVert < R$ for all $n\in\n$.
Letting $X$ be in $\R^d$, \cref{Lipschitz kernel} gives that
\begin{align}
    \lvert \K(h_{\theta_n}(X)) -  \K(h_{\theta_0}(X)) \rvert &\leq
    F(X) \lVert \theta_n - \theta_0 \rVert
\end{align}
with $\E_{\PP}[F(X)] < \infty $.
It also follows that:
\begin{align}
    \vert \partial_{\theta} K(h_{\theta_0}(X))   \vert\leq F(X)
\end{align}
for $ \PP $-almost all $X\in\R^d$.
The sequence  $\mathcal{M}_n(X) $ defined by:
\begin{align}
    \mathcal{M}_n(X)
    = \frac{1}{\lVert \theta_n - \theta_0 \rVert}
    \lvert \K(h_{\theta_n}(X)) - \K(h_{\theta_0}(X)) - \partial_{\theta} \K(h_{\theta_0}(X)) (\theta_n - \theta_0) \rvert
\end{align}
converges point-wise to $0$ and is bounded by the integrable function $2F(X)$.
Therefore by the dominated convergence theorem (\cref{thm:DC}) it follows that
\[\E_{\PP}[\mathcal{M}_n(X)] \rightarrow 0 .\]

Finally we define the sequence
\begin{align}
    \mathcal{R}_n =  \frac{1}{\Vert \theta_n - \theta_0 \Vert}
\bigl\lvert \E_{\PP}[\K(h_{\theta_n}(X))] -  \E_{\PP}[\K(h_{\theta_0}(X))] -  \E_{\PP}[\partial_{\theta}\K(h_{\theta_0}(X))](\theta_n - \theta_0) \bigr\rvert
,\end{align}
which is upper-bounded by $\E_{\PP}[\mathcal{M}_n(X)]$ and therefore converges to $0$.
By the sequential characterization of limits in \cref{seq_lemma},
it follows that $\E_{\PP}[ \K(h_{\theta}(X)) ]$ is differentiable at $\theta_0$,
and its differential is given by $\E_{\PP}[ \partial_{\theta} \K(h_{\theta}(X))]$.
\end{proof}

These corollaries of \cref{thm:interversion} apply it to specific GAN architectures.
Here we use the distribution $\Z$ to represent the noise distribution.

\begin{corollary}[WGANs] \label{cor:wgan}
    Let $\PP$ and $\Z$ be two distributions, on $\X$ and $\mathcal Z$ respectively,
    each satisfying \cref{moments} for $\alpha = 1$.
    Let $G_{\psi} : \mathcal Z \to \X$ be a generator network
    and $D_{\theta} : \X \to \R$ a critic network,
    each satisfying \cref{Lipschitz,Diff_set}.
    Then, for $\mu$-almost all $(\theta, \psi)$, we have that
    \[
        \E_{\substack{\Xset \sim \PP^m\\\Zset \sim \Z^n}}\left[ \partial_{\theta,\psi}\left[
            \frac1m \sum_{i=1}^m D_\theta(X_i)
            - \frac1n \sum_{j=1}^n D_\theta(G_{\psi}(Z_j))
        \right] \right]
        = \partial_{\theta,\psi}\left[ \E_{X \sim \PP} D_\theta(X) - \E_{Z \sim \Z} D_\theta(G_{\psi}(Z)) \right]
    .\]
\end{corollary}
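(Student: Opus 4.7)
The plan is to reduce the claim to two separate term-by-term applications of \cref{thm:interversion}, both with the identity top-level function $\K(U) = U$. First, by linearity of the gradient, the inner bracket splits as
\[
\frac1m \sum_{i=1}^m \partial_{\theta,\psi} D_\theta(X_i) - \frac1n \sum_{j=1}^n \partial_{\theta,\psi} D_\theta(G_{\psi}(Z_j)).
\]
Since $X_1,\ldots,X_m$ are iid from $\PP$, $Z_1,\ldots,Z_n$ are iid from $\Z$, and the two collections are mutually independent, Fubini collapses the outer expectation $\E_{\Xset \sim \PP^m,\,\Zset \sim \Z^n}$ into
\[
\E_{X \sim \PP}[\partial_{\theta,\psi} D_\theta(X)] - \E_{Z \sim \Z}[\partial_{\theta,\psi} D_\theta(G_{\psi}(Z))].
\]
The right-hand side of the claim equals $\partial_{\theta,\psi} \E_{\PP}[D_\theta(X)] - \partial_{\theta,\psi} \E_{\Z}[D_\theta(G_{\psi}(Z))]$, so it suffices to establish the two exchanges in isolation.

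For the reference-sample term, I would invoke \cref{thm:interversion} directly on the critic network $D_\theta$, with input distribution $\PP$, parameter $\theta$, and $\K(U) = U$. Assumption \cref{moments} holds by hypothesis for $\alpha = 1$; \cref{kernel_growth} is immediate for the identity with $\alpha = 1$ and $C_0 = C_1 = 1$; and \cref{Lipschitz,Diff_set} hold by hypothesis on $D_\theta$. Since this term does not depend on $\psi$, the $\partial_\psi$ component vanishes on both sides, and the theorem yields the desired exchange for $\mu$-almost all $\theta$.

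For the generator-sample term, I would form the composite feed-forward network $h_{(\theta,\psi)}(Z) := D_\theta(G_{\psi}(Z))$, obtained by concatenating the DAGs of $G_{\psi}$ and $D_\theta$ with the output of the former feeding into the input of the latter; the full parameter vector is $(\theta,\psi)$ and the input distribution is $\Z$. \Cref{moments} transfers because $\E[\lVert Z\rVert]$ is finite; \cref{kernel_growth} is again trivial for $\K(U) = U$; and \cref{Lipschitz} is preserved under composition with the product of Lipschitz constants. The key verification is \cref{Diff_set}: the analytic pieces of the composite network can be taken as intersections of analytic pieces of $G_{\psi}$ with $G_{\psi}$-preimages of analytic pieces of $D_\theta$. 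Since real-analytic functions are closed under composition, the defining inequalities remain real-analytic, and the resulting pieces still cover $\R^{\dim \Z}$. With the hypotheses verified, \cref{thm:interversion} yields the second exchange for $\mu$-almost all $(\theta,\psi)$.

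Combining the two exchanges and taking the union of the two $\mu$-null exclusion sets, which remains $\mu$-null, gives the stated identity. The main obstacle is the rigorous bookkeeping for \cref{Diff_set} under network composition, namely checking that the piecewise-analytic structure is stable when the output of one deep network is fed as input to another and parameters are treated jointly; the other ingredients reduce to straightforward linearity and Fubini arguments.
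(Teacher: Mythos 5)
Your proposal matches the paper's proof: split by linearity, apply \cref{thm:interversion} with $\K(U)=U$ to $D_\theta$ for the reference term and to the augmented network $h_{(\theta,\psi)}(Z)=D_\theta(G_\psi(Z))$ for the generator term, then combine the two $\mu$-null exclusion sets. The only place you over-complicate matters is the \cref{Diff_set} check for the composite: the assumption is formulated layer-by-layer on the nodes of the computation DAG, and stacking $G_\psi$ and $D_\theta$ into one DAG with joint parameter vector $(\theta,\psi)$ introduces no new nonlinear nodes, so the composite satisfies \cref{Lipschitz} and \cref{Diff_set} immediately --- there is no need (and it would in fact be delicate) to represent the composite as a single piecewise-analytic map via intersections and $G_\psi$-preimages of pieces, since preimages of the sets $\D_i^k$ under analytic maps need not themselves be describable by finitely many strict analytic inequalities. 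That apparent ``main obstacle'' is therefore not an obstacle at all in the paper's framework.
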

\begin{proof}
    By linearity, we only need the following two results:
    \[
        \E_{X \sim \PP} \partial_\theta D_\theta(X_i)
        = \partial_\theta \E_{X \sim \PP} D_\theta(X)
        \quad\text{and}\quad
        \E_{Z \sim \Z} \partial_{\theta,\psi} D_\theta(G_\psi(Z_j))
        = \partial_{\theta,\psi} \E_{Z \sim \Z} D_\theta(G_\psi(Z))
    .\]
    The first follows immediately from \cref{thm:interversion},
    using the function $\K(U) = U$ (which clearly satisfies \cref{kernel_growth} for $\alpha = 1$).
    The latter does as well by considering that the augmented network
    $h_{(\theta,\psi)}(Z) = D_\theta(G_\psi(Z))$
    still satisifes the conditions of \cref{thm:interversion}.
\end{proof}

\begin{corollary}[Original GANs] \label{cor:gan}
    Let $\PP$ and $\Z$ be two distributions, on $\X$ and $\mathcal Z$ respectively,
    each satisfying \cref{moments} for $\alpha = 1$.
    Let $G_{\psi} : \mathcal Z \to \X$ be a generator network,
    and $D_{\theta} : \X \to \R$ a discriminator network,
    each satisfying \cref{Lipschitz,Diff_set}.
    Further assume that the output of $D$ is almost surely bounded:
    there is some $\gamma > 0$ such that
    for $\mu$-almost all $(\theta, \psi)$,
    \[
        \Pr_{X \sim \PP}\left( \gamma \le D_\theta(X) \le 1-\gamma \right) = 1
        \qquad\text{and}\qquad
        \Pr_{Z \sim \Z}\left( \gamma \le D_\theta(G_\psi(Z)) \le 1-\gamma \right) = 1
    .\]
    Then we have the following:
    \begin{align}
        \E_{X \sim \PP}\left[ \partial_{\theta}\left[ \log D_\theta(X) \right] \right]
        &= \partial_{\theta}\left[ \E_{X \sim \PP}\left[ \log D_\theta(X) \right] \right]
        \\
        \E_{Z \sim \Z}\left[ \partial_{\theta,\psi}\left[ \log\left( 1 - D_\theta(G_\psi(Z)) \right) \right] \right]
        &= \partial_{\theta,\psi}\left[ \E_{Z \sim \Z}\left[ \log\left( 1 - D_\theta(G_\psi(Z)) \right) \right] \right]
        \\
        \E_{Z \sim \Z}\left[ \partial_{\theta,\psi}\left[ \log D_\theta(G_\psi(Z)) \right] \right]
        &= \partial_{\theta,\psi}\left[ \E_{Z \sim \Z}\left[ \log D_\theta(G_\psi(Z)) \right] \right]
    .\end{align}
    Thus, by linearity,
    gradients of all the loss functions given in \citet[Section 3]{gans} are unbiased.
\end{corollary}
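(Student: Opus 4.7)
The plan is to derive each of the three identities as a direct application of \cref{thm:interversion}, with a carefully chosen top-level function $\K$ and underlying network $h$. For the first identity I would take $h_\theta(X) = D_\theta(X)$ with $\K(U) = \log U$; for the second and third identities I would use the composite network $h_{(\theta,\psi)}(Z) = D_\theta(G_\psi(Z))$, which inherits \cref{Lipschitz,Diff_set} from $D$ and $G$ since concatenating two acyclic computation graphs produces another one, and then take $\K(U) = \log(1-U)$ or $\K(U) = \log U$ respectively. The moment assumption \cref{moments} for $\alpha = 1$ is given for $\PP$ and $\Z$ by hypothesis, and for the composite network the relevant input distribution is simply $\Z$.

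The main obstacle is that the function $\log$ does not satisfy the growth hypothesis \cref{kernel_growth} on all of $\R$: it blows up at $0$ and is undefined on $(-\infty, 0]$. Here I would exploit the boundedness assumption, which gives that for $\mu$-almost all $(\theta,\psi)$ the output of $D_\theta$, both on $X \sim \PP$ and on $G_\psi(Z)$ with $Z \sim \Z$, lies almost surely in $[\gamma, 1-\gamma]$. I would then construct a smooth surrogate $\tilde\K : \R \to \R$ which coincides with $\log$ on $[\gamma/2, 1-\gamma/2]$ and is extended outside this interval to a bounded $C^1$ function with bounded derivative; for instance, by smoothly interpolating to a constant outside a compact neighborhood of $[\gamma,1-\gamma]$. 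Such a $\tilde\K$ satisfies \cref{kernel_growth} with $\alpha = 1$. An analogous surrogate handles $U \mapsto \log(1-U)$.

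Because $D_\theta(X)$ (respectively $D_\theta(G_\psi(Z))$) lies in $[\gamma,1-\gamma]$ $\PP$-a.s.\ (respectively $\Z$-a.s.), the random variables $\K(h_\cdot)$ and $\tilde\K(h_\cdot)$ agree almost surely, and so do their derivatives wherever they exist. Thus the identities for $\tilde \K$, which follow immediately from \cref{thm:interversion}, transfer verbatim to $\K = \log$ and to $\K(U) = \log(1-U)$. Taking the intersection of the full-measure set from \cref{thm:interversion} with the full-measure set on which the boundedness hypothesis holds yields the three identities for $\mu$-almost all $(\theta,\psi)$, and linearity then gives the unbiasedness of the gradients of each of the loss functions appearing in \citet[Section 3]{gans}. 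The only nontrivial work is in verifying that the surrogate $\tilde\K$ can be constructed with the required global growth bound while preserving the values on the relevant range; everything else is an application of \cref{thm:interversion} and elementary measure-theoretic bookkeeping.
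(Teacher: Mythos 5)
Your proof is correct, and takes a genuinely different route from the paper's. The paper's own proof absorbs $\log$ (and $x \mapsto \log(1-x)$) into the network as an additional non-linear layer, forming the composites $\log \circ D_\theta$, etc., and then applies \cref{thm:interversion} with the trivial top-level function $\K(U) = U$; the logarithm is then required to satisfy the \emph{network}-layer hypotheses \cref{Lipschitz,Diff_set}, which it does on $(\gamma, 1-\gamma)$ since it is analytic and $(1/\gamma)$-Lipschitz there. You instead keep the network unchanged and push the logarithm into the top-level function $\K$, so that the only thing to check is the growth condition \cref{kernel_growth}, which you handle via a $C^1$ surrogate $\tilde\K$ agreeing with $\log$ on a slightly enlarged interval. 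Both decompositions are legitimate. Yours is arguably a bit lighter: $\K$ only needs to be $C^1$ with the stated growth, whereas a network layer must be exhibited as a finite union of \emph{globally} real-analytic pieces (\cref{Diff_set}), and $\log$ itself is not analytic on all of $\R$, so the paper's terse proof is also implicitly replacing $\log$ outside $(\gamma,1-\gamma)$ in exactly the way you do explicitly. Conversely, the paper's approach has the small aesthetic advantage of not introducing a second object alongside the already-defined $\K$.

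One step deserves to be made sharper in your write-up: to pass from the identity for $\tilde\K$ to the identity for $\K$ you equate derivatives of $\theta \mapsto \E_\PP[\tilde\K(h_\theta(X))]$ and $\theta \mapsto \E_\PP[\K(h_\theta(X))]$ at $\theta_0$, which requires these two functions to coincide on a \emph{neighborhood} of $\theta_0$, not merely at $\theta_0$. The boundedness hypothesis holds for $\mu$-almost all $(\theta,\psi)$, but a full-measure set need not contain a neighborhood of almost all of its points, so ``intersecting the two full-measure sets'' is not, as stated, quite enough. You would want to argue that the set where boundedness holds is (or can be assumed) open, or use your enlarged interval $[\gamma/2, 1-\gamma/2]$ together with a local-uniform-continuity argument to show that nearby $\theta$ still land in the region where $\tilde\K = \log$. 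This gap is present equally in the paper's one-line proof, so it is not a defect of your route specifically, but it is worth flagging rather than folding into ``measure-theoretic bookkeeping.''
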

\begin{proof}
    The $\log$ function is real analytic and ($1/\gamma$)-Lipschitz on $(\gamma, 1-\gamma)$.
    The claim therefore follows from \cref{thm:interversion},
    using the networks $\log \circ D_\theta$, $\log \circ [x \mapsto (1 - x)] \circ D_\theta \circ G_\psi$, and $\log \circ D_\theta \circ G_\psi$ with $K(U) = U$.
\end{proof}

The following assumption about a kernel $k$ implies \cref{kernel_growth} when used as a top-level function $\K$: %
\begin{assumplist}[resume]
    \item \label{kernel_growth_pair}
    Suppose $k$ is a kernel such that there are constants $C_0$, $C_1$ where
    \begin{gather}
        \lvert k(U, V) \rvert \le C_0 \left( \left(\lVert U \rVert^2 + \lVert V \rVert^2\right)^{\alpha/2} + 1\right)
    \\  \lVert \nabla_{U,V} k(U, V) \rVert \le C_1 \left( \left( \lVert U \rVert^2 + \lVert V \rVert^2\right)^{(\alpha-1)/2} + 1\right)
    .\end{gather}
\end{assumplist}

\begin{corollary}[MMD GANs] \label{cor:mmd-gan}
    Let $\PP$ and $\Z$ be two distributions, on $\X$ and $\mathcal Z$ respectively,
    each satisfying \cref{moments} for some $\alpha \ge 1$.
    Let $k$ be a kernel satisfying \cref{kernel_growth_pair}.
    Let $G_{\psi} : \mathcal Z \to \X$ be a generator network
    and $D_{\theta} : \X \to \R$ a critic representation network
    each satisfying \cref{Lipschitz,Diff_set}.
    Then
    \[ \label{unbiased_grad_mmd}
        \E_{\substack{\Xset \sim \PP^m\\\Zset \sim \Z^n}}\left[ \partial_{\theta, \psi} \MMD_u^2(D_\theta(\Xset), D_\theta(G_{\psi}(\Zset))) \right]
     = \partial_{\theta,\psi} \MMD^2(D_\theta(\PP), D_\theta(G_{\psi}(\Z)))
    .\]
\end{corollary}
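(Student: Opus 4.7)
The plan is to expand $\MMD_u^2$ by linearity into three U-statistic sums of kernel evaluations, use Fubini to reduce each to a single joint expectation, and then invoke \cref{thm:interversion} once per term after packaging $k$, $D_\theta$, and (where applicable) $G_\psi$ into an augmented feed-forward network.

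Concretely, expanding \eqref{eq:unbiasedMMD} applied to the features $D_\theta(\Xset)$ and $D_\theta(G_\psi(\Zset))$, and using that all summands within each group are identically distributed under the i.i.d.\ assumption, one obtains
$$\E_{\Xset,\Zset}\bigl[\MMD_u^2(D_\theta(\Xset), D_\theta(G_\psi(\Zset)))\bigr] = A(\theta) + B(\theta,\psi) - 2\,C(\theta,\psi),$$
where $A(\theta) = \E_{X,X'\sim\PP}\, k(D_\theta(X), D_\theta(X'))$, $B(\theta,\psi) = \E_{Z,Z'\sim\Z}\, k(D_\theta(G_\psi(Z)), D_\theta(G_\psi(Z')))$, and $C(\theta,\psi) = \E_{X\sim\PP,\,Z\sim\Z}\, k(D_\theta(X), D_\theta(G_\psi(Z)))$. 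The right-hand side is precisely $\MMD^2(D_\theta(\PP), D_\theta(G_\psi(\Z)))$, so the claim reduces to exchanging $\partial_{\theta,\psi}$ with each of these three expectations.

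For each of $A$, $B$, $C$ I would construct a single augmented network with two parallel branches that share parameters where appropriate: for $A$, input $(X,X')$ with parameter $\theta$ referenced from both copies of $D_\theta$; for $B$, input $(Z,Z')$ with $(\theta,\psi)$ shared across both branches of $D_\theta \circ G_\psi$; for $C$, input $(X,Z)$ routed through $D_\theta$ and $D_\theta \circ G_\psi$ respectively. The output layer concatenates the two feature vectors $(U,V)$, and the top-level function $\K(U,V) := k(U,V)$ applies the kernel. Since $\lVert (U,V)\rVert^2 = \lVert U\rVert^2 + \lVert V\rVert^2$, \cref{kernel_growth_pair} translates directly into \cref{kernel_growth} for $\K$. \Cref{Lipschitz,Diff_set} are preserved under composition of modules and under parallel duplication with shared weights: the DAG structure is retained, and each nonlinear module still factorizes into the same analytic pieces on the same domains, now pulled back through the branch that feeds it. The moment condition \cref{moments} on the joint input follows from the inequality $(\lVert a\rVert^2 + \lVert b\rVert^2)^{\alpha/2} \le 2^{\alpha/2}(\lVert a\rVert^\alpha + \lVert b\rVert^\alpha)$ (valid for $\alpha \ge 1$), combined with the hypothesized moments on $\PP$ and $\Z$.

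Applying \cref{thm:interversion} to each of the three augmented networks then yields the desired exchange of $\partial_{\theta,\psi}$ and $\E$ for that term on a full-measure set of parameter values; the union of the three Lebesgue-null exceptional sets is still Lebesgue-null, so combining with coefficients $(1,1,-2)$ gives \eqref{unbiased_grad_mmd} $\mu$-almost everywhere. The main subtlety I anticipate is the bookkeeping for the weight-sharing construction: one must verify that allowing a single weight vector $\theta$ (and similarly $\psi$) to be referenced by two distinct linear modules still fits the parameterization $g_i$ of \cref{sec:proof:notation} and does not affect the piecewise-analytic decomposition required by \cref{Diff_set}. Once that is confirmed, the rest is a routine application of \cref{thm:interversion}.
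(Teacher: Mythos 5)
Your proposal is essentially identical to the paper's own proof: both decompose $\MMD_u^2$ into the three types of kernel evaluations (pure $\PP$, pure $\Q_\psi$, and cross terms), construct for each one an augmented two-branch network whose shared-weight feature pairs are fed into the kernel as top-level function $\K$, verify \cref{moments,kernel_growth,Lipschitz,Diff_set} for these augmented networks, apply \cref{thm:interversion} to each, and conclude by linearity and the unbiasedness of $\MMD_u^2$. Your write-up is somewhat more explicit about the moment bound on the joint input and flags the weight-sharing bookkeeping as a subtlety (which the paper silently elides but which is indeed accommodated by the flexible linear operators $g_i$ in the network definition); as an aside, the paper's displayed definitions of $h^{(2)}$ and $h^{(3)}$ contain apparent typos in the second component, which your version corrects.
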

\begin{proof}
    Consider the following augmented networks:
    \begin{align}
        h^{(1)}_{(\theta, \psi)}(X,Z)   &= ( D_{\theta}(X), D_{\theta}( G_{\psi}(Z) )  )
    \\  h^{(2)}_{(\theta, \psi)}(Z,Z') &= ( D_{\theta}(G_{\psi}(Z)), D_{\theta}( G_{\psi}(Z) ) )
    \\  h^{(3)}_{\theta}(X,X') &= ( D_{\theta}(X), D_{\theta}( X ) )
    .\end{align}
    $h^{(1)}$ has inputs distributed as $\PP \times \Z$,
    which satisfies \cref{moments} with the same $\alpha$ as $\PP$ and $\Z$,
    and $h^{(1)}$ satisfies \cref{Lipschitz,Diff_set}.
    The same is true of $h^{(2)}$ and $h^{(3)}$.
    Moreover, the function
    \[
        \K\left(\begin{bmatrix}U \\ V\end{bmatrix}\right) = k(U, V)
    \]
    satisfies \cref{kernel_growth}.
    Thus \cref{thm:interversion} applies to each of $h^{(1)}$, $h^{(2)}$, and $h^{(3)}$.
    Considering the form of $\MMD_u^2$ \eqref{eq:unbiasedMMD},
    the result follows by linearity
    and the fact that $\MMD_u^2$ is unbiased \citep[Lemma 6]{mmd-jmlr}.
\end{proof}

Each of the kernels considered in this paper satisfies \cref{kernel_growth_pair} with $\alpha$ at most 2:
\begin{itemize}
\item $k^\DOT(x, y) = \langle x, y \rangle$ works with $\alpha = 2$, $C_0 = 1$, $C_1 = 1$.
\item $k^\RBF_\sigma$ of \eqref{eq:k-rbf} works with $\alpha = 2$, $C_0 = 1$,  $C_1 = \sqrt{2}\sigma^{-2}$.
\item $k^\RQ_{\alpha'}$ of \eqref{eq:k-rq} works with $\alpha = 2$, $C_0 = 1$, $C_1 = \sqrt{2}$.
\item $k^\DIST_{\rho_{\beta},0}$ of \eqref{eq:distanceKernel}, using $\rho_{\beta}(x, y) = \lVert x - y \rVert^{\beta}$ with $1 \le \beta \le 2$, works with $\alpha = \beta$, $C_0 = 3 $, $C_1 = 4\beta$.
\end{itemize}
Since the existence of a moment implies the existence of all lower-order moments by Jensen's inequality,
this finalizes the proof of \cref{thm:main-body}.

\subsection{Bounds on network growth} \label{sec:proof:growth-bounds}
The following lemmas were used in the proof of \cref{thm:interversion}.
We start by stating a result on the growth and Lipschitz properties of the network.

\begin{lemma}
Under \cref{Lipschitz}, there exist continuous functions $\theta \mapsto  b(\theta), a(\theta)$ and $ (\theta, \theta') \mapsto ( \alpha(\theta, \theta') ,  \beta(\theta, \theta'))  $ such that:
\begin{align}
    \lVert h_{\theta}(X) \rVert
    &\leq b(\theta ) + a(\theta) \lVert X \rVert\\
    \lVert h_{\theta}(X) - h_{\theta'}(X) \rVert
    &\leq \lVert\theta - \theta' \rVert
    \Big(\beta(\theta, \theta') + \alpha(\theta, \theta') \Vert X \Vert  \Big)
\end{align}
for all $X$ in $\R^{d}$ and all $ \theta, \theta' $ in $\R^{m} $.
\label{network_growth}
\end{lemma}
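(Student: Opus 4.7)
My plan is to prove both bounds by induction on the topological order of nodes in the computation graph $G$, propagating affine bounds from root to leaf. I will define, for each node $i$, continuous functions $a_i(\theta), b_i(\theta)$ for the growth bound and $\alpha_i(\theta,\theta'), \beta_i(\theta,\theta')$ for the Lipschitz bound, so that the final functions claimed in the lemma are $a = a_L$, $b = b_L$, and similarly for $\alpha, \beta$.

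\textbf{Base case.} At the root, $h^0 = X$, so the growth bound holds with $a_0 = 1$, $b_0 = 0$, and trivially $h^0$ does not depend on $\theta$, giving $\alpha_0 = \beta_0 = 0$.

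\textbf{Inductive step.} Suppose bounds of the stated form are available at every parent $j \in \pi(i)$. Since $h^{\pi(i)}$ is the concatenation of the $h^j$, we have $\lVert h^{\pi(i)} \rVert \le \sum_{j \in \pi(i)} \lVert h^j \rVert$ and likewise for differences, so it suffices to combine the parent coefficients additively to get bounds on $h^{\pi(i)}$.

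\emph{Linear node.} If $f_i(Y) = g_i(W^i)\widetilde{Y}$ with $g_i$ linear, then writing $\lVert g_i(W^i) \rVert_{\mathrm{op}}$ for the operator norm (a continuous function of $W^i$ and hence of $\theta$),
\[
\lVert f_i(Y) \rVert \le \lVert g_i(W^i)\rVert_{\mathrm{op}}\,(\lVert Y\rVert + 1),
\]
so the growth bound persists with continuous coefficients. For two parameter values, split
\[
f_i(Y) - f_i'(Y') = g_i(W^i)(\widetilde{Y}-\widetilde{Y}') + \bigl(g_i(W^i) - g_i(W'^i)\bigr)\widetilde{Y}',
\]
and use $\lVert g_i(W^i) - g_i(W'^i)\rVert_{\mathrm{op}} \le \lVert g_i\rVert\,\lVert W^i - W'^i\rVert \le \lVert g_i\rVert\,\lVert\theta - \theta'\rVert$, together with the growth bound on $Y'$ and the inductive Lipschitz bound on $Y - Y'$. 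This yields an affine-in-$\lVert X\rVert$ bound on $\lVert f_i(Y) - f_i'(Y')\rVert$ proportional to $\lVert\theta - \theta'\rVert$, with coefficients continuous in $(\theta,\theta')$.

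\emph{Nonlinear node.} If $f_i$ is $M$-Lipschitz (\cref{Lipschitz}), then $\lVert f_i(Y)\rVert \le \lVert f_i(0)\rVert + M\lVert Y\rVert$, preserving the affine growth bound (with constants independent of $\theta$); and since $f_i$ carries no parameters, $\lVert f_i(Y) - f_i(Y')\rVert \le M\lVert Y - Y'\rVert$, so the parent Lipschitz bound propagates with the same $\lVert\theta - \theta'\rVert$ factor and coefficients multiplied by $M$.

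Iterating up to the leaf $L$ gives the claimed bounds with $b, a, \beta, \alpha$ expressed as finite compositions of sums, products, and operator norms of the $g_i(W^i)$, all of which are continuous in $\theta$ (resp.\ $(\theta,\theta')$).

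\textbf{Expected obstacle.} The only nontrivial bookkeeping is the linear-node step, where the presence of parameters forces one to split the difference into a ``same weights, different input'' piece and a ``same input, different weights'' piece and to bound the latter using the growth bound applied to $Y'$; otherwise one would merely get a bound in terms of $\lVert Y\rVert$ and $\lVert Y'\rVert$ rather than a single $\lVert X\rVert$. Once this splitting pattern is fixed, continuity of all coefficients is automatic from continuity of $W^i \mapsto g_i(W^i)$ and of finite sums and products of continuous functions.
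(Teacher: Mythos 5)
Your proposal is correct and follows essentially the same route as the paper's proof: induction along the topological order of the computation graph, with the key decomposition of $f_i(Y) - f_i'(Y')$ at linear nodes into a ``same weights, different input'' piece bounded by the inductive Lipschitz estimate and a ``same input, different weights'' piece bounded via the growth estimate. One small point in your favor: at nonlinear nodes you include the $\lVert f_i(0)\rVert$ offset in the growth bound, whereas the paper's recursion $b_i = M b_{\pi(i)}$ implicitly assumes $f_i(0)=0$ (true for ReLU and max-pooling but worth stating).
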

\begin{proof}
 Let $X$ in $\R^{d}$ and  $ \theta, \theta' $ in $\R^{m} $.
    We proceed by recursion on the nodes of the network. For $i=0$ the inequalities hold trivially  $b_0 = 0$, $ a_0=1$, $\beta_0 = 0$ and $\alpha_0 = 0$.
    Assume now that:
    \begin{align}
        \Vert h_{\theta}^{\pi(i)}(X) \Vert &\leq b_{\pi(i)}(\theta)  + a_{\pi(i)}(\theta)\Vert X \Vert\\
        \Vert h_{\theta}^{\pi(i)}(X) -   h_{\theta'}^{\pi(i)}(X)\Vert
        &\leq
         \Vert\theta - \theta' \Vert \big(\beta_{\pi(i)}(\theta, \theta') + \alpha_{\pi(i)}(\theta, \theta') \Vert X \Vert  \big)
    \end{align}
    where $a_{\pi(i)}(\theta)$, $b_{\pi(i)}(\theta)$, $\alpha_{\pi(i)}(\theta, \theta')$ and $\beta_{\pi(i)}(\theta, \theta')$ are continuous functions.
    If $i$ is a linear layer then:
    \begin{align}
        \Vert h_{\theta}^{i} \Vert
        &\leq  \Vert  g_i(W^i) \widetilde{h}^{\pi(i)}_{\theta} ) \vert \\
        &\leq \Vert  g_i\Vert \Vert W^i\Vert \Big(\Vert  h^{\pi(i)}_{\theta}\Vert +1\Big) \\
        &\leq b_{i}(\theta)  + a_{i}(\theta)\Vert X \Vert
        \end{align}
with $a_i(\theta) =  \Vert  g_i\Vert \Vert W^i\Vert a_{\pi(i)}(\theta)$ and
$b_i(\theta) =  \Vert  g_i\Vert \Vert W^i\Vert b_{\pi(i)}(\theta)$. Moreover, we have that:
\begin{align}
    \Vert h_{\theta}^i(X) - h_{\theta'}^{i}(X)  \Vert
    &= \Vert g_i(W_{i})\widetilde{h}_{\theta}^{\pi(i)}(X) -   g_i(W_i')\widetilde{h}_{\theta'}^{\pi(i)}(X)\Vert\\
    &\leq
    \Vert  g_i(W_i)\big(\widetilde{h}_{\theta'}^{\pi(i)}(X) - \widetilde{h}_{\theta}^{\pi(i)}(X) \big)\Vert   +\Vert \big( g_i(W_i-W_i')\big) \widetilde{h}_{\theta'}^{\pi(i)}(X)  \Vert\\
    &\leq
        \Vert  g_i\Vert \big( \Vert W_i\Vert \Vert h_{\theta'}^{\pi(i)}(X) - h_{\theta}^{\pi(i)}(X)\Vert  + \Vert W_i - W_i'\Vert (\Vert h_{\theta'}^{\pi(i)}(X)  \Vert + 1 )  \big)\\
    &\leq \Vert  \theta - \theta' \Vert \big(  \beta_i(\theta, \theta')  +  \alpha_i(\theta, \theta')\Vert X \Vert \big)
\end{align}
with:
\begin{align}
    \alpha_i(\theta, \theta') &=   \Vert g_i\Vert \big( ( \Vert W_i \Vert + \Vert  W_i'\Vert  ) \alpha_{\pi(i)}(\theta, \theta')  + ( a_{\pi(i) }(\theta) +  a_{\pi(i) }(\theta')) \big)\\
    \beta_i(\theta, \theta') &=   \Vert g_i\Vert \big( ( \Vert W_i \Vert + \Vert  W_i'\Vert  ) \beta_{\pi(i)}(\theta, \theta')  + ( b_{\pi(i) }(\theta) +  b_{\pi(i) }(\theta') )+ 1 \big)
.\end{align}

When $i$ is not a linear layer, then by \cref{Lipschitz} $f_i$ is $M$-Lipschitz. Thus we can directly get the needed functions by recursion:  $\alpha_i = M\alpha_{\pi(i)}  $, $\beta_i = M\beta_{\pi(i)}  $, $a_i = Ma_{\pi(i)}  $ and $b_i = Mb_{\pi(i)}$.
\end{proof}

\begin{lemma}
Let $R$ be a positive constant and $\theta\in \R^m$. Under \cref{kernel_growth,moments,Lipschitz}, the following hold for all $\theta'\in B(\theta,R)$ and all $ X $ in $\R^d$ :
    \begin{align}
        \vert \K(h_{\theta'}(X)) - \K(h_{\theta}(X)) \vert \leq F(X) \Vert \theta- \theta' \Vert
    \end{align}
    with
    $\E_{\PP}[F(X)]< \infty$.
    \label{Lipschitz kernel}
\end{lemma}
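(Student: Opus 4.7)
The plan is to reduce the claim to a pointwise mean value inequality for $\K$ composed with a straight-line path in the representation space $\R^{d_L}$, rather than in parameter space. This is crucial because $h_\theta(X)$ need not be differentiable in $\theta$ (owing to ReLU-like nonlinearities), but $\K$ is $C^1$ by \cref{kernel_growth}, which is enough to invoke the fundamental theorem of calculus along segments in $\R^{d_L}$.

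First, I would uniformize the bounds from \cref{network_growth} over the closed ball $\overline{B}(\theta,R)$. Since the maps $\theta' \mapsto a(\theta'), b(\theta')$ and $(\theta,\theta')\mapsto \alpha(\theta,\theta'),\beta(\theta,\theta')$ are continuous and $\overline{B}(\theta,R)$ is compact, we obtain finite constants
\[
a_R = \sup_{\theta'\in \overline{B}(\theta,R)} a(\theta'),\quad b_R = \sup_{\theta'\in \overline{B}(\theta,R)} b(\theta'),
\]
and analogously $\alpha_R,\beta_R$. Then for all $\theta'\in B(\theta,R)$ and all $X\in\R^d$,
\[
\lVert h_{\theta'}(X)\rVert \le b_R + a_R\lVert X\rVert, \qquad \lVert h_\theta(X)-h_{\theta'}(X)\rVert \le (\beta_R + \alpha_R\lVert X\rVert)\,\lVert\theta-\theta'\rVert.
\]

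Next I would apply the mean value inequality to $\K$ along the segment $t\mapsto W_t := t\,h_{\theta'}(X) + (1-t)\,h_\theta(X)$, for $t\in[0,1]$. Since $\K$ is $C^1$ by \cref{kernel_growth},
\[
\lvert \K(h_{\theta'}(X))-\K(h_\theta(X))\rvert \le \Bigl(\sup_{t\in[0,1]} \lVert \nabla \K(W_t)\rVert\Bigr)\,\lVert h_{\theta'}(X)-h_\theta(X)\rVert.
\]
By convexity of the norm, $\lVert W_t\rVert \le \max(\lVert h_\theta(X)\rVert, \lVert h_{\theta'}(X)\rVert) \le b_R + a_R\lVert X\rVert$, and then the growth bound in \cref{kernel_growth} gives
\[
\lVert \nabla \K(W_t)\rVert \le C_1\bigl((b_R + a_R\lVert X\rVert)^{\alpha-1} + 1\bigr).
\]
Combining these two estimates with the Lipschitz bound on $h$ yields
\[
\lvert \K(h_{\theta'}(X))-\K(h_\theta(X))\rvert \le F(X)\,\lVert\theta-\theta'\rVert
\]
with
\[
F(X) := C_1\bigl((b_R + a_R\lVert X\rVert)^{\alpha-1} + 1\bigr)(\beta_R + \alpha_R\lVert X\rVert).
\]

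Finally, $F(X)$ is a polynomial in $\lVert X\rVert$ of degree $\alpha$, so by \cref{moments} and Jensen's inequality (which ensures existence of all lower moments) we have $\E_{\PP}[F(X)]<\infty$. The main subtlety here is the one flagged earlier: we can never invoke a mean value theorem in $\theta$ because $h_\theta(X)$ may fail to be differentiable in $\theta$ on positive-measure subsets of any $\theta$-neighborhood. The key observation is that the path integral is taken instead in $\R^{d_L}$, where only the smoothness of $\K$ is required, and the network appears only through its Lipschitz-in-$\theta$ bound from \cref{network_growth}, which held without any differentiability hypothesis.
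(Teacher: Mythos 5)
Your argument is correct and follows essentially the same route as the paper's: a mean value/fundamental-theorem-of-calculus estimate for $\K$ along the straight segment between $h_\theta(X)$ and $h_{\theta'}(X)$ in $\R^{d_L}$, combined with the Lipschitz/growth bounds of \cref{network_growth} made uniform over the (closed) ball by continuity and compactness, and integrability of the resulting $F$ via \cref{moments} and Jensen. The only cosmetic differences are that you bound $\lVert W_t\rVert$ by $\max(\lVert h_\theta(X)\rVert,\lVert h_{\theta'}(X)\rVert)$ rather than by $\lVert U-V\rVert+\lVert V\rVert$, and your phrase ``polynomial of degree $\alpha$'' should read ``dominated by a constant times $(1+\lVert X\rVert)^\alpha$'' since $\alpha$ need not be an integer; neither affects correctness.
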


\begin{proof}
We will first prove the following inequality:
\begin{align}
    \vert \K(U) - \K(V) \vert \le C_1
         \bigl( \left(\Vert U-V \Vert + \Vert V \Vert \right)^{\alpha-1} +1 \bigr)
        \Vert U-V\Vert
\end{align}
for all $ U $ and $V$ in $\R^{L}$.

Let $t$ be in $[0,1]$ and define the function $f$ by
\begin{align}
    f(t) = \K(tU + (1-t)V  )
\end{align}
Then $f(0) = \K(V)$ and $ f(1) =  \K(U) $. Moreover, $f$ is differentiable and its derivative is given by:
\begin{align}
    f'(t) = \nabla \K( (t U+ (1-t)V) )(U-V)
\end{align}
Using \cref{kernel_growth} one has that:
\begin{align}
    \vert f'(t)\vert
    &  = \bigl\lVert \nabla \K( (t U+ (1-t)V )( U-V) \bigr\rVert
\\  &\le \Vert \nabla \K( t U+ (1-t)V )\Vert \Vert (U-V) \Vert
\\  &\le C_1
         \bigl( \Vert t(U-V) + V  \Vert^{\alpha-1} + 1\bigr)
         \Vert U-V\Vert \\  &\le C_1
         \bigl( \left(\Vert U-V \Vert + \Vert V \Vert \right)^{\alpha-1}  + 1  \bigr)
        \Vert U-V\Vert
.\end{align}
The conclusion follows using the mean value theorem.
Now choosing $U = h_{\theta'}(X)$ and $V = h_{\theta}(X)$ one gets the following:
\begin{align}
    \vert \K(h_{\theta'}(X)) - \K(h_{\theta}(X)) \vert \le
    C_1( \left(\Vert h_{\theta'}(X)-h_{\theta}(X) \Vert + \Vert h_{\theta}(X) \Vert \right)^{\alpha-1}   + 1\bigr)
        \Vert h_{\theta'}(X)-h_{\theta}(X)\Vert
\end{align}
Under \cref{Lipschitz}, it follows by \cref{network_growth} that:
\begin{align}
    \Vert h_{\theta'}(X)-h_{\theta}(X) \Vert &\leq
    \Big(\beta(\theta, \theta') + \alpha(\theta, \theta') \Vert X \Vert  \Big) \Vert \theta  - \theta' \Vert \\
    \lVert h_{\theta}(X) \rVert
    &\leq b(\theta ) + a(\theta) \lVert X \rVert\\
\end{align}

The functions $a$, $b$, $\alpha$, $\beta$ defined in \cref{network_growth} are continuous,
and hence all bounded on the ball $B(\theta, R$);
choose $D > 0$ to be a bound on all of these functions.
It follows after some algebra that
\begin{align}
    \left\lvert \K(h_{\theta'}(X)) - \K(h_{\theta}(X)) \right\rvert \le
    C_1 (D^{\alpha} (R+1)^{\alpha-1} (1  + \Vert X\Vert )^{\alpha}  + D(1+\Vert X \Vert )) \lVert \theta' - \theta \rVert
.\end{align}
Set $F(X) =  C_1 (D^{\alpha} (R+1)^{\alpha-1} (1  + \Vert X\Vert )^{\alpha}  + D(1+\Vert X \Vert ))$.
Since $\alpha\geq1$, $t \mapsto (1 + t^{1/\alpha})^\alpha$ is concave on $t \ge 0$, and so we have that
     \[
     \E\left[ \left( 1 + \lVert X \rVert \right)^{\alpha} \right]
     \le \left(1 + \E\left[\lVert X \rVert^{\alpha}\right]^{1/\alpha} \right)^\alpha
     < \infty
     \]
    via Jensen's inequality and \cref{moments}. We also have $\E\left[ 1 + \lVert X \rVert \right] < \infty $ by the same assumption. Thus $F(X)$ is integrable.
\end{proof}

\begin{lemma}
Let $ f:\R^m \rightarrow \R $ be a real valued function and $g$ a vector in $\R^m$ such that:
\[
    \frac{1}{\Vert \theta_n - \theta_0 \Vert} \lvert f(\theta_n) - f(\theta_0) - g \cdot (\theta_n - \theta_0)  \rvert
    \rightarrow 0
\]
for all sequences $(\theta_n)_{n\in \n}$ converging towards $\theta_0$ with $\theta_n \neq \theta_0$.
Then $f$ is differentiable at $\theta_0$, and its differential is $g$.
\label{seq_lemma}
\end{lemma}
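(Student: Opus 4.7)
The plan is to prove this standard sequential characterization of differentiability by contrapositive. Differentiability of $f$ at $\theta_0$ with differential $g$ is defined by the $\epsilon$-$\delta$ statement
\[
    \forall \epsilon > 0, \; \exists \delta > 0 : \; 0 < \lVert \theta - \theta_0 \rVert < \delta
    \implies \frac{\lvert f(\theta) - f(\theta_0) - g \cdot (\theta - \theta_0) \rvert}{\lVert \theta - \theta_0 \rVert} < \epsilon.
\]
So I would suppose this fails, meaning there exists some $\epsilon_0 > 0$ such that for every $\delta > 0$, one can find a point $\theta_\delta \in \R^m$ with $0 < \lVert \theta_\delta - \theta_0 \rVert < \delta$ and
\[
    \frac{\lvert f(\theta_\delta) - f(\theta_0) - g \cdot (\theta_\delta - \theta_0) \rvert}{\lVert \theta_\delta - \theta_0 \rVert} \ge \epsilon_0.
\]

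The second step is to extract a sequence from this failure. Setting $\delta = 1/n$ for each $n \in \n$ and letting $\theta_n := \theta_{1/n}$ produces a sequence with $\theta_n \ne \theta_0$, $\lVert \theta_n - \theta_0 \rVert < 1/n$, hence $\theta_n \to \theta_0$, along which the normalized residual stays bounded below by $\epsilon_0 > 0$. This contradicts the hypothesis of the lemma, which requires the residual to tend to zero along every such sequence. Therefore $f$ must be differentiable at $\theta_0$ with differential $g$.

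There is no serious obstacle here; the only point demanding a little care is phrasing the negation without presupposing the existence of a limit. I avoid that pitfall by negating the $\epsilon$-$\delta$ statement for ``the ratio tends to zero as $\theta \to \theta_0$'' directly, rather than asserting that some other limit exists. The argument is then a textbook application of the equivalence between the topological and sequential notions of limit in the metric space $\R^m$, and requires no assumptions on $f$ beyond those already given.
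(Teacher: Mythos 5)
Your proof is correct and is essentially the same argument the paper uses: the paper simply invokes the sequential characterization of limits (which applies to the limit defining the differential), and your contrapositive argument with $\delta = 1/n$ is the standard proof of that characterization. Nothing is missing; you have merely spelled out the step the paper cites as known.
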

\begin{proof}
	Recall the definition of a differential: $g$ is the differential of $f$ at $\theta_0$ if 
	\[
\lim_{h\rightarrow 0} \frac{1}{\Vert h \Vert} \left\lvert f(\theta_0 + h) - f(\theta_0) - g \cdot h \right\rvert  = 0
    .\]
The result directly follows from the sequential characterization of limits.
\end{proof}

\subsection{Critical parameters have zero measure}

The last result required for the proof of \cref{thm:interversion}
is \cref{main_zero_measure}.
We will first need some additional notation.

For a given node $i$,
we will use the following sets of indices to denote ``paths'' through the network's computational graph:
\begin{align}\label{indices_set_P}
P := \big\{ (i,k,s) \in \n^3 \mid i \in \{0\} \cup [L], k \in [K_i], s \in [S_{i,k}] \big\}
  \end{align}
\begin{align}\label{eq:ancestor}
    \neg i &= \{ (j,k,s) \in P \mid j \text{ is an ancestor of } i \text{ or } j=i \}\\
    \neg \pi(i) &= \bigcup_{j \in \pi(i)} \neg j = \{ (j,k,s) \in P \mid j \text{ is an ancestor of } i \}\\
    \partial i &= \{ (i,u,s) \in P \}
.\end{align}
Note that $\partial i \subseteq \neg i$,
and that $\neg i = \partial i \cup \neg \pi(i)$.

If $a(i)$ is the set of ancestors of node $i$, we define a backward trajectory starting from node $i$ as an element $q$ of the form:
\begin{align}\label{trajectory}
    q := (j,k_j)_{ j \in a(i)\cup \{ i \} }
\end{align}
where $k_j$ are integers in $[K_j]$. We call $T(i)$ the set of such trajectories for node $i$.

For $p\in P$ of the form $p = (i,k,s)$,
the set of parameters for which we lie on the boundary of $p$ is
\begin{align}\label{eq:S_P}
    S^p = \{ \theta\in \R^m \mid G_{i,k,s}(h_{\theta}^{\pi(i)}) = 0 \}
.\end{align}
We also denote by $\partial S^p$ the boundary of the set $S^p$.
If $Q$ is a subset of $P$, we use the following notation for convenience:
\[ \label{eq:S_Q_boundary}
    S^{Q} := \bigcup_{q\in Q} S^{q}
    ,\qquad
    \partial S^{Q} := \bigcup_{q\in Q} \partial S^{q}
.\]

For a given $\theta_0 \in \R^m$, the set of input vectors $X\in \R^d$ such that $h_{\theta_0}$ is not differentiable is
\begin{align}
    \mathcal{N}(\theta_0) = \Big\{  X\in \R^d \mid \theta \mapsto  h_{\theta}(X) \text{ is not differentiable at } \theta_0  \Big\}
    \label{eq:def-n-theta}
.\end{align}

Consider a random variable $X$ in the input space $\R^d$, following the distribution $\PP$.
For a given distribution $\PP$, we introduce the following set of "critical" parameters:
\begin{align}
    \Theta_{\PP} = \Big\{ \theta \mid \PP( \mathcal{N}(\theta) ) > 0  \Big\}
    \label{eq:def-theta-p}
.\end{align}
This is the set of parameters $\theta$ where the network is not differentiable
for a non-negligible set of datasets $X$.

Finally, for a given $X\in \R^d$, set of parameters for which the network is not differentiable is
\begin{align}
    \Theta_X &= \Big\{  \theta_0\in \R^m \mid \theta \mapsto h_{\theta}(X) \text{ is non-differentiable in } \theta_0 \Big\}
    \label{eq:def-theta-x}
.\end{align}

We are now ready to state and prove the remaining result.

\begin{prop} \label{main_zero_measure}
Under \cref{Diff_set}, the set $ \Theta_{\PP} $ has $0$ Lebesgue measure for any distribution $\PP$.
\end{prop}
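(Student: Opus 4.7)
The plan is to reduce the statement to a slicewise measure-zero claim via Fubini's theorem. First, I would invoke \cref{zero_measure} (the companion lemma referenced in the paper), which states that for each fixed input $X \in \R^d$, the set $\Theta_X$ of parameters at which $\theta \mapsto h_\theta(X)$ fails to be differentiable has Lebesgue measure zero. Conceptually, this relies on the piecewise analytic structure of \cref{Diff_set}: once $X$ is fixed, for each backward trajectory $q \in T(L)$ specifying an active analytic piece $f_i^{k_j}$ at every node, the restricted network $\theta \mapsto h_\theta(X)$ is a composition of real analytic maps of $\theta$, and so are the boundary functions $\theta \mapsto G_{i,k,s}(h_\theta^{\pi(i)}(X))$. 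Non-differentiability can only occur on the union over $(i,k,s) \in P$ of the zero sets of these analytic functions, each of which either vanishes identically or has zero Lebesgue measure.

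Granted that slicewise claim, the second step is essentially automatic. Consider the set
\[
    E = \{ (X, \theta) \in \R^d \times \R^m : \theta \in \Theta_X \}
    = \{ (X, \theta) : X \in \mathcal{N}(\theta) \}.
\]
Its $X$-slice at fixed $\theta$ is $\mathcal{N}(\theta)$, while its $\theta$-slice at fixed $X$ is $\Theta_X$. By Tonelli's theorem applied to the indicator $\mathbf{1}_E$ with respect to the product measure $\PP \otimes \mu$,
\[
    \int_{\R^d} \mu(\Theta_X) \,\ud \PP(X)
    = (\PP \otimes \mu)(E)
    = \int_{\R^m} \PP(\mathcal{N}(\theta)) \,\ud \mu(\theta).
\]
By the first step, the left-hand integrand is identically zero, so $\PP(\mathcal{N}(\theta)) = 0$ for $\mu$-almost every $\theta$. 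Hence $\Theta_\PP = \{\theta : \PP(\mathcal{N}(\theta)) > 0\}$ has zero Lebesgue measure, as desired.

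The main obstacle is really the first step, whose proof is the content of \cref{zero_measure}; the delicate point there is bookkeeping over the backward trajectories in $T(L)$ and verifying that the boundary conditions $G_{i,k,s}(h_\theta^{\pi(i)}(X)) = 0$ do yield analytic functions on the relevant regions of parameter space, so that the classical fact that a nonzero real analytic function has zero-measure zero set applies. A secondary technicality is joint measurability of $E$, which is needed for Tonelli: this follows because $(X,\theta) \mapsto h_\theta(X)$ is continuous and $E$ is contained in a finite union, over nodes $i$ and analytic-piece indices $(k,s)$, of zero sets of the continuous functions $(X,\theta) \mapsto G_{i,k,s}(h_\theta^{\pi(i)}(X))$, each of which is closed and hence Borel measurable.
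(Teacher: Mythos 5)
Your proof follows the same route as the paper: establish the slicewise statement $\mu(\Theta_X) = 0$ via \cref{zero_measure}, form the product measure $\PP \otimes \mu$ on the set $E$ of pairs $(X,\theta)$ with $\theta \in \Theta_X$ (the paper calls this set $Q$), and apply Fubini/Tonelli to both orders of integration. The only cosmetic difference is that the paper also introduces a second set $D \subseteq Q$ and runs Fubini on it separately, reading off $\mu(\Theta_\PP)=0$ from $\nu(D)=0$; your shortcut of equating the two iterated integrals over $E$ directly and then noting that $\int \PP(\mathcal N(\theta))\,\ud\mu(\theta) = 0$ forces $\PP(\mathcal N(\theta)) = 0$ for $\mu$-a.e.\ $\theta$ is cleaner but mathematically equivalent.

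One point to repair is the measurability side remark. Being contained in a finite union of closed sets does not make $E$ Borel (containment is not inherited by subsets), so that sentence does not justify applying Tonelli. Moreover, the superset you describe corresponds to $S^P$ lifted to the product space, which is not generally of measure zero, so you cannot simply replace $E$ by that union and argue about a larger null set either: the proof of \cref{zero_measure} establishes $\Theta_X \subseteq \partial S^P$, not $\Theta_X$ small relative to $S^P$. The paper handles measurability by citing Zahorski and Piranian for the fact that the non-differentiability set of a continuous function is Borel, combined with continuity of $(X,\theta) \mapsto h_\theta(X)$; you should use that (or an equivalent argument) rather than the containment claim.
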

\begin{proof}
    Consider the following two sets:
    \begin{align}
        D &= \Big\{ (\theta, X)\in \R^m \times\R^d \mid \theta \in \Theta_{\PP} \textit{ and } X\in \mathcal{N}(\theta)  \Big\}\\
        Q &= \Big\{  (\theta, X)\in \R^m \times\R^d \mid \theta \in \Theta_{X} \Big\}
    .\end{align}
    By virtue of Theorem I in \cite{Zahorski:1946,Piranian:1966}, it follows that the set of non-differentiability of continuous functions is measurable. It is easy to see then, that $D$ and $Q$ are also measurable sets since the network is continuous.
    Note that we have the inclusion $ D \subseteq Q $. We endow the two sets with the product measure $\nu := \mu \times \PP$, where $\mu$ is the Lebesgue measure.
    Therefore $ \nu(D) \leq \nu(Q)$.
    On one hand, Fubini's theorem tells us:
    \begin{align}
        \nu(Q) &= \int_{\R^d} \int_{\Theta_X} \ud\mu(\theta) \, \ud\PP(X)\\
                & = \int_{\R^d} \mu(\Theta_X) \ud\PP(X)
    .\end{align}
    By \cref{zero_measure}, we have that $ \mu(\Theta_X) = 0$;
    therefore $\nu(Q) = 0$ and hence $ \nu(D)  = 0$.
    On the other hand, we use again Fubini's theorem for $ \nu(D) $ to write:
    \begin{align}
        \nu(D) &= \int_{\Theta_{\PP}}\int_{\mathcal{N}(\theta)} \ud\PP(X) \, \ud\mu(\theta)\\
                &= \int_{\Theta_{\PP}} \PP(\mathcal{N}(\theta)) \, \ud\mu(\theta)
    .\end{align}
    For all $\theta \in \Theta_\PP$, we have $\PP(\N(\theta)) > 0$ by definition.
    Thus $\nu(D) = 0$ implies that $\mu(\Theta_\PP) = 0$.
\end{proof}

\begin{lemma} \label{zero_measure}
Under \cref{Diff_set}, for any $X$ in $\R^{d}$, the set $\Theta_X $ has $0$ Lebesgue measure: $\mu(\Theta_X) = 0$.
\end{lemma}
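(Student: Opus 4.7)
The plan is to decompose parameter space via ``trajectories'' — finite choices of analytic piece at each nonlinear layer — and to show that outside a finite union of analytic zero sets, the network is locally real-analytic in $\theta$, hence differentiable. Since the zero set of a nonzero real-analytic function on $\R^m$ has Lebesgue measure zero, this will yield the claim.

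Concretely, I would first, for each trajectory $q \in T(L)$ (a choice $k_l \in [K_l]$ at each nonlinear node $l$), inductively build an analytic extension $\tilde h_\theta^{i,q}(X)$ by replacing every nonlinear $f_l$ with its globally analytic counterpart $f_l^{k_l}$ from \cref{Diff_set}. Because each $g_i$ is linear in the weights and each $f_l^{k_l}$ is real-analytic on all of $\R^{d_{\pi(l)}}$, the map $\theta \mapsto \tilde h_\theta^{L,q}(X)$ is real-analytic on all of $\R^m$ for every $q$. Second, I would define the open strict realization region $V_q := \{\theta : G_{l,k_l,s}(\tilde h_\theta^{\pi(l),q}(X)) > 0 \text{ for every nonlinear } l \text{ and every relevant } s\}$. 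By continuity, whenever $\theta_0 \in V_q$ the true network $h_\theta(X)$ agrees with $\tilde h_\theta^{L,q}(X)$ on a neighborhood of $\theta_0$, and is therefore differentiable at $\theta_0$; consequently $\Theta_X \subseteq \R^m \setminus \bigcup_q V_q$. Third, using that $\bigcup_k \bar{\D}_l^k = \R^{d_{\pi(l)}}$, every $\theta$ admits at least one trajectory $q$ with $h_\theta^{\pi(l)}(X) \in \bar{\D}_l^{k_l}$ at each nonlinear $l$; failure to lie in the strict interior $V_q$ then forces at least one $G_{l,k_l,s}(\tilde h_\theta^{\pi(l),q}(X)) = 0$. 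Hence
\[
    \Theta_X \;\subseteq\; \bigcup_{q,\,l\text{ nonlinear},\,s} Z_{l,s,q}, \qquad Z_{l,s,q} := \{\theta \in \R^m : G_{l,k_l,s}(\tilde h_\theta^{\pi(l),q}(X)) = 0\}.
\]
Each $Z_{l,s,q}$ is the zero set of a real-analytic function on $\R^m$, which is null (by the standard theorem of Mityagin) provided the function is not identically zero, and a finite union of null sets is null.

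The main obstacle will be handling the degenerate possibility that some $\phi_{l,s,q}(\theta) := G_{l,k_l,s}(\tilde h_\theta^{\pi(l),q}(X))$ is identically zero on $\R^m$: then $Z_{l,s,q}$ is the whole parameter space and the simple zero-set argument collapses. My plan is to dispose of this case by showing the offending trajectory can be dropped from the cover $\{V_q\}$ without enlarging $\R^m \setminus \bigcup_q V_q$. If $\phi_{l,s,q} \equiv 0$ then $\tilde h_\theta^{\pi(l),q}(X)$ always lies on the common boundary of $\bar{\D}_l^{k_l}$ with some neighboring piece $\bar{\D}_l^{k_l'}$, and on that boundary $f_l = f_l^{k_l} = f_l^{k_l'}$ by \cref{Diff_set}; rerouting $q$ through $k_l'$ — and recursing if the rerouted trajectory degenerates again at a downstream layer — produces a non-degenerate replacement trajectory whose $V_{q'}$ still covers the contested parameter points. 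Carrying out this rerouting carefully along the directed acyclic computation graph is the only delicate step; everything else reduces to standard facts about zero sets of real-analytic functions and the topological structure of the cover in \cref{Diff_set}.
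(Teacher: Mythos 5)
Your skeleton matches the paper's: decompose by trajectories through the analytic pieces, build a globally real-analytic extension $f^q$ for each trajectory $q$, locate $\Theta_X$ inside a finite union of zero sets of analytic functions, and invoke Mityagin's theorem (\cref{manifold_dim}). You also correctly identify the one genuine difficulty: the analytic function $\phi_{l,s,q}(\theta) = G_{l,k_l,s}\bigl(f^q(\theta)\bigr)$ may vanish identically, in which case its zero set is all of $\R^m$ and the measure-zero argument collapses. Where the proposal goes wrong is in the proposed repair.

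The rerouting step does not work as stated, for two independent reasons. First, there need not exist any nondegenerate alternative. If the fixed input $h^{\pi(l)}_\theta(X)$ is pinned to a ``corner'' of the partition — a point of $\R^{d_{\pi(l)}} \setminus \bigcup_k \D_l^k$ that lies on the boundaries of several pieces simultaneously (e.g.\ the origin for a $2$-d ReLU, or any boundary point when $\pi(l)$ contains no parameters upstream) — then no choice $k_l'$ puts that input in the \emph{open} set $\D_l^{k_l'}$, so every reroute at layer $l$ is again degenerate for some $s'$. Second, even when a reroute is available, the pieces $f_l^{k_l}$ and $f_l^{k_l'}$ agree on the shared boundary by \cref{Diff_set}; since the degenerate situation forces $f^q(\theta)$ to lie on that boundary for every $\theta$, the analytic extensions $\tilde h^{j,q}_\theta$ and $\tilde h^{j,q'}_\theta$ coincide for all downstream $j$. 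Rerouting therefore changes nothing except which constraint functions you test, and nothing forces the new ones to be non-vanishing. Finally, the recursion is not controlled: degeneracy can recur at the \emph{same} layer with a different $s$, not only downstream, and you have no termination argument.

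The paper resolves the degeneracy by a cleaner move that you should adopt: it does not try to remove degenerate constraints from the cover, but instead shows $\Theta_X \subseteq \partial S^P$ (the \emph{topological boundary} of the union of constraint zero sets, \eqref{eq:S_Q_boundary}), and then proves $\mu(\partial S^P) = 0$ in \cref{locally_null_sets}. The point is that if $\phi \equiv 0$ then $S^p = \R^m$ and $\partial S^p = \emptyset$, so the degenerate constraint contributes nothing. This captures precisely the intuition that a constraint which is always active cannot be the source of non-differentiability: if a whole neighborhood of $\theta_0$ satisfies $G = 0$, then on that neighborhood the network follows a single analytic piece and is differentiable (this is case (b) in the proof of \cref{smooth_interior}). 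Your cover $\{V_q\}$ misses exactly these points, because it requires all inequalities to be strict; the paper's boundary formulation handles them automatically. To salvage your writeup, replace the rerouting paragraph with the inclusion $\Theta_X \subseteq \partial S^P$ and then run your analytic-zero-set argument against $\partial S^P$ rather than $S^P$, discarding any $p$ with $\partial S^p = \emptyset$ — which is exactly the degenerate case — before applying Mityagin.
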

\begin{proof}
We first show that $\Theta_X \subseteq \partial S^P$,
which was defined by \eqref{eq:S_Q_boundary}.

Let $\theta_0$ be in $\Theta_X$.
By \cref{Diff_set}, it follows that $ \theta_0\in S^P$.
Assume for the sake of contradiction that $\theta_0 \notin\partial S^P$.
Then applying \cref{smooth_interior} to the output layer, $i = L$,
implies that there is a real analytic function $f(\theta)$
which agrees with $h_\theta$ on all $\theta \in B(\theta_0, \eta)$
for some $\eta > 0$.
Therefore the network is differentiable at $\theta_0$,
contradicting the fact that $\theta_0 \in \Theta_X$.
Thus $\Theta_X \subseteq \partial S^P$.

\Cref{locally_null_sets} then establishes that $\mu(\partial S^P) = 0$,
and hence $\mu(\Theta_X) = 0$.
\end{proof}

\begin{lemma}
    Let $i$ be a node in the graph.
    Under \cref{Diff_set},
    if $\theta \in \R^m\setminus \partial S^{\neg i}$,
    then there exist $\eta>0$ and a trajectory $q\in T(i) $
    such that $h_{\theta'}^{i} = f^{q}(\theta')$
    for all $\theta'$ in the ball $B(\theta, \eta)$.
    Here $f^{q}$ is the real analytic function on $\R^m$
    defined with the same structure as $h_\theta$,
    but replacing each nonlinear $f_j$
    with the analytic function $f_j^{k_j}$ for $(j, k_j) \in q$.
    \label{smooth_interior}
\end{lemma}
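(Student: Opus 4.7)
The plan is to prove the lemma by strong induction on the nodes of the computation graph in topological order, using the hypothesis $\theta \notin \partial S^{\neg i}$ to rule out pathological behavior at non-differentiable layers. For the base case $i = 0$, we have $h_{\theta'}^{0}(X) = X$ independent of $\theta'$, so the trivial trajectory and any $\eta > 0$ work. For the inductive step, note that $\neg j \subseteq \neg i$ for every ancestor $j$ of $i$, so $\theta \notin \partial S^{\neg i}$ forces $\theta \notin \partial S^{\neg j}$ as well. Applying the inductive hypothesis at each parent $j \in \pi(i)$ produces some $\eta_j > 0$ and trajectory $q_j \in T(j)$ such that $h_{\theta'}^{j} = f^{q_j}(\theta')$ on $B(\theta, \eta_j)$. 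When two parent trajectories reach a shared ancestor $m$, the two analytic functions they assign to $h_{\theta'}^m$ must coincide on the overlap of the balls, since a real analytic function is determined by its values on any nonempty open set; this lets me glue the $q_j$ into a consistent trajectory $q_{\pi(i)}$ valid on $B(\theta, \eta_0)$ with $\eta_0 := \min_j \eta_j$. If $i \notin C$ (linear case), then $h_{\theta'}^{i} = g_i(W^i)\,\widetilde{h_{\theta'}^{\pi(i)}}$ is the product of a map linear in $W^i$ (part of $\theta'$) with an analytic function of $\theta'$, hence analytic, and the trajectory extends trivially.

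The crux is the case $i \in C$. Set $Y_0 := h_\theta^{\pi(i)}$ and, using the covering property in \cref{Diff_set}, pick $k^* \in [K_i]$ with $Y_0 \in \bar{\D}_i^{k^*}$. I aim to find $\eta \in (0, \eta_0]$ such that $h_{\theta'}^{\pi(i)} \in \bar{\D}_i^{k^*}$ for every $\theta' \in B(\theta, \eta)$, so that $h_{\theta'}^{i} = f_i(h_{\theta'}^{\pi(i)}) = f_i^{k^*}(h_{\theta'}^{\pi(i)})$ and $f^q := f_i^{k^*} \circ f^{q_{\pi(i)}}$ provides the required analytic expression, the trajectory being $q_{\pi(i)}$ extended by $(i, k^*)$. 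For each $s \in [S_{i,k^*}]$, the composition $\psi_s(\theta') := G_{i,k^*,s}(h_{\theta'}^{\pi(i)})$ is real analytic on $B(\theta, \eta_0)$, and $\psi_s(\theta) \ge 0$. If $\psi_s(\theta) > 0$, continuity delivers a ball on which $\psi_s > 0$. If $\psi_s(\theta) = 0$, then $\theta \in S^{(i,k^*,s)}$; since $(i,k^*,s) \in \partial i \subseteq \neg i$, the hypothesis forces $\theta \notin \partial S^{(i,k^*,s)}$, so $\theta$ lies in the interior of $S^{(i,k^*,s)}$, yielding an open ball on which $\psi_s \equiv 0$. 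Taking the minimum of these $S_{i,k^*}$ balls yields $\eta$ such that $\psi_s(\theta') \ge 0$ for all $s$, hence $h_{\theta'}^{\pi(i)} \in \bar{\D}_i^{k^*}$.

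The main obstacle is precisely the nonlinear step: if $Y_0$ sits on $\partial \bar{\D}_i^{k^*}$, then a priori $h_{\theta'}^{\pi(i)}$ could be nudged into a neighboring piece $\D_i^{k'}$ as $\theta'$ varies, where $f_i^{k'}$ disagrees with $f_i^{k^*}$ off their shared boundary, breaking the single-piece analyticity. The assumption $\theta \notin \partial S^{\neg i}$ is exactly what rules this out: for every defining inequality that is tight at $\theta$, the combination of analyticity of $\psi_s$ and the interior condition on $S^{(i,k^*,s)}$ forces the inequality to remain tight (not merely non-negative) throughout a full neighborhood. A minor bookkeeping point is the identification $\bar{\D}_i^{k^*} = \{Y : G_{i,k^*,s}(Y) \ge 0 \text{ for all } s\}$ needed to conclude $h_{\theta'}^{\pi(i)} \in \bar{\D}_i^{k^*}$ from the pointwise $\psi_s \ge 0$; this is transparent for the ReLU and max-pooling examples sketched in \cref{fig:domains} and is implicit in the setup of \cref{Diff_set}.
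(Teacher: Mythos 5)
Your proposal follows essentially the same induction on the computation graph as the paper's proof. The two presentations differ only in bookkeeping: you make explicit the gluing of parent trajectories at shared ancestors (which the paper folds silently into its inductive hypothesis by assuming the result for $\neg\pi(i)$ as a block), and in the nonlinear step you work directly with the constraints $G_{i,k^*,s}$ of a single piece $k^*$ chosen so that $h_\theta^{\pi(i)}\in\bar\D_i^{k^*}$, whereas the paper splits into the cases $\theta\notin S^{\partial i}$ and $\theta\in S^{\partial i}$ and tracks the active/inactive sign pattern over all of $\partial i$ before invoking \cref{Diff_set}; your version is a little tighter but mathematically equivalent. The one place you flag as a ``bookkeeping point'' --- passing from $G_{i,k^*,s}(h_{\theta'}^{\pi(i)})\ge 0$ for all $s$ to $h_{\theta'}^{\pi(i)}\in\bar\D_i^{k^*}$, which requires the closure of $\D_i^{k^*}$ to coincide with the set where all its defining inequalities hold weakly --- is in fact the same step the paper elides when it says ``again, by \cref{Diff_set} there is a $k\in[K_i]$ satisfying'' the recursion; neither presentation proves this identification, and both treat it as implicit in the intent of \cref{Diff_set}. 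Since it holds for ReLU, max-pooling, and every globally analytic layer, this is a fair reading and not a gap specific to your argument.
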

\begin{proof}
We proceed by recursion on the nodes of the network.
If $i=0$, we trivially have $  h_{\theta}^{0} = X$, which is real analytic on $\R^m $.
Assume the result for  $\neg \pi(i)$ and let $\theta \in \R^{m}\setminus \partial S^{\neg i}  $.
In particular $ \theta \in \R^m \setminus \partial S^{\neg \pi(p) } $.
By the recursion assumption, we get:
\begin{align}   \label{eq:recursion_assumption_smooth_interior}
    (\exists \eta >0) ( \exists q \in T(i))  (\forall \theta' \in B(\theta, \eta))
    \qquad h_{\theta'}^{\pi(i) } = f^{q}(\theta')
\end{align}
with $f^{q}$ real analytic in $\R^m$.

If $\theta \notin S^{\partial i}$, then there is some sufficiently small $\eta' > 0$ such that $B(\theta, \eta')$ does not intersect $S^{\partial i}$.
Therefore, by \cref{Diff_set}, there is some $k \in [K_i]$ such that
$h_{\theta'}^{i} = f_{i}^{k}( h_{\theta'}^{\pi(i)} )$
for all $ \theta' \in B(\theta, \eta')$,
where $f_{i}^{k}$ is one of the real analytic functions defining $f_i$.
By \eqref{eq:recursion_assumption_smooth_interior} we then have
\[ \label{eq:recursion_implication_smooth_interior}
    h_{\theta'}^{i}  = f_{i}^{k}(f^{q}(\theta'))
    \quad\forall \theta' \in B(\theta, \min(\eta, \eta'))
.\]

Otherwise, $\theta \in S^{\partial i}$.
Then, noting that by assumption $\theta \notin \partial S^{\partial i}$,
it follows that for small enough $\eta' > 0$, we have $B(\theta, \eta') \subseteq S^{\partial i}$.
Denote by $A$ the set of index triples $p \in \partial i$ such that $ \theta \in S^p$;
$A$ is nonempty since $\theta \in S^{\partial i}$.
Therefore $\theta \in \bigcap_{p\in A} S^p$,
and $\theta \notin \bigcup_{p\in A^c} S^p$.
We will show that for $\eta'$ small enough,
$B(\theta, \eta') \subseteq \bigcap_{p\in A} S^p$.
Assume for the sake of contradiction
that there exists a sequence of (parameter, index-triple) pairs $(\theta_n, p_n)$
such that $p_n \in A^c$, $\theta_n \in S^{p_n}$, and $\theta_n \to \theta$.
$p_n$ is drawn from a finite set and thus has a constant subsequence,
so we can assume without loss of generality that $p_n = p_0$ for some $p_0 \in A^c$.
Since $S^{p_0}$ is a closed set by continuity of the network and $G_{p_0}$,
it follows that $ \theta \in S^{p_0} $ by taking the limit.
This contradicts the fact that  $\theta \notin  \bigcup_{p\in A^c} S^{p}$.
Hence, for $\eta'$ small enough, $B(\theta, \eta') \subseteq \bigcap_{p\in A} S^p$.
Again, by \cref{Diff_set} there is a $k \in [K_i]$ satisfying \eqref{eq:recursion_implication_smooth_interior}.

By setting $f^{q_0} = f_{i}^k( f^{q})$ with  $ q_0 = ( (i,k)\oplus q )$, where $\oplus$ denotes concatenation,
it finally follows that $ h_{\theta'}^{i} = f^{q_0}(\theta') $ for all $ \theta' $ in $B(\theta, \min(\eta, \eta'))$,
and $f^{q_0}$ is the real analytic function on $\R^{m}$ as described.
\end{proof}

\begin{lemma} \label{locally_null_sets}
    Under \cref{Diff_set}, \[ \mu( \partial S^P ) = 0 .\]
\end{lemma}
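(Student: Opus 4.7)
The plan is to proceed by induction on the nodes $i$ in topological order and establish that $\mu(\partial S^{\neg i}) = 0$ for every $i$. Since there are only finitely many nodes and
\[
    \partial S^P = \bigcup_{i} \partial S^{\partial i} \subseteq \bigcup_i \partial S^{\neg i},
\]
a finite union of $\mu$-null sets is $\mu$-null, so this would yield $\mu(\partial S^P) = 0$. The base case $i = 0$ is immediate: the input node carries no $G_{0,k,s}$ functions (or only trivial constant ones), so $\partial S^{\neg 0}$ is either empty or all of $\R^m$ with empty topological boundary, and therefore $\mu$-null.

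For the inductive step, I would assume $\mu(\partial S^{\neg j}) = 0$ for every ancestor $j$ of $i$, so in particular $\mu(\partial S^{\neg \pi(i)}) = 0$, and then show $\mu(\partial S^p) = 0$ for each $p = (i,k,s) \in \partial i$. Combined with the decomposition $\neg i = \partial i \cup \neg \pi(i)$ and the finiteness of $\partial i$, this gives $\mu(\partial S^{\neg i}) = 0$. Fix such a $p$ and consider any $\theta_0 \in \R^m \setminus \partial S^{\neg \pi(i)}$. By \cref{smooth_interior} applied to each parent of $i$ (concatenated to yield $h_\theta^{\pi(i)}$), there exist $\eta > 0$ and a \emph{globally} real analytic map $f^q : \R^m \to \R^{d_{\pi(i)}}$ such that $h_\theta^{\pi(i)} = f^q(\theta)$ for all $\theta \in B(\theta_0, \eta)$. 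On this ball, $S^p$ thus coincides with the zero set of the real analytic function
\[
    \varphi := G_{i,k,s} \circ f^q : \R^m \to \R.
\]

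Now I would invoke the analyticity dichotomy on the connected space $\R^m$: either $\varphi \equiv 0$, in which case $B(\theta_0, \eta) \subseteq S^p$ and $\theta_0$ lies in the interior of $S^p$, hence not in $\partial S^p$; or $\varphi \not\equiv 0$, in which case the classical fact that the zero set of a nonzero real analytic function on $\R^m$ has Lebesgue measure zero gives
\[
    \mu(\partial S^p \cap B(\theta_0, \eta)) \le \mu(\{\varphi = 0\} \cap B(\theta_0, \eta)) = 0.
\]
By second countability of $\R^m$, the open set $\R^m \setminus \partial S^{\neg \pi(i)}$ is covered by countably many such balls, so $\mu(\partial S^p \setminus \partial S^{\neg \pi(i)}) = 0$, and the inductive hypothesis $\mu(\partial S^{\neg \pi(i)}) = 0$ closes the argument.

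The main obstacle, and precisely the reason \cref{smooth_interior} was set up as it was, is obtaining a \emph{globally} real analytic representative $f^q$ rather than merely some locally analytic object: this is what makes the analyticity dichotomy available on the whole connected space $\R^m$, so that the ``identically zero on a ball'' case can be disposed of uniformly by passing to interior points of $S^p$. The only other nontrivial ingredient is the standard measure-theoretic fact about zero sets of nonzero real analytic functions; everything else is bookkeeping over finitely many nodes and index triples.
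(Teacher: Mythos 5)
Your proof is correct and takes essentially the same approach as the paper: induction over the computational graph in topological order, invoking \cref{smooth_interior} to obtain a globally real analytic map $f^q$ that coincides with $h^{\pi(i)}_\theta$ on a ball, and then the analyticity dichotomy (\cref{manifold_dim}) to show the relevant zero set is either absorbed into the interior of $S^p$ or is Lebesgue-null. The only cosmetic difference is in the covering bookkeeping: you stitch local balls together via a Lindel\"of argument, whereas the paper covers $\partial S^{\partial i} \setminus \partial S^{\neg \pi(i)}$ directly with the finitely many globally-defined sets $\mathcal{M}_s$ (one per index triple and trajectory), making the countable-cover step unnecessary.
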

\begin{proof}
    We will proceed by recursion.
    For $i=0 $ we trivially have $ \partial S^{\neg 0}  = \emptyset$, thus $\mu(\partial S^{\neg 0}) = 0$.
    Thus assume that \[ \mu(\partial S^{\neg \pi(i)}) = 0. \]

    For $s = (p,q)$,
    the pair of an index triple $p\in \partial i$ and a trajectory $q\in T(i)$,
    define the set
    \[ \mathcal{M}_s = \{ \theta \in \R^m \mid G_{p}(f^q(\theta)) = 0 \} ,\]
    where $f^q$ is the real analytic function defined in \cref{smooth_interior}
    which locally agrees with $h^{\pi(i)}_\theta$.

    We will now prove that for any $\theta$ in
    $\partial S^{\partial i}\setminus \partial S^{\neg \pi(i)}$,
    there exists $s\in \partial i \times T(i)$ such that
    $\theta \in\mathcal{M}_{s}$ and $\mu(\mathcal{M}_s) = 0$.
    We proceed by contradiction.

    Let $\theta \in \partial S^{\partial i }\setminus \partial S^{\neg \pi(i)}$;
    then for small enough $\eta>0$,
    $B(\theta, \eta) \subseteq \R^m \setminus \partial S^{\neg \pi(i)}$.
    By \cref{smooth_interior}, there is a trajectory $q \in T(i)$ such that
    \[ \label{eq:locally_equal}
        h_{\theta'}^{\pi(i)} = f^{q}(\theta') \qquad \forall  \theta'\in B(\theta, \eta)
    .\]
    Moreover, since $\theta \in \partial S^{\partial i}$,
    there exists $p \in \partial i$ such that $G_p(h^{\pi(i)}_{\theta}) = 0$.
    This means that for $s = (p, q)$, we have $ \theta \in \mathcal{M}_{s} $.
    If $\mu(\mathcal{M}_{s}) > 0$, then by \cref{manifold_dim} $\mathcal{M}_{s} = \R^m$,
    hence we would have $B(\theta, \eta) \subseteq \mathcal{M}_{s}$.
    By \eqref{eq:locally_equal} it would then follow that $B(\theta, \eta) \subseteq S^{\partial i}$.
    This contradicts the fact that $\theta$ is in $\partial S^{\partial i}$,
    and hence $\mu(\mathcal M_s) = 0$.

    We have shown that $\partial S^{\partial i }\setminus \partial S^{\neg \pi(i)} \subseteq \bigcup_{s\in A}  \mathcal{M}_{s}$,
    where the sets $ \mathcal{M}_{s}$ have zero Lebesgue measure
    and $A \subseteq P \times \bigcup_{j=0}^L T(j)$ is finite.
    This implies:
    \[\mu( \partial S^{\partial i } \setminus \partial S^{\neg \pi(i)}) \leq \sum_{s\in A} \mu( \mathcal{M}_s) = 0 .\]
    Using the recursion assumption $\mu(\partial S^{\neg \pi(i)}) = 0 $, one concludes that $\mu(\partial S^{\neg i}) = 0 $. Hence for the last node $L$, recalling that $ \neg L = P $ one gets      $\mu(\partial S^{P}) = 0 $.
\end{proof}

\begin{lemma}
    Let $\theta \mapsto F(\theta): \R^m \rightarrow \R $ be a real analytic function on $\R^m$ and define the set:
    \[
    \mathcal{M} := \{ \theta \in \R^m \mid  F(\theta) = 0 \}
    .\]
    Then either $ \mu(\mathcal{M})  = 0 $ or $F$ is identically zero.
\label{manifold_dim}
\end{lemma}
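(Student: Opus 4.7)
The plan is to proceed by induction on the dimension $m$. For the base case $m = 1$, I would invoke the classical identity theorem for one-variable real analytic functions: if $F : \R \to \R$ is analytic and not identically zero, its zero set has no accumulation points in $\R$, is therefore at most countable, and hence has Lebesgue measure zero.

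For the inductive step, assuming the result in dimension $m-1$, I would decompose $\theta = (\theta', \theta_m)$ with $\theta' \in \R^{m-1}$, and study the set
\[ A = \{ \theta' \in \R^{m-1} \mid F(\theta', \cdot) \equiv 0 \text{ on } \R \}. \]
The key observation is that for $\theta' \in A$ all Taylor coefficients of $F(\theta', \cdot)$ at $\theta_m = 0$ vanish, so $c_k(\theta') := \frac{\partial^k F}{\partial \theta_m^k}(\theta', 0) = 0$ for every $k \ge 0$. Each $c_k$ is itself real analytic on $\R^{m-1}$, so I would split into two cases. If every $c_k$ were identically zero on $\R^{m-1}$, then $F$ would vanish on an open neighborhood of the hyperplane $\{\theta_m = 0\}$; the identity theorem for real analytic functions on the connected domain $\R^m$ would then force $F \equiv 0$, contradicting the hypothesis. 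Otherwise, some $c_{k_0}$ is a nonzero analytic function on $\R^{m-1}$, and the inductive hypothesis gives that its zero set has $(m-1)$-dimensional Lebesgue measure zero; since $A$ is contained in this zero set, $\mu_{m-1}(A) = 0$.

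I would then finish via Fubini's theorem: writing
\[ \mu(\mathcal{M}) = \int_{\R^{m-1}} \mu_1\bigl( \{ \theta_m \in \R \mid F(\theta', \theta_m) = 0 \} \bigr) \, d\theta', \]
for $\theta' \notin A$ the slice function $F(\theta', \cdot)$ is analytic but not identically zero, so the base case forces the integrand to be $0$; and since $A$ itself has measure zero, the integrand is zero for almost every $\theta'$, giving $\mu(\mathcal{M}) = 0$. The one subtle step to get right is the invocation of the multivariate identity theorem on the connected open set $\R^m$, which guarantees that analytic vanishing on a nonempty open subset propagates to the entire space; once this is in hand, everything else reduces to Fubini and the one-variable identity theorem applied slice by slice.
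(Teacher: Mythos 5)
Your proof is correct. The paper does not actually prove this lemma: it simply cites Proposition~0 of Mityagin (2015), so your argument is a self-contained alternative to a citation. Your inductive slicing strategy is in fact close in spirit to the one Mityagin himself uses, and it is a clean way to make the result self-contained: reduce to the one-variable identity theorem via Fubini, and handle the ``bad'' slices (those where $F(\theta',\cdot)\equiv 0$) by observing that they form the common zero set of the analytic coefficient functions $c_k(\theta')=\partial_{\theta_m}^k F(\theta',0)$, to which the inductive hypothesis applies once some $c_{k_0}\not\equiv 0$. Two small remarks. First, in the ``all $c_k\equiv 0$'' branch you can conclude $F\equiv 0$ directly: for each fixed $\theta'$, $F(\theta',\cdot)$ is a one-variable analytic function whose Taylor series at $0$ vanishes, hence $F(\theta',\cdot)\equiv 0$ on $\R$; there is no need to pass through a neighborhood of the hyperplane and then invoke the multivariate identity theorem, although that route is also valid. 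Second, for completeness one should note that $\mathcal M=F^{-1}(\{0\})$ is closed (since $F$ is continuous), so Tonelli's theorem applies to $\mathbf 1_{\mathcal M}$ and the slicewise integral representation of $\mu(\mathcal M)$ is legitimate. With those minor points the argument is complete and gives a nice elementary proof of the lemma the paper only references.
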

\begin{proof}
    This result is shown e.g.\ as Proposition 0 of \citet{Mityagin:2015}.
\end{proof}

\section{FID estimator bias} \label{appendix:fid-bias}

We now further study the bias behavior of the FID estimator \citep{fid} mentioned in \cref{sec:evaluation}.

We will refer to the Fr\'echet Inception Distance between two distributions,
letting $\mu_\bP$ denote the mean of a distribution $\bP$
and $\Sigma_\bP$ its covariance matrix,
as
\[
  \FID(\bP, \bQ)
  = \left\lVert \mu_\bP - \mu_\bQ \right\rVert^2
  + \tr(\Sigma_\bP) + \tr(\Sigma_\bQ) - 2 \tr\left( \left(\Sigma_\bP \Sigma_\bQ\right)^{\frac12} \right)
.\]
This is motivated because it coincides with the Fr\'echet (Wasserstein-2) distance between normal distributions.
Although the Inception coding layers to which the FID is applied are not normally distributed,
the FID remains a well-defined pseudometric between arbitrary distributions whose first two moments exist.

The usual estimator of the FID
based on samples $\{X_i\}_{i=1}^m \sim \bP^m$ and $\{Y_j\}_{j=1}^n \sim \bP^n$
is the plug-in estimator.
First, estimate the mean and covariance with the standard estimators:
\[
  \hat\mu_X = \frac1n \sum_{i=1}^n X_i
  ,\qquad
  \hat\Sigma_X = \frac{1}{n-1} \sum_{i=1}^n (X_i - \hat\mu_X) (X_i - \hat\mu_X)\tp
.\]
Letting $\hat\bP_X$ be a distribution matching these moments,
e.g. $\N\left( \hat\mu_X, \hat\Sigma_X \right)$,
the estimator is given by
\[
  \FID\left( \hat\bP_X, \hat\bQ_Y \right)
  = \left\lVert \hat\mu_X - \hat\mu_Y \right\rVert^2
  + \tr(\hat\Sigma_X) + \tr(\hat\Sigma_Y) - 2 \tr\left( \left(\hat\Sigma_X \hat\Sigma_Y\right)^{\frac12} \right)
.\]

In \cref{appendix:fid-bias:1d-normal,appendix:fid-bias:relu},
we exhibit two examples where
$\FID(\bP_1, \bQ) < \FID(\bP_2, \bQ)$,
but the estimator $\FID(\hat\bP_1, \bQ)$ is usually greater than $\FID(\hat\bP_2, \bQ)$
with an equal number of samples $m$ from $\bP_1$ and $\bP_2$,
for a reasonable number of samples.
(As $m \to \infty$, of course, the estimator is consistent, and so the order will eventually be correct.)
We assume here an infinite number of samples $n$ from $\bQ$ for simplicity;
this reversal of ordering is even easier to obtain when $n = m$.
It is also trivial to achieve when the number of samples from $\bP_1$ and $\bP_2$ differ,
as demonstrated by \cref{fig:fid-bias}.

Note that \cref{appendix:fid-bias:1d-normal,appendix:fid-bias:relu}
only apply to this plug-in estimator of the FID;
it remains conceivable that there would be some other estimator for the FID which is unbiased.
\Cref{appendix:fid-bias:no-unbiased} shows that this is not the case:
there is no unbiased estimator of the FID.

\subsection{Analytic example with one-dimensional normals} \label{appendix:fid-bias:1d-normal}
We will first show that the estimator can behave poorly even with very simple distributions.

When $\bP = \N(\mu_\bP, \Sigma_\bP)$
and $\bQ = \N(\mu_\bQ, \Sigma_\bQ)$,
it is well-known that
\[
    \hat\mu_X \sim \N\left(\mu_\bP, \frac1m \Sigma_\bP \right)
    \quad\text{and}\quad
    (m-1) \hat\Sigma_X \sim \W\left( \Sigma_\bP, m-1 \right)
,\]
where $\W$ is the Wishart distribution.
Then we have
\begin{align}
     \E_{XY}\left[ \lVert \hat\mu_X - \hat\mu_Y \rVert^2 \right]
  &= \lVert \mu_\bP - \mu_\bQ \rVert^2 + \frac1m \tr(\Sigma_\bP) + \frac1n \tr(\Sigma_\bQ)
\\   \E_{XY}\left[ \tr(\hat\Sigma_X) + \tr(\hat\Sigma_Y) \right]
  &= \tr(\Sigma_\bP) + \tr(\Sigma_\bQ)
.\end{align}
The remaining term $\E \tr\left(\left(\hat\Sigma_X \hat\Sigma_Y\right)^{\frac12}\right)$ is more difficult to evaluate,
because we must consider the correlations across dimensions of the two estimators.
But if the distributions in question are one-dimensional,
denoting $\Sigma_\bP = \sigma_\bP^2$ and $\hat\Sigma_X = \hat\sigma_X^2$,
the matrix square root becomes simple:
\[
     \E_{XY}\left[ \tr\left( \left(\hat\Sigma_X \hat\Sigma_Y \right)^\frac12 \right) \right]
   = \E_{X}\left[ \hat\sigma_X \right] \E_Y\left[ \hat\sigma_Y \right]
.\]
Since $\frac{\sqrt{m-1}}{\sigma_\bP} \hat\sigma_X \sim \chi_{m-1}$,
we get that
\[
    \E_X[ \hat\sigma_X ]
  = \frac{\sigma_\bP}{\sqrt{m-1}} \sqrt{2} \frac{\Gamma\left(\frac{m}{2}\right)}{\Gamma\left(\frac{m-1}{2}\right)}
  = d_m \sigma_\bP
  \quad\text{ where }
  d_m := \frac{\sqrt{2} \operatorname{\Gamma}\left(\frac{m}{2}\right)}{\sqrt{m-1} \operatorname{\Gamma}\left(\frac{m-1}{2}\right)}
.\]
Thus the expected estimator for one-dimensional normals becomes
\[
    \E_{X,Y}\left[ \FID\left( \hat\bP_X, \hat\bQ_Y \right) \right]
    = (\mu_\bP - \mu_\bQ)^2 + \frac{m+1}{m} \sigma_\bP^2 + \frac{n+1}{n} \sigma_\bQ^2
    - 2 d_m d_n \sigma_\bP \sigma_\bQ
\label{eq:exp-fid-1d}
.\]

Now, consider the particular case
\[
  \bP_1 := \N\left( 0, \left(1 - \frac1m\right)^2 \right)
  \qquad
  \bP_2 := \N(0, 1)
  \qquad
  \bQ := \bP_2 = \N(0, 1)
.\]
Clearly
\[
  \FID(\bP_1, \bQ) = \frac{1}{m^2}
  >
  0 = \FID(\bP_2, \bQ)
.\]
But letting $n \to \infty$ in \eqref{eq:exp-fid-1d} gives
\begin{align}
     \E_{X \sim \bP_1^m} &\FID\left(\hat\bP_X, \bQ\right)
   - \E_{Y \sim \bP_2^m} \FID\left(\hat\bP_Y, \bQ\right)
   = \frac1m\left( \frac{1}{m^2} - \frac{1}{m} + 2 (d_m - 1) \right)
   < 0
,\end{align}
where the inequality follows because
$\frac{1}{m^2} < \frac1m$ and $d_m < 1$ for all $m \ge 2$.
Thus we have the undesirable situation
\[
  \FID(\bP_1, \bQ) > \FID(\bP_2, \bQ)
  \quad\text{but}\quad
  \E_{X \sim \bP_1^m} \FID\left(\hat\bP_X, \bQ\right)
  < \E_{Y \sim \bP_2^m} \FID\left(\hat\bP_Y, \bQ\right)
.\]

\subsection{Empirical example with high-dimensional censored normals} \label{appendix:fid-bias:relu}

The example of \cref{appendix:fid-bias:1d-normal},
though indicative in that the estimator can behave poorly even with very simple distributions,
is somewhat removed from the situations in which we actually apply the FID.
Thus we now empirically consider a more realistic setup.

First, as noted previously,
the hidden codes of an Inception coding network are not well-modeled by a normal distribution.
They are, however, reasonably good fits to a censored normal distribution $\relu(X)$,
where $X \sim \N(\mu, \Sigma)$
and $\relu(X)_i = \max(0, X_i)$.
Using results of \citet{rosenbaum},
it is straightforward to derive the mean and variance of $\relu(X)$ \citep{censored-normal},
and hence to find the population value of $\FID(\relu(X), \relu(Y))$.

Let $d = 2048$, matching the Inception coding layer, and consider
\[
  \bP_1 = \relu(\N(\mathbf 0, I_d))
  \qquad
  \bP_2 = \relu(\N(\mathbf 1, .8 \Sigma + .2 I_d))
  \qquad
  \bQ = \relu(\N(\mathbf 1, I_d))
\]
where $\Sigma = \frac{4}{d} C C^T$,
with $C$ a $d \times d$ matrix whose entries are chosen iid standard normal.
For one particular random draw of $C$,
we found that
$\FID(\bP_1, \bQ) \approx 1123.0 > 1114.8 \approx \FID(\bP_2, \bQ)$.
Yet with $m = 50\,000$ samples,
$\FID(\hat\bP_1, \bQ) \approx 1133.7 \text{ (sd $0.2$)} < 1136.2 \text{ (sd $0.5$)} \approx \FID(\hat\bP_2, \bQ)$.
The variance in each estimate was small enough that of 100 evaluations,
the largest $\FID(\hat\bP_1, \bQ)$ estimate was less than the smallest $\FID(\hat\bP_2, \bQ)$ estimate.
At $m = 100\,000$ samples, however, the ordering of the estimates was correct in each of 100 trials,
with $\FID(\hat\bP_1, \bQ) \approx 1128.0$ (sd $0.1$)
and $\FID(\hat\bP_2, \bQ) \approx 1126.4$ (sd $0.4$).
This behavior was similar for other random draws of $C$.

This example thus gives a case where,
for the dimension and sample sizes at which we actually apply the FID
and for somewhat-realistic distributions,
comparing two models based on their FID estimates will not only not reliably give the right ordering
-- with relatively close true values and high dimensions, this is not too surprising
-- but, more distressingly, will \emph{reliably give the wrong answer},
with misleadingly small variance.
This emphasizes that unbiased estimators, like the natural KID estimator,
are important for model comparison.

\subsection{Non-existence of an unbiased estimator} \label{appendix:fid-bias:no-unbiased}
We can also show, using the reasoning of \citet{unbiased-convex} that we also employed in \cref{thm:ipm-always-biased},
that there is no estimator of the FID which is unbiased for all distributions.

Fix a target distribution $\QQ$,
and define the quantity $F(\PP) = \FID(\PP, \QQ)$.
Also fix two distributions $\PP_0 \ne \PP_1$.
Suppose there exists some estimator
$\hat F(\Xset)$
based on a sample of size $n$
for which
\[
  \E_{\Xset \sim \PP^n}\left[ \hat F(\Xset) \right] = F(\PP)
\]
for all $\PP \in \left\{ (1 - \alpha) \PP_0 + \alpha \PP_1 \mid \alpha \in [0, 1] \right\}$.

Now consider the function
\begin{align}
     R(\alpha)
  &= F(\alpha \PP_1 + (1 - \alpha) \PP_2)
\\&= \int_{x_1} \!\cdots\! \int_{x_n} \hat F(\Xset)
     \,\ud\left[ \alpha \PP_1 + (1 - \alpha) \PP_1 \right](x_1)
     \cdots
     \ud\left[ \alpha \PP_1 + (1 - \alpha) \PP_1 \right](x_n)
\\&= \int_{x_1} \!\cdots\! \int_{x_n} \hat F(\Xset)
     \left[ \alpha \,\ud\PP_1(x_1) + (1 - \alpha) \,\ud\PP_2(x_1) \right]
     \cdots
     \left[ \alpha \,\ud\PP_1(x_n) + (1 - \alpha) \,\ud\PP_2(x_n) \right]
\\&= \alpha^n \E_{\Xset \sim \PP_1^n}\left[ \hat F(\Xset) \right]
   + \dots
   + (1 - \alpha)^n \E_{\Xset \sim \PP_2^n}\left[ \hat F(\Xset) \right]
.\end{align}
This function $R(\alpha)$ is therefore a polynomial in $\alpha$ of degree at most $n$.

But let's consider the following one-dimensional case:
\[
  \PP_0 = \N(\mu_0, \sigma_0^2)
  \qquad
  \PP_1 = \N(\mu_1, \sigma_1^2)
  \qquad
  \QQ   = \N(\mu  , \sigma  ^2)
.\]
The mean and variance of $(1 - \alpha) \PP_0 + \alpha \PP_1$
can be written as
\begin{align}
  \mu_\alpha
  &= (\mu_1 - \mu_0) \alpha + \mu_0
  \\
  \sigma^2_\alpha
  &= - (\mu_0 - \mu_1)^2 \alpha^2
   + \left( (\mu_0 - \mu_1)^2 - \sigma_0^2 + \sigma_1^2 \right) \alpha
   + \sigma_0^2
.\end{align}
Thus
\[
  R(\alpha)
  = \left( \mu_\alpha - \mu \right)^2
  + \sigma^2_\alpha + \sigma^2
  - 2 \sigma \sigma_\alpha
.\]
Note that $\left( \mu_\alpha - \mu \right)^2 + \sigma^2_\alpha + \sigma^2$
is a quadratic function of $\alpha$.
However,
$\sigma_\alpha$ is polynomial in $\alpha$
only in the trivial case when $\PP_0 = \PP_1$.
Thus $R(\alpha)$ is not a polynomial when $\PP_0 \ne \PP_1$,
and so
no estimator of the FID to an arbitrary fixed normal distribution $\QQ$
can be unbiased
on any class of distributions which includes two-component Gaussian mixtures.

There is also no unbiased estimator is available in the two-sample setting, where $\QQ$ is also unknown,
by the same trivial extension to this argument as in \cref{thm:ipm-always-biased}.

Unfortunately, this type of analysis can tell us nothing about whether there exists an estimator which is unbiased on normal distributions.
Given that the distributions used for the FID in practice are clearly not normal, however,
a practical unbiased estimator of the FID is impossible.

\section{Comparison of evaluation metrics' resilience to noise} \label{appendix:noise}
We replicate here the experiments of \citeauthor{fid}'s Appendix 1,
which examines the behavior of the Inception and FID scores as images are increasingly ``disturbed,''
and additionally consider the KID.
As the ``disturbance level'' $\alpha$ is increased,
images are altered more from the reference distribution.
\Cref{fig:disturb:gaussian-noise,fig:disturb:gaussian-blur,fig:disturb:black-rectangles,fig:disturb:swirl,fig:disturb:salt-pepper,fig:disturb:imagenet} show the FID, KID, and negative (for comparability) Inception score for both CelebA (left) and CIFAR-10 (right);
each score is scaled to $[0, 1]$ to be plotted on one axis,
with minimal and maximal values shown in the legend.

Note that \citeauthor{fid}\ compared means and variances computed on
$50\,000$ random disturbed CelebA images
to those computed on the full $200\,000$ dataset;
we instead use the standard train-test split,
computing the disturbances on the $160\,000$-element training set
and comparing to the $20\,000$-element test set.
In this (very slightly) different setting,
we find the Inception score to be monotonic with increasing noise on more of the disturbance types than did \citet{fid}.
We also found similar behavior on the CIFAR-10 dataset,
again comparing the noised training set (size $50\,000$)
to the test set (size $10\,000$).
This perhaps means that the claimed non-monotonicity of the Inception score is quite sensitive to the exact experimental setting;
further investigation into this phenomenon would be intriguing for future work.

\begin{figure}
  \centering
    \includegraphics[width=\linewidth]{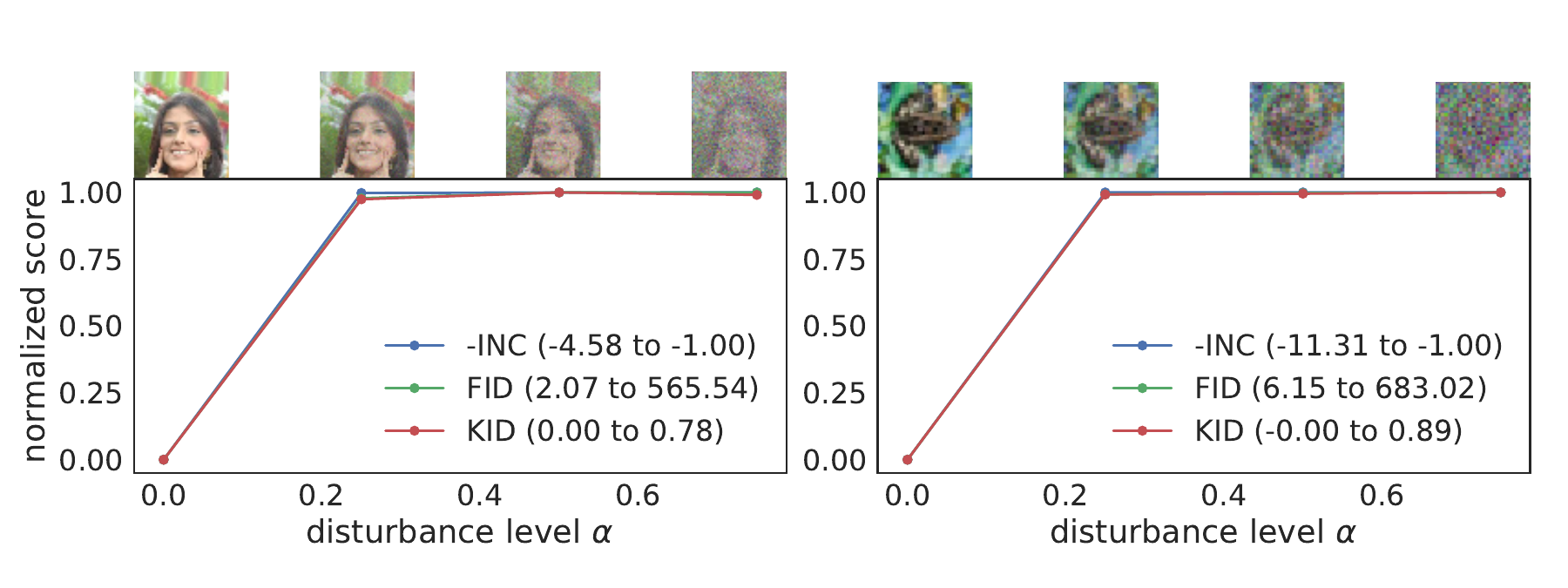}
    \caption{Gaussian noise; $\alpha$ is the mixture weight of the Gaussian noise.}
  \label{fig:disturb:gaussian-noise}
\end{figure}
\begin{figure}%
    \includegraphics[width=\linewidth]{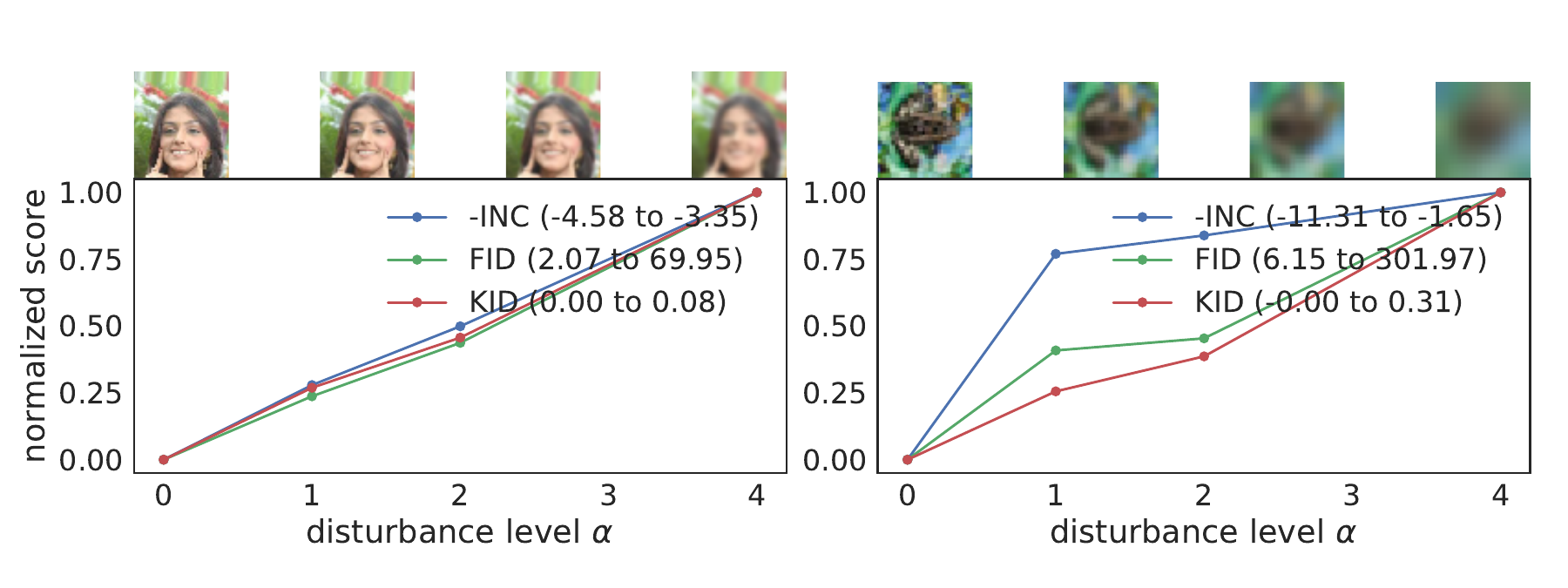}
    \caption{Gaussian blur; $\alpha$ is the standard deviation of the Gaussian filter.}
  \label{fig:disturb:gaussian-blur}
\end{figure}
\begin{figure}%
    \includegraphics[width=\linewidth]{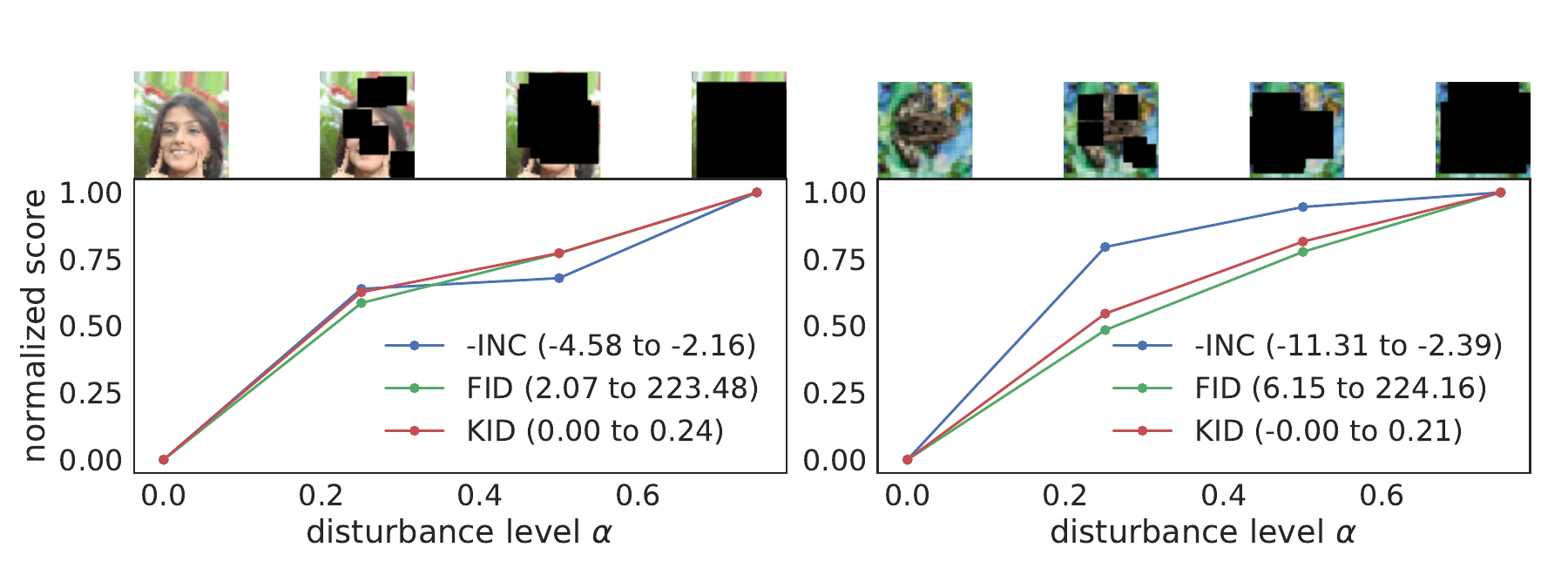}
    \caption{Black rectangles; $\alpha$ is the portion of the image size each rectangle contains.}
  \label{fig:disturb:black-rectangles}
\end{figure}
\begin{figure}%
    \includegraphics[width=\linewidth]{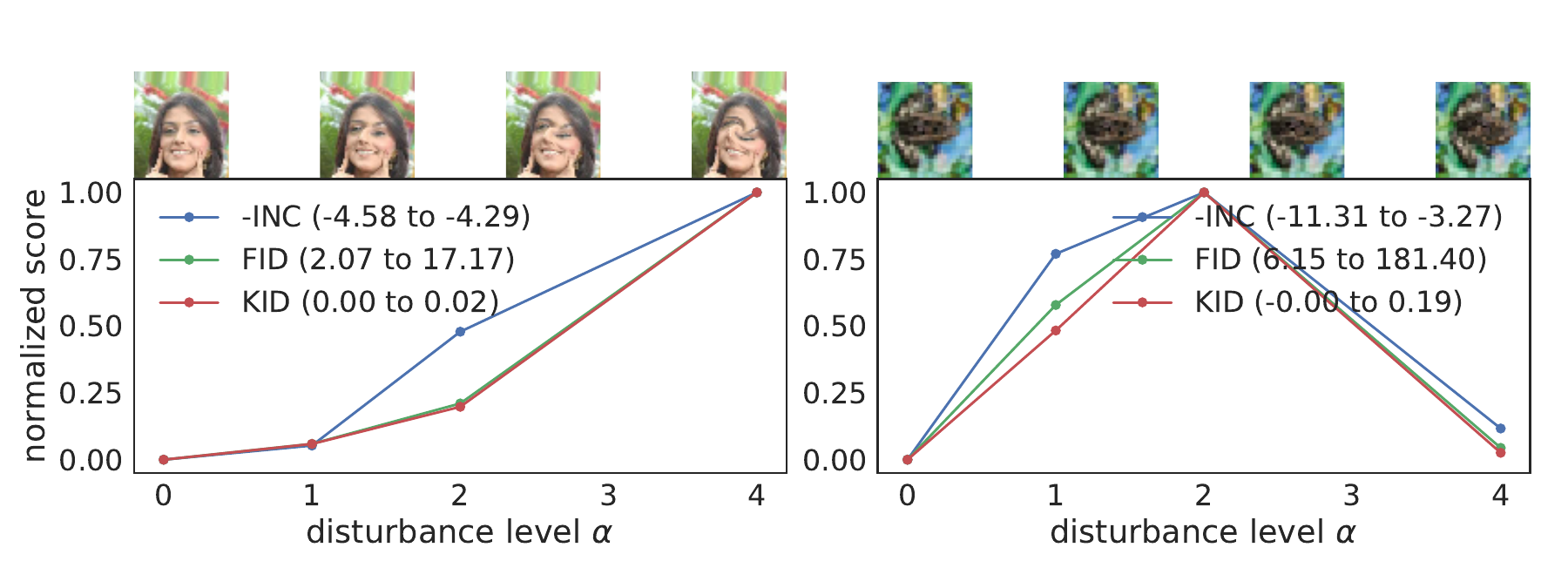}
    \caption{Swirl: $\alpha$ is the strength of the swirl effect.}
  \label{fig:disturb:swirl}
\end{figure}
\begin{figure}%
    \includegraphics[width=\linewidth]{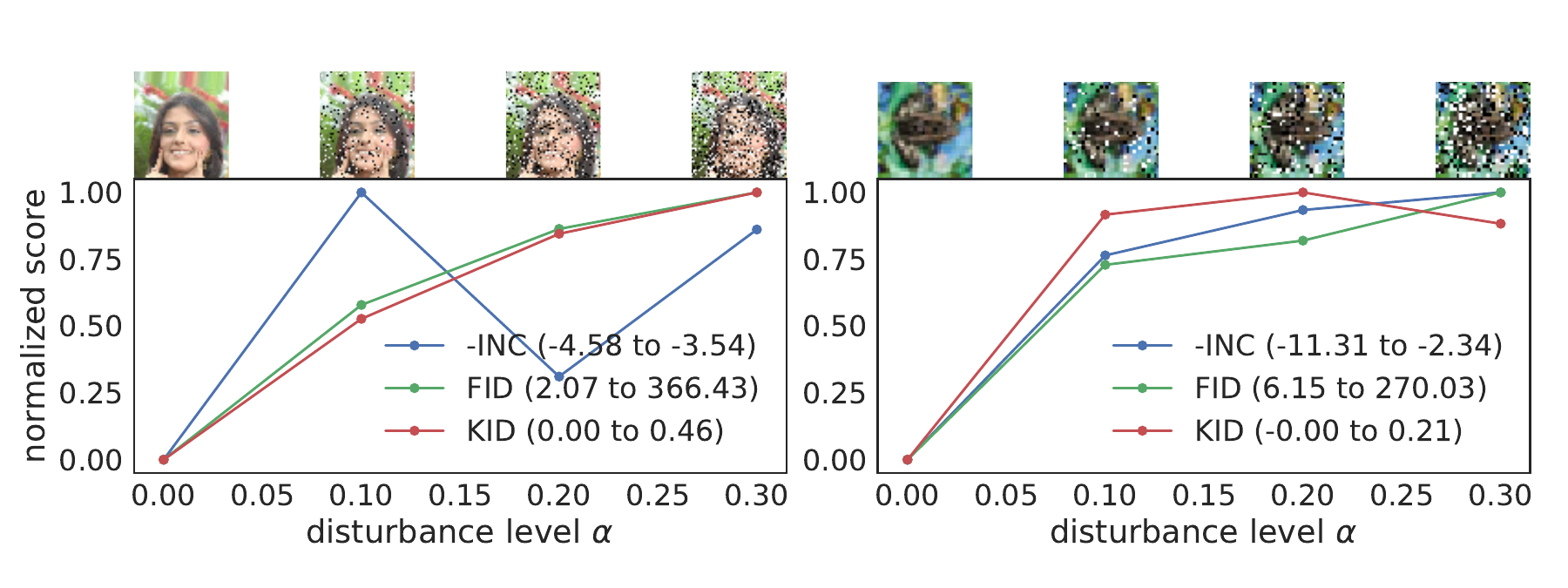}
    \caption{Salt and pepper noise: $\alpha$ is the portion of pixels which are noised.}
  \label{fig:disturb:salt-pepper}
\end{figure}
\begin{figure}%
    \includegraphics[width=\linewidth]{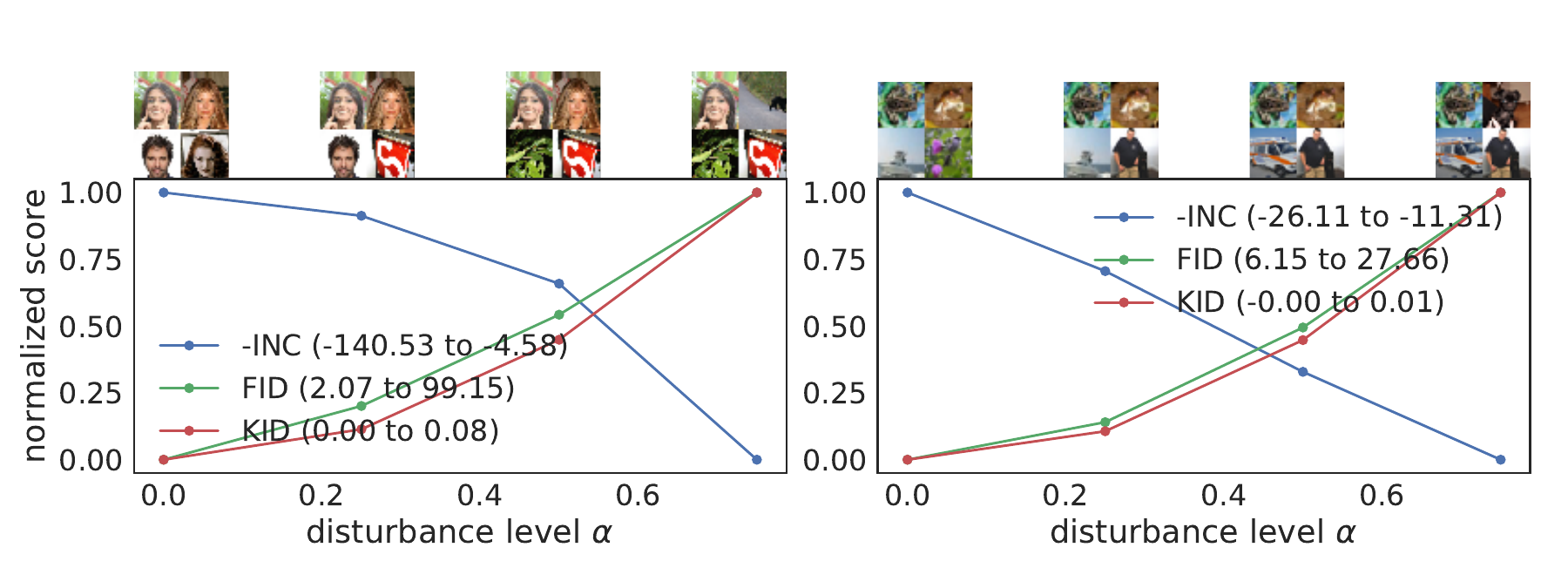}
    \caption{ImageNet contamination: $\alpha$ is the portion of images replaced by ImageNet samples.}
  \label{fig:disturb:imagenet}
\end{figure}

\section{Samples and detailed results for MNIST and CIFAR-10} \label{appendix:samples}
\paragraph{MNIST}
After training for $50\,000$ generator iterations,
all variants achieved reasonable results. Among MMD models, only the distance kernel saw an improvement with more
neurons in the top layer.
\Cref{tab:mnist:scores} shows the quantitative measures,
computed on the basis of a LeNet model.
All have achieved KIDs of essentially zero,
and FIDs around the same as that of the test set,
with Inception scores slightly lower. Model samples are shown in \cref{fig:mnist:samples}.

\begin{table}[ht]
    \centering
    \caption{Mean (standard deviation) of score evaluations for the MNIST models.}
    \label{tab:mnist:scores}
    \begin{tabular}{ccc|rrr}
      & \multicolumn{2}{c|}{critic size} & \multicolumn{3}{c}{} \\
 loss & filters & top layer & \multicolumn{1}{c}{Inception} & \multicolumn{1}{c}{FID} & \multicolumn{1}{c}{KID} \\
\hline
\RQ{}        & 16  &  16 &   9.11  (0.01) &    4.206  (0.05) &   0.005  (0.004)\\
\RBF{}       & 16  &  16 &   8.98  (0.02) &    8.264  (0.02) &   0.011  (0.006)\\
\DOT{}       & 16  &  16 &   8.86  (0.02) &    6.245  (0.06) &   0.006  (0.004)\\
\DIST{}      & 16  & 256 &   9.13  (.004) &    6.179  (0.05) &   0.005  (0.004)\\
Cram\'er GAN & 16  & 256 &   9.25  (0.02) &    3.385  (0.10) &   0.006  (0.005)\\
WGAN-GP      & 16  &   1 &   9.12  (0.02) &    6.915  (0.10) &   0.009  (0.004)\\
test set     & --  &  -- &   9.78  (0.02) &    4.305  (0.16) &   0.003  (0.003)\\
    \end{tabular}
\end{table}

Examining samples during training,
we observed that \RBF{} more frequently produces extremely ``blurry'' outputs,
which can persist for a substantial amount of time before eventually resolving.
This makes sense, given the very fast gradient decay of the \RBF{} kernel:
when generator samples are extremely far away from the reference samples,
slight improvements yield very little reward for the generator,
and so bad samples can stay bad for a long time.

\begin{figure}[ht!]
    \centering
    \begin{subfigure}[t]{0.30\textwidth}
        \centering
        \includegraphics[width=\linewidth]{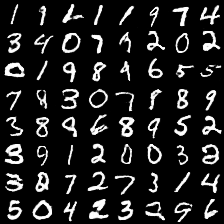}
        \caption{MMD \emph{rq}}
    \end{subfigure}
    \hfill
    \begin{subfigure}[t]{0.30\textwidth}
        \centering
        \includegraphics[width=\linewidth]{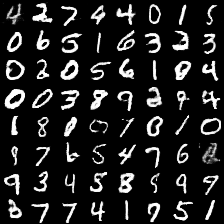}
        \caption{MMD \emph{rbf}}
    \end{subfigure}
    \hfill
    \begin{subfigure}[t]{0.30\textwidth}
        \centering
        \includegraphics[width=\linewidth]{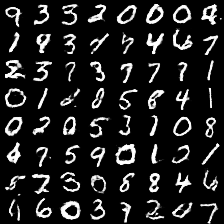}
        \caption{MMD \emph{dot}}
    \end{subfigure}

    \vspace{0cm}

    \begin{subfigure}[t]{0.30\textwidth}
        \centering
        \includegraphics[width=\linewidth]{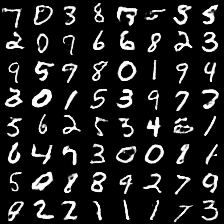}
        \caption{Cramer GAN}
    \end{subfigure}
    \hfill
    \begin{subfigure}[t]{0.30\textwidth}
        \centering
        \includegraphics[width=\linewidth]{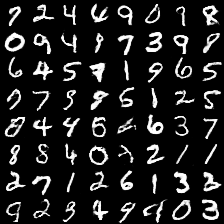}
        \caption{WGAN-GP}
    \end{subfigure}
    \hfill
    \begin{subfigure}[t]{0.30\textwidth}
        \centering
        \includegraphics[width=\linewidth]{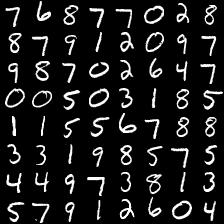}
        \caption{MNIST test set}
    \end{subfigure}
    \caption{Samples from the models listed in \cref{tab:mnist:scores}. Rational-quadratic and Gaussian kernels obtain retain sample quality despite reduced discriminator complexity. Each of these models generates good quality samples with the standard DCGAN discriminator (critic size 64).}
    \label{fig:mnist:samples}
\end{figure}

\paragraph{CIFAR-10}
Scores for various models trained on CIFAR-10 are shown in \cref{tab:cifar-scores}.
The scores for \RQ{} with a small critic network
approximately match those of WGAN-GP with a large critic network,
at substantially reduced computational cost.
With a small critic, WGAN-GP, Cram\'er GAN and the distance kernel all performed very poorly.
Samples from these models are presented in \cref{fig:cifar:samples}.

\begin{table}[ht]
    \centering
    \caption{Mean (standard deviation) of score evaluations for the CIFAR-10 models.}
    \label{tab:cifar-scores}
    \begin{tabular}{ccc|rrr}
      & \multicolumn{2}{c|}{critic size} & \multicolumn{3}{c}{} \\
 loss & filters & top layer & \multicolumn{1}{c}{Inception} & \multicolumn{1}{c}{FID} & \multicolumn{1}{c}{KID} \\
\hline
\RQ{}    & 16   & 16   &    5.86  (0.06) &   48.10  (0.16) &   0.032  (0.001)\\
\RQ{}    & 64   & 16   &    6.51  (0.03) &   39.90  (0.29) &   0.027  (0.001)\\
\DIST{}  & 16   & 256  &    4.53  (0.03) &   80.48  (0.19) &   0.061  (0.001)\\
\DIST{}  & 64   & 256  &    6.39  (0.04) &   40.25  (0.19) &   0.028  (0.001)\\
Cram\'er GAN&16 & 256  &    4.67  (0.02) &   74.93  (0.32) &   0.060  (0.001)\\
Cram\'er GAN&64 & 256  &    6.39  (0.01) &   40.27  (0.15) &   0.028  (0.001)\\
WGAN-GP  & 16   & 1    &    3.15  (0.01) &  147.09  (0.31) &   0.116  (0.002)\\
WGAN-GP  & 64   & 1    &    6.53  (0.02) &   37.52  (0.19) &   0.026  (0.001)\\
test set &  --  & --   &   11.21  (0.13) &    6.11  (0.05) &   0.000  (0.000)\\
    \end{tabular}
\end{table}

\begin{figure}[ht!]
    \centering
    \begin{subfigure}[t]{0.30\textwidth}
        \centering
        \includegraphics[width=\linewidth]{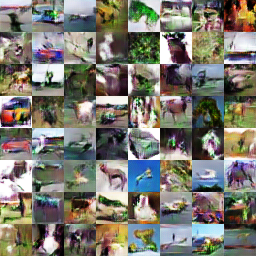}
        \caption{MMD \RQ{}*, critic size 16} \label{fig:cifar:rq-16}
    \end{subfigure}
    \hfill
    \begin{subfigure}[t]{0.30\textwidth}
        \centering
        \includegraphics[width=\linewidth]{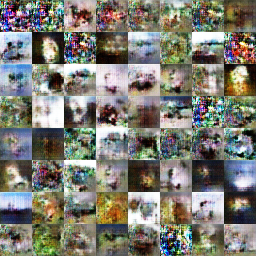}
        \caption{WGAN-GP, critic size 16} \label{fig:cifar:wgan-16}
    \end{subfigure}
    \hfill
    \begin{subfigure}[t]{0.30\textwidth}
        \centering
        \includegraphics[width=\linewidth]{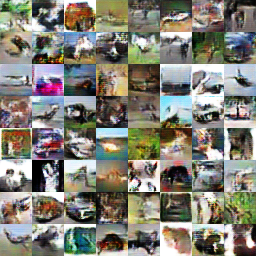}
        \caption{Cram\'er GAN, critic size 16} \label{fig:cifar:cramer-16}
    \end{subfigure}

    \vspace{0cm}

    \begin{subfigure}[t]{0.30\textwidth}
        \centering
        \includegraphics[width=\linewidth]{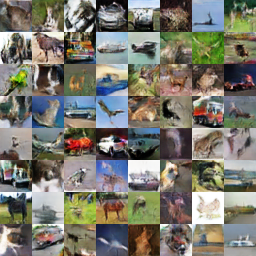}
        \caption{MMD \RQ{}*, critic size 64} \label{fig:cifar:rq-64}
    \end{subfigure}
    \hfill
    \begin{subfigure}[t]{0.30\textwidth}
        \centering
        \includegraphics[width=\linewidth]{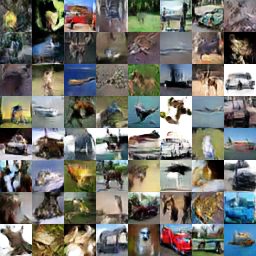}
        \caption{Cram\'er GAN, critic size 64} \label{fig:cifar:cramer-64}
    \end{subfigure}
    \hfill
    \begin{subfigure}[t]{0.30\textwidth}
        \centering
        \includegraphics[width=\linewidth]{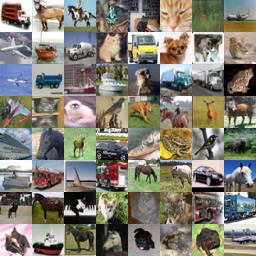}
        \caption{Test set} \label{fig:cifar:test}
    \end{subfigure}
    \caption{Comparison of samples from various models, as well as true samples
from the test set. WGAN-GP samples with critic size 16 are quite bad. Cram\'er GAN
samples with critic size 16 are more appealing to the eye, but seem to have pixel-level
issues. Large-critic Cram\'er and MMD \RQ{}* GAN are of similar quality.}
    \label{fig:cifar:samples}
\end{figure}

\end{document}